\newtheorem{theorem}{Theorem}
\newtheorem{remark}{Remark}
\newtheorem{lemma}{Lemma}
\newtheorem{definition}{Definition}
\newtheorem{assumption}{Assumption}
\newcommand{\DMone}[1]{d_{\mathcal{M}}(#1)}
\newcommand{\DMtwo}[1]{d^{2}_{\mathcal{M}}(#1)}
\newcommand{\skipnode}{\texttt{SkipNode}\@\xspace}
\newcommand{\skipnodeu}{\texttt{SkipNode-U}\@\xspace}
\newcommand{\skipnodeb}{\texttt{SkipNode-B}\@\xspace}
\newcommand{\dropedge}{\texttt{DropEdge}\@\xspace}
\newcommand{\dropnode}{\texttt{DropNode}\@\xspace}
\newcommand{\dropnodes}{\texttt{DropNode-S}\@\xspace}
\newcommand{\dropnodef}{\texttt{DropNode-F}\@\xspace}
\newcommand{\dropmessage}{\texttt{DropMessage}\@\xspace}
\newcommand{\skipconnection}{\texttt{SkipConnection}\@\xspace}
\newcommand{\pairnorm}{\texttt{PairNorm}\@\xspace}
\newcommand{\para}[1]{\vspace{2mm} \noindent \textbf{#1}}
\newcommand{\lwgrevise}[1]{\textcolor{black}{#1}}
\begin{document}

\title{SkipNode: On Alleviating Performance Degradation for Deep Graph Convolutional Networks}

\author{Weigang~Lu,
        Yibing~Zhan,
        Binbin~Lin$^{*}$,
        Ziyu~Guan$^{*}$\thanks{* Corresponding author},
        Liu~Liu,
        Baosheng~Yu,
        Wei~Zhao,
        Yaming~Yang,
        Dacheng~Tao,~\IEEEmembership{Fellow,~IEEE}
        
\IEEEcompsocitemizethanks{
\IEEEcompsocthanksitem W. Lu, Z. Guan, W. Zhao, and Y. Yang are with the School of Computer Science and Technology, Xidian University, Xi'an, China 710071.
E-mail: \{wglu@stu., zyguan@, ywzhao@mail., yym@\}xidian.edu.cn

\IEEEcompsocthanksitem B. Lin is with the School of Software Technology, Zhejiang University, China.
E-mail: binbinlin@zju.edu.cn

\IEEEcompsocthanksitem L. Liu and B. Yu are with the School of Computer Science, the University of Sydney, Australia.
E-mail: \{liu.liu1, baosheng.yu\}@sydney.edu.au

\IEEEcompsocthanksitem Y. Zhan and D. Tao are with JD Explore Academy, China.
E-mail: zhanyibing@jd.com, dacheng.tao@gmail.com 
}%
\thanks{Manuscript received Oct.~2022.}
}

\markboth{IEEE Transactions on Knowledge and Data Engineering,~Vol.~xx, No.~xx, Oct.~2022}%
{Lu \MakeLowercase{\textit{et al.}}: SkipNode: On Alleviating Performance Degradation for Deep Graph Convolutional Networks}

\IEEEtitleabstractindextext{%

\begin{abstract}
\lwgrevise{Graph Convolutional Networks (GCNs) suffer from performance degradation when models go deeper. However, earlier works only attributed the performance degeneration to over-smoothing. In this paper, we conduct theoretical and experimental analysis to explore the fundamental causes of performance degradation in deep GCNs: over-smoothing and gradient vanishing have a mutually reinforcing effect that causes the performance to deteriorate more quickly in deep GCNs. On the other hand, existing anti-over-smoothing methods all perform full convolutions up to the model depth. They could not well resist the exponential convergence of over-smoothing due to model depth increasing. In this work, we propose a simple yet effective plug-and-play module, \skipnode, to overcome the performance degradation of deep GCNs. It samples graph nodes in each convolutional layer to skip the convolution operation. In this way, both over-smoothing and gradient vanishing can be effectively suppressed since (1) not all nodes'features propagate through full layers and, (2) the gradient can be directly passed back through ``skipped'' nodes. We provide both theoretical analysis and empirical evaluation to demonstrate the efficacy of \skipnode and its superiority over SOTA baselines.}

\end{abstract}

\begin{IEEEkeywords}
Performance Degradation, Over-smoothing, Deep Graph Convolutional Networks
\end{IEEEkeywords}}

\maketitle

\IEEEdisplaynontitleabstractindextext

\IEEEpeerreviewmaketitle

\IEEEraisesectionheading{\section{Introduction}\label{sec:intro}}
\IEEEPARstart{G}{raph} Convolutional Networks (GCNs) can capture the dependence in graphs through message passing between nodes. Due to the excellent ability for graph representation learning, GCNs have shown remarkable breakthroughs in numerous applications~\cite{zhou2020graph,soft}, including node classification~\cite{gcn}, object detection~\cite{Yao_2018_ECCV}, and visual question answering~\cite{Li_2019_ICCV}. However, despite the widespread success, current GCNs still suffer from a severe problem, i.e., the performance seriously degenerates with the increasing number of layers. 

The performance degradation problem is widely believed to be caused by the over-smoothing issue~\cite{li2018deeper,mad,yan2021two, wang2019improving, sgc, yang2020revisiting, pairnorm, oono2019graph, cai2020note, dropedge, appnp}. Specifically, each graph convolutional operation tends to mix the features of connected nodes through message propagation. The features between connected nodes quickly become indistinguishable when stacking too many graph convolutional operations. This phenomenon is referred to as the \textit{over-smoothing} issue. \cite{oono2019graph} also investigates the asymptotic behaviors of GCNs and finds that the node representations will approach an invariant space when the number of layers goes to infinity. \lwgrevise{Considering that many real-world graphs require modeling of long-range dependencies~\cite{nodemixup} and deeper neural networks usually show better expressive and reasoning ability~\cite{raghu2017expressive}, how to devise deeper GCNs by addressing the over-smoothing issue has received increasing attention from the community~\cite{li2019deepgcns,dropedge}. However, over-smoothing is not the sole reason for the performance degradation when deepening GCNs. Another important issue is the well-known gradient vanishing for deep models. In this study, we demonstrate that the story for GCNs is a bit different: over-smoothing and gradient vanishing mutually reinforce each other, making the performance degrade more rapidly compared to ordinary deep models. Specifically, we show that over-smoothing pushes the gradient of the classification loss w.r.t the output layer towards 0 at the very beginning of the training process (after the first forward propagation). This not only aggravates the traditional back-propagation-induced gradient vanishing issue, but also triggers the \textit{weight over-decaying} issue due to the dominating effect of regularization. The gradient vanishing and weight over-decaying issues in turn encourage the output features to further approach 0 (weight matrices with small values are multiplied repeatedly), thus worsening the over-smoothing issue and making the output trivial.}

To cope with the performance degradation problem, several GCN variants are proposed recently for alleviating the over-smoothing issue~\cite{jknet, inceptgcn, appnp, gprgnn, gcnii, dropnode_f,zhang2021evaluating,liu2020towards,wang2021dissecting,cong2021provable}. However, these methods still suffer from the following drawbacks: (1) most of these models try to alleviate over-smoothing by sacrificing the expressiveness merit of deep learning, e.g., emphasizing low-level features when over-smoothing happens~\cite{jknet, inceptgcn, appnp}, removing nonlinearity in middle layers~\cite{appnp}, and aggregating neighborhoods of different orders in one layer of computation~\cite{dropnode_f}; (2) some methods only deal with the gradient vanishing issue~\cite{he2016deep}, which has been proved in~\cite{oono2019graph} to be ineffective for the over-smoothing issue; and (3) all of these methods propose ad-hoc GCN models for addressing the over-smoothing issue. These existing techniques are not general enough to be applied to various GCN models. Considering the rapid development of the graph neural network field, a general technique that can be applied to different GCN models would be more beneficial to the research community. \dropedge~\cite{dropedge}, \dropnode~\cite{dropnode_s,dropnode_s}, and \pairnorm~\cite{pairnorm} are three such general modules against the over-smoothing issue. However, they mainly focus on improving feature diversity among different nodes by modifying the graph topology or adjusting the intermediate features, leaving other issues such as gradient vanishing poorly investigated.

More importantly, all the previous anti-over-smoothing methods still require performing $L$ full convolutional operations for each node, where $L$ is the model depth of a GCN-based model. Each convolutional operation multiplies the current node feature matrix by the normalized adjacency matrix and a weight matrix. \lwgrevise{Let $\lambda$ be the second largest magnitude of the eigenvalues of the normalized adjacency matrix and $s$ be the maximum singular value of weight matrices.} Theorem 2 from~\cite{oono2019graph} shows that the node feature matrix will \textit{exponentially} become over-smoothing as $L$ increases with convergence speed as $O((s\lambda)^{L})$, since usually $s\lambda < 1$. Although the previous methods try to alleviate over-smoothing with various ideas, none of them get out of the curse of $(s\lambda)^L$. As $L$ increases, the feature matrix still tends to degenerate due to the exponential effect of $(s\lambda)^L$.

\lwgrevise{A straightforward idea for simultaneously alleviating over-smoothing and gradient vanishing is to combine an existing anti-over-smoothing method with \skipconnection~\cite{residual}, a well-known strategy for the gradient vanishing issue. Nevertheless, although this simple combination is effective for gradient vanishing, it could not well cope with the over-smoothing issue, since $L$ full convolutions are still required. Previous work~\cite{oono2019graph} has also proved that \skipconnection contributes little to addressing over-smoothing.}

From the above analysis, we can see the key underlying issue of over-smoothing is the exponential effect of $(s\lambda)^{L}$ which is hard to break. In this paper, we take a different route to cope with the over-smoothing issue: rather than trying to increase $s$ or $\lambda$, we propose to decrease the exponent part, i.e., $L$. In particular, we devise a new plug-and-play module, dubbed \skipnode, to better train deep GCN-based models. \skipnode samples nodes in each convolutional layer to completely skip this layer's calculation. In other words, the features of those selected nodes are kept unchanged during the convolutional operation. \skipnode is analogous to Dropout~\cite{dropout}, where the difference is that we randomly drop layers, rather than features. Intuitively, this training strategy could effectively reduce $L$ for those selected nodes, and consequently would significantly alleviate the exponential effect of $(s\lambda)^{L}$; the skip operation naturally facilitates gradient back-passing, which means it could also well cope with gradient vanishing. We further show that even for one layer's calculation, \skipnode can generate better anti-over-smoothing results in expectation than vanilla GCN. The main contributions of this work are specified as follows:


\textbullet\
\textbf{We provide a more complete view of the causes of performance degradation in deep GCNs.} We analyze in detail two additional issues, gradient vanishing and model weights over-decaying, and their relationships with over-smoothing. The interplay between them is responsible for the terrible performance when deepening GCNs.

\textbullet\
\textbf{We devise a novel plug-and-play module, dubbed \skipnode, for addressing performance degradation in deep GCNs.} In each middle layer of a GCN model, \skipnode samples nodes either uniformly or based on node degrees to skip the convolutional operation completely. For unselected nodes, they can still receive information from these selected (skipped) ones. We analyze the effectiveness of \skipnode towards alleviating the above three issues and prove that it can better resist over-smoothing even considering one layer's calculation (i.e., it can improve both exponent part and base part of $(s\lambda)^{L}$).

\textbullet\
\textbf{Extensive experiments are conducted to verify the generalizability and effectiveness of \skipnode.} 
The experimental results on various datasets demonstrate the good capability of our \skipnode to alleviate performance degradation in deep GCNs. Specifically, (1) \skipnode has strong generalizability to improve GCN-based methods (including those designed for the over-smoothing issue) on various graphs and tasks, and (2) \skipnode consistently outperforms the commonly-used plug-and-play strategies in different settings.

\section{Related Work}
\label{sec:related-work}
\begin{table*}[!th]
    \centering
    \caption{\lwgrevise{Comparison of Existing Plug-and-play Strategies.}}

\begin{tabular}{c | c c c c c c c}
\toprule
Issues & \pairnorm & \dropedge & \dropnodef & \dropnodes & \dropmessage & \skipconnection & \textbf{\skipnode} \\
\midrule

over-smoothing & \checkmark & \checkmark & \checkmark & \checkmark & \checkmark & $\times$ & \checkmark \\
gradient vanishing & $\times$ & $\times$ & $\times$ & $\times$ & $\times$ & \checkmark & \checkmark \\
weight over-decaying & $\times$ & $\times$ & $\times$ & $\times$ & $\times$ &\checkmark & \checkmark \\

\bottomrule
\end{tabular}

\label{tab:comparison}
\end{table*}

In this section, we first briefly review the development of graph convolution networks, and then introduce existing works targeting the over-smoothing issue. Finally, we discuss the differences between our \skipnode and several closely related strategies, i.e., \pairnorm\cite{pairnorm}, \dropedge\cite{dropedge}, \dropnode\cite{dropnode_s, dropnode_f}, \dropmessage\cite{dropmessage}, and \skipconnection\cite{residual}. 

\subsection{Graph Convolutional Networks} \cite{bruna2013spectral} first devises the graph convolution operation based on the graph spectral theory. However, considering the high computational cost, it is very difficult to apply spectral-based models~\cite{bruna2013spectral,henaff2015deep, defferrard2016convolutional,gcn} to very large graphs. Spatial-based models~\cite{sage,monti2017geometric,gat,niepert2016learning} exhibit great scalability since their convolutional operations can be directly applied to spatial neighborhoods of unseen graphs. \cite{gcn} simplifies previous works and proposes the current commonly-used graph convolutional network, i.e., GCN. Based on~\cite{gcn}, many works~\cite{gin,you2019position,9336270,sgc,abu2020n} toward improving graph learning have been proposed. \cite{gin} investigates the expressive power of different aggregations and proposes a simple model, GIN. P-GNN~\cite{you2019position} incorporates positional information into GCN. SGC~\cite{sgc} reduces the complexity of GCN by discarding the nonlinear function and weight matrices. Despite performance improvement, deepening GCNs remains a challenging problem. \lwgrevise{While our \skipnode shares the similar concept of sampling node used in some inductive GCNs, such as GraphSAGE~\cite{sage} and TGAT~\cite{tgat}, these GCNs focus on neighborhood sampling for generating inductive node embeddings instead of alleviating performance degeneration in deep GCNs.}

\subsection{Over-smoothing}
Over-smoothing is a basic issue of deep GCNs~\cite{li2018deeper,survey_oversmoothing}. \cite{yang2020revisiting, appnp, sgc} find that the node representation becomes linearly dependent after using an infinite number of layers. \cite{oono2019graph,cai2020note} further proves that the output of deep GCNs with a ReLU activation function converges to zero. \cite{yan2021two} investigates the over-smoothing issue in heterophilic and homophilic graphs. \lwgrevise{To quantify over-smoothing, the Mean Average Distance (MAD) method proposed by Chen \textit{et al.}~\cite{mad} calculates the mean of the average cosine distance between nodes and their connected nodes.} In general, current solutions to the over-smoothing issue fall into two categories:

\para{Specific GCNs-based models for over-smoothing}:
\cite{gcn} attempts to construct a multi-layer GCN by adding residual connections \cite{he2016deep}. JKNet~\cite{jknet} combines the outputs of different layers for the final representation. InceptGCN~\cite{inceptgcn} integrates different outputs from multiple GCNs with various layers. GCNII~\cite{gcnii} obtains remarkable performance over many benchmarks by utilizing initial residual connection and identity mapping. APPNP~\cite{appnp} incorporates the PageRank algorithm into GCN to derive a novel propagation schema instead of the Laplacian smoothing. \cite{appnp} follows PageRank-style propagation and adds adaptive weights to escape from over-smoothing guided by label information.~\cite{zhang2021evaluating,liu2020towards,wang2021dissecting,cong2021provable} decouple feature transformation from propagation to avoid over-smoothing (and model degeneration). \cite{zhou2021dirichlet} proposes a novel framework, consisting of orthogonal weight controlling, lower-bounded residual connection, and shifted ReLU activation, to train deep GCNs by regularizing Dirichlet energy. However, the extra efforts on weight regularization increase the complexity of optimization. \lwgrevise{Furthermore, Miao \textit{et al.} develops Lasagne~\cite{lasagne}, which incorporates node-aware layer aggregators and factorization-based layer interactions as a solution to address over-smoothing. Rusch \textit{et al.}~\cite{graphcon} proposes the Graph-Coupled Oscillator Network (GraphCON) as a solution to over-smoothing. By utilizing non-linear oscillators coupled through the graph structure, GraphCON avoids the exponential vanishing of Dirichlet energy. Another approach, Allen-Cahn Message Passing (ACMP)~\cite{acmp}, as introduced by Wang \textit{et al.}, is based on the Allen-Cahn equation. ACMP models interacting particle systems with attractive and repulsive forces, thereby offering a distinct mechanism to mitigate over-smoothing.} However, the above models are precisely designed and the proposed techniques might struggle to be used on various GCN models.

\para{Plug-and-play techniques for over-smoothing}: \lwgrevise{Several normalization-based methods have been proposed to address the issue of over-smoothing in GNNs. Yang \textit{et al.}~\cite{yang2020revisiting}, \textit{Oono et al.}~\cite{oono2019graph}, and Zhao \textit{et al.}~\cite{pairnorm} have introduced various techniques such as mean-subtraction, node representations/weight scaling, and center-and-scale normalization. These methods aim to prevent node features from becoming indistinguishable during the propagation process. Among these techniques, \pairnorm~\cite{pairnorm} has gained significant attention for re-normalizing intermediate node representations and mitigating over-smoothing. Another rapidly emerging strategy to mitigate over-smoothing involves integrating augmented data into models through the random "dropping" of certain input components. Rong \textit{et al.} proposes \dropedge~\cite{dropedge} to alleviate both over-smoothing and over-fitting issues by randomly deleting a part of edges. In addition to edge-based methods, \dropnode approaches focus on nodes by either masking their features~\cite{dropnode_f} or completely removing them from the graph~\cite{dropnode_f}. To differentiate between these \dropnode methods, we refer to~\cite{dropnode_f} as \dropnodef and~\cite{dropnode_s} as \dropnodes. What's more, \dropmessage~\cite{dropmessage} directly drops the propagated messages during the message-passing process.} Unfortunately, these works excessively emphasize maintaining feature diversity but ignore other issues beyond over-smoothing. In summary, none of the previous methods dealing with deep GCNs' performance degradation tried to alleviate both over-smoothing and gradient vanishing \lwgrevise{(cf. Figure~\ref{fig:negative_effects_on_cora})}. Moreover, they still perform graph convolutions up to the model depth, which could not well resist the exponential smoothing effect of model depth. 

\subsection{\lwgrevise{Detailed Comparison to Plug-and-play Strategies}}

\begin{figure*}[!ht]
    \centering
    \includegraphics[width=1\textwidth]{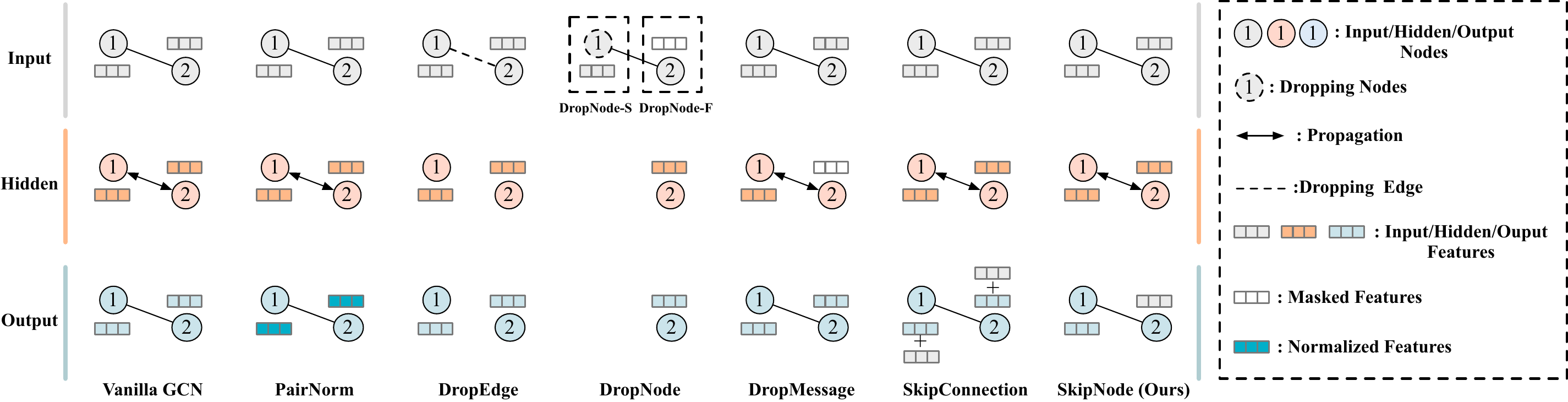}
    \caption{\lwgrevise{The sketch of how vanilla GCN and GCN with \pairnorm/\dropedge/\dropnode/\dropmessage/\skipconnection/\skipnode perform on a graph. Vanilla GCN enables each connected node to exchange information. \pairnorm renormalizes the output features. \dropedge deletes the edge between nodes 1 and 2 to make the graph sparser. \dropnodes removes node 1, while \dropnodef masks the entire input feature of node 2. \skipconnection directly adds the input on the output. \skipnode replaces the output feature of node 2 with its input.}}
    \label{fig:difference}
\end{figure*}

\lwgrevise{Figure~\ref{fig:difference} illustrates the key differences of the several plug-and-play strategies when applied to one layer of GCNs. It depicts how they manipulate the input node features and the graph structure (the first row of Figure~\ref{fig:difference}), aggregate features from neighbors (the second row), and generate the output features (the third row).} The last column shows that \skipnode replaces the sampled node 2's output by the respective input node feature. As a result, this ``skip'' operation not only well resists the exponential smoothing effect of model depth by skipping some convolutional operations for the sampled nodes, but also enables the gradient to be transferred via the sampled nodes, hence alleviating gradient vanishing. Finally, because the above two issues have been well alleviated, the weight norm will not degrade excessively. \lwgrevise{In the following, we discuss in detail the intrinsic differences between \skipnode and \pairnorm, \dropedge, \dropnode, \dropmessage, and \skipconnection. In Table~\ref{tab:comparison}, we also summarize their ability to deal with the three issues, namely, over-smoothing, gradient vanishing, and weight over-decaying.}

\para{\skipnode \textit{vs.} \pairnorm.}
\lwgrevise{In the second column of Figure~\ref{fig:difference}, \pairnorm renormalizes the output features of both nodes 1 and 2 to prevent them from becoming too similar. However, \pairnorm overlooks the issues of gradient vanishing and weight over-decaying. As a result, it may encounter difficulties when applied to deeper GCNs (smaller than 30 layers) with empirical evidence shown in~\cite{pairnorm}.}

\para{\skipnode \textit{vs.} \dropedge.}
As shown in the third column of Figure~\ref{fig:difference}, \dropedge deletes the edge between nodes 1 and 2 during training to keep them from being too similar. The key idea of \dropedge is that more sparsity of the graph results in less over-smoothing. However, randomly deleting too many edges may compromise critical structural information, while removing too few edges cannot effectively relieve over-smoothing. As a result, determining the dropping ratio when \dropedge is applied to various graphs is not a simple task. \dropedge primarily concentrates on over-smoothing, ignoring gradient vanishing and weight over-decaying issues since the gradient still needs to go through the model layer by layer and inevitably end up vanishing. Moreover, \dropedge re-normalizes the adjacency matrix in each epoch, which is time-consuming when dealing with large graphs.

\para{\skipnode \textit{vs.} \dropnode.}
\lwgrevise{Two works~\cite{dropnode_f,dropnode_s} have introduced the \dropnode technique, as shown in the fourth column of Figure~\ref{fig:difference}. In~\cite{dropnode_f}, the ``dropping'' strategy focuses on the feature level by masking all the features of selected nodes (\dropnodef) with additional regularization on the outputs to maintain model robustness. In contrast, the ``dropping'' strategy of~\cite{dropnode_s}
operates at the structural level by removing the selected nodes from the original graph (\dropnodes). However, it requires normalization of the adjacency matrix in each layer due to dropped nodes, making it challenging to implement in models that rely on concatenating intermediate node representations or using \skipconnection. Both \dropnodef and \dropnodes are limited in terms of addressing over-smoothing without offering adequate flexibility. In contrast, our proposed \skipnode technique is simple and applicable to various models. Moreover, \skipnode is theoretically proven to effectively address all three issues (cf. Sec.~\ref{sec:skipnode}).}

\para{\skipnode \textit{vs.} \dropmessage.}
\lwgrevise{In the fifth column of Figure~\ref{fig:difference}, \dropmessage inherits the concept of ``dropping'' by masking the propagated features to node 2 to avoid over-smoothing, leaving the other two issues unresolved. The original experiments in the paper~\cite{dropmessage} only show the results for up to 8 layers, raising concerns about its performance in deeper architectures. Additionally, modifying the internal code of GNN models is required for implementing \dropmessage, which limits its general applicability. On the other hand, our \skipnode offers greater versatility as it only requires adjustments to the model's input and output without any modifications to the internal operations.}

\para{\skipnode \textit{vs.} \skipconnection.} 
\lwgrevise{\skipconnection \emph{adds} the inputs to the outputs of the current layer (shown in the sixth column of Figure~\ref{fig:difference}), while SkipNode \emph{replaces} the outputs of some nodes with the corresponding inputs. Due to this key difference, \skipconnection cannot effectively handle the performance degradation of deep GCNs, while \skipnode can. Specifically, the additive operation of \skipconnection fails to effectively mitigate over-smoothing since all the nodes still go through $L$ full convolutional operations. Consequently, the gradient vanishing issue predominantly occurs at the last layer due to zero outputs induced by over-smoothing (the zero output issue was proved in Proposition 3 from~\cite{oono2019graph}). Then, only a small number of gradients can be propagated backward through the shortcut paths established by \skipconnection. The mutual reinforcement between over-smoothing and gradient vanishing still quickly leads to performance degradation of deep GCNs. In comparison, \skipnode reduces the number of convolutional operations for the selected nodes by allowing them to skip some of these operations via our replacement operations. By doing so, \skipnode effectively reduces the risk of over-smoothing. Moreover, our \skipnode offers a simultaneous solution to the gradient vanishing issue. As the over-smoothing issue is alleviated, the gradients at the last layer increase. Consequently, gradients can swimmingly pass through the skipped nodes to shallow layers.}

\section{Preliminaries}
\label{sec:preliminaries}
This section presents background contents, including vanilla GCN~\cite{gcn} and existing studies about over-smoothing. We first define the common notations used in this paper.

\subsection{Notations}
Assuming that $\mathbb N_{+}$ is the set of positive integers, we denote $[M] := \{1, \cdots, M\}$ for $M \in \mathbb N_{+}$. Let $\mathcal{G}=\{\mathcal{V}, \mathcal{E}\}$ denote an undirected graph, $\mathcal{V}=\{v_1, v_2, ..., v_N \}$ denote the node set, where $N$ is the number of all nodes. Let $\mathcal{E}=\{e_{ij}\}, i,j\in [N]$ denote the edge set, where $e_{ij}$ indicates the edge between nodes $v_i$ and $v_j$. The adjacency matrix is defined as $A \in \mathbb{R}^{N\times N}$, where $A_{ij} = 1$ if $e_{ij}\in \mathcal{E}$ otherwise $A_{ij} = 0$. Let $D$ denote the diagonal degree matrix, where $D_{ii} = \sum_{j=0}^{N}A_{ij}$. Suppose that $X^{(l)} \in \mathbb R^{N \times d_{l}}$ is the output feature matrix of the $l$-th layer, and $d_{l}$ is the feature dimensionality of the $l$-th layer, where $l \in [L]$ and $L \geq 2$. The raw node feature matrix is defined as $X^{(0)} \in \mathbb{R}^{N\times d_{0}}$. Given $v \in \mathbb R^{d}$ and $w \in \mathbb R^{d}$, the Kronecker product operation $v \otimes w$ results in a $\mathbb R^{d \times d}$ matrix, where $(v \otimes w)_{ij} := v_{i}w_{j}$. 

\subsection{Vanilla GCN}
The vanilla GCN~\cite{gcn} defines the graph convolution operation as $g_{\theta} \star x \approx \theta(I + D^{-\frac{1}{2}}AD^{-\frac{1}{2}})\boldsymbol{x}$, where $\boldsymbol{x} \in {{\mathbb R}^{N}}$ indicates the graph signal, $\star$ is the convolutional operation, and $\theta$ is a learnable parameter. To avoid exploding filter coefficients, ~\cite{gcn} further adopted a re-normalization trick. The final GCN formulation can thus be written as:
\begin{equation}
    X^{(l)} = ReLU(\tilde{A}X^{(l-1)}W^{(l)}),
    \label{eq:gcn}   
\end{equation}
where $\tilde{A} = (D + I)^{-\frac{1}{2}}(A + I)(D + I)^{-\frac{1}{2}}$ is a normalized adjacency matrix and $W^{(l)} \in \mathbb{R}^{d_{l-1} \times d_{l}}$ indicates the $l$-th learnable weight matrix. Accordingly, the symmetrical augmented graph Laplacian is defined as $\tilde{L} = I - \tilde{A}$.

\subsection{Over-smoothing of GCNs}
\label{subsec:over-smoothing-def}
Over-smoothing is an issue where node representations lose their diversities, which hinders the learning of graph neural networks. Many prior works~\cite{oono2019graph,jknet,cai2020note,appnp,li2018deeper} have been devoted to exploring what is over-smoothing and they can be roughly divided into two groups.  

On the one hand, from the perspective of linear feature propagation, \cite{li2018deeper,jknet,appnp} prove that GCNs can be regarded as low-pass filters retaining low-frequency signals in graph signal processing. Regardless of the weight matrix, the output features of nodes from a connected $\mathcal{G}$ become linearly dependent after multiple propagations. It can be explained by:

\begin{equation}
\lim_{k \to \infty}\tilde{A}^{k}X^{(0)} = \boldsymbol{\pi}\boldsymbol{\pi^{T}}X^{(0)},
\end{equation}
where $\pi_{i} = \frac{\sqrt{{D_{ii}} + 1}}{\sum_{v \in N}\sqrt{(D_{vv} + 1)}}$ and $i \in [N]$. This equation can be derived by diagonalization of $\tilde{A}$ and the fact that $\tilde{A}$'s eigenvalues lie in $(-1, 1]$. 

On the other hand, \cite{oono2019graph,cai2020note} investigate over-smoothing in the general graph convolutional networks architecture. \cite{oono2019graph} offers a more comprehensive understanding of over-smoothing by explaining it as the decreasing distance between the node feature matrix and a lower-information subspace $\mathcal{M} \subset \mathbb R^{N \times d}$ after multiple convolutions\footnote{\cite{oono2019graph} assumes all the output feature matrices share the same feature dimensionality $d$ to facilitate derivation.}. Assuming that the eigenvalues of $\tilde{A}$ are sorted in ascending order as: $\lambda_{1} \leq \cdots \leq \lambda_{N}$, we have $-1 < \lambda_{1} < \cdots < \lambda_{N-M}$ and $\lambda_{N - M + 1} = \cdots = \lambda_{N} = 1$ ($0 < M < N$). Let $U$ be a $M$-dimensional subspace of $\mathbb R^{N}$, which is the eigenspace corresponding to the smallest eigenvalue of augmented normalized graph Laplacian $\tilde{L}$ (that is the eigenspace associated with the eigenvalue 1 of $\tilde{A}$). $U$ is assumed to have the following properties~\cite{oono2019graph}. 

\begin{assumption}
\label{asp1}
$U$ has an orthonormal basis $(e_{m})_{m \in [M]}$ that consists of non-negative vectors.
\end{assumption} 

\begin{assumption} 
\label{asp2}
$U$ is invariant under $\tilde{A}$, i.e., if $u \in U$, then $\tilde{A}u \in U$.
\end{assumption} 

Based on Assumption~\ref{asp1}, we can construct the subspace $\mathcal{M}$ that only contains information about connected components and node degrees. That is, for any $X \in \mathcal{M}$, nodes with equal degrees from the same connected component have identical row vectors in $X$.

\begin{definition}
(Subspace $\mathcal{M}$) Let $\mathcal{M} := U \otimes \mathbb R^{d} = \{\sum_{m=1}^{M} e_{m} \otimes w_{m} | w_{m} \in \mathbb R^{d}\}$ be a subspace of $\mathbb R^{N \times d}$, where $(e_{m})_{m \in [M]}$ is the orthonormal basis of $U$.
\end{definition}
The distance between subspace $\mathcal{M}$ and $X$ is denoted as $\DMone{X} := inf\{\lVert X - H\rVert_{F}|H \in \mathcal{M}\}$, where $\lVert \cdot \rVert_{F}$ is the Frobenius norm of a matrix. Then, for a GCN model with $L$ layers defined by Eq.~(\ref{eq:gcn}), its final output $X^{(L)}$ will exponentially approach the subspace $\mathcal{M}$. The convergence can be written as:
\begin{equation}
    \label{eq:shrinking_distance}
    \DMone{X^{(L)}} \leq s_{L-1}\lambda \DMone{X^{(L-1)}} \leq \cdots \leq \prod_{l=1}^{L}(s_{l}\lambda)\DMone{X^{(0)}},
\end{equation}
where $s_{l}$ is the maximum singular value of $W^{(l)}$ and $\lambda := max_{n \in [N - M]} |\lambda_{n}|$. In addition, the Proposition 3 from~\cite{oono2019graph} also notes that $X^{(l)}$ will exponentially approach 0 as $l \to \infty$ if $s\lambda < 1$ and the operator norm of $\tilde{A}$ is no larger than $\lambda$, where $s := sup_{l \in [L]} s_{l}$.

However, they simply attribute the performance degradation to over-smoothing. In this paper, we show that there are two other issues responsible for the failure of deep GCNs, i.e., gradient vanishing and model weights over-decaying. 

\section{Why Deep GCNs Fails?}
\label{sec:over-smoothing}
Numerous works~\cite{oono2019graph,jknet,cai2020note,appnp,li2018deeper} have explained the over-smoothing phenomenon as the sole reason that induces performance degradation. However, few methods have comprehensively analyzed the specific reasons for the degrading performance of deep GCNs beyond over-smoothing, i.e., gradient vanishing and weight over-decaying.

\newlength{\myImageWidth}
\setlength{\myImageWidth}{0.31\textwidth}

\begin{figure*}[ht]
        \centering
        \subfigure[\lwgrevise{Over-smoothing}]{ 
        \includegraphics[width=\myImageWidth]{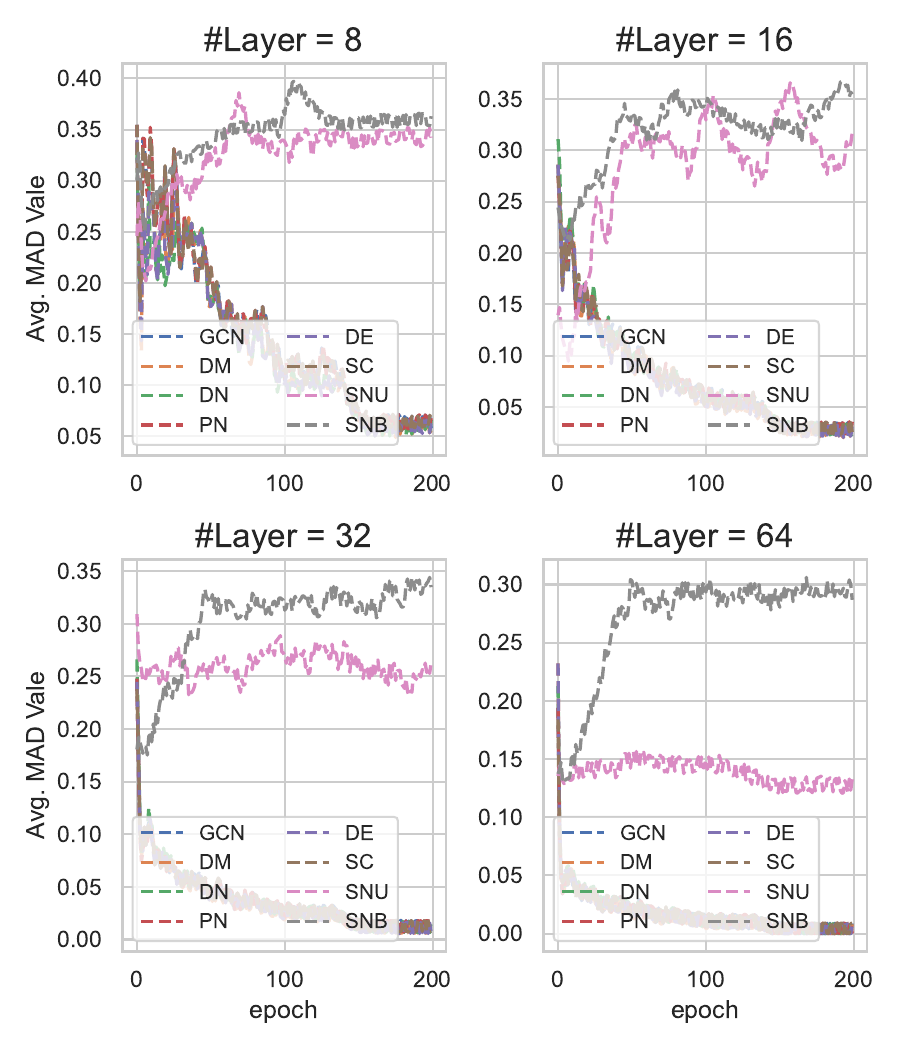}
        }
        \subfigure[\lwgrevise{Gradient Vanishing}]{
        \includegraphics[width=\myImageWidth]{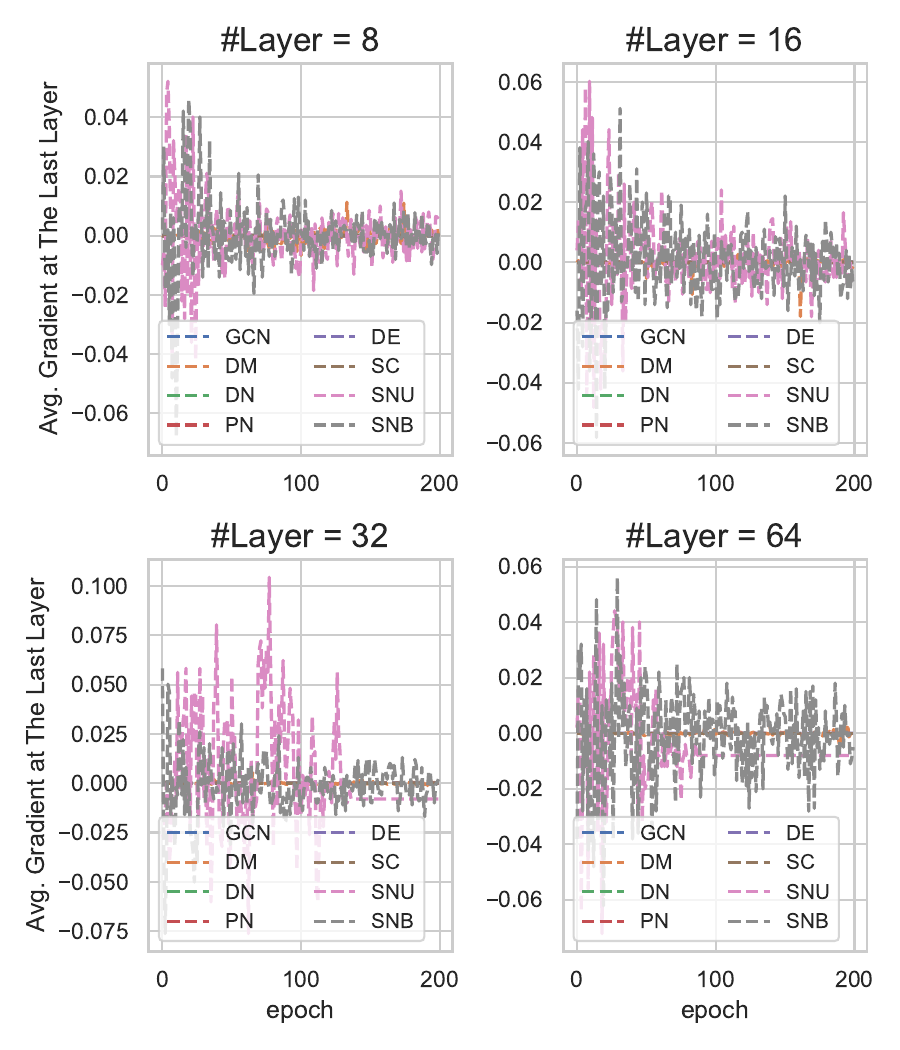}
        }
        \subfigure[\lwgrevise{Model Weights Over-decaying}]{
        \includegraphics[width=\myImageWidth]{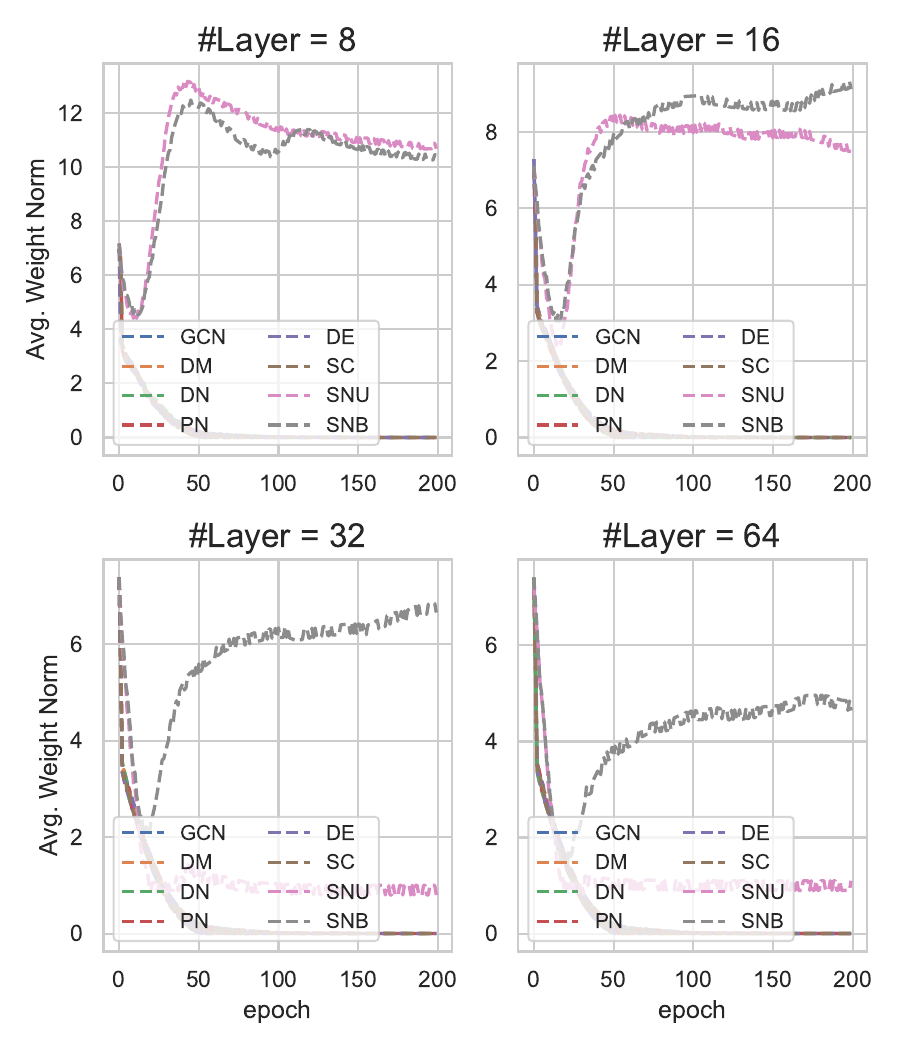}
        }
        \caption{\lwgrevise{Three issues on 8/16/32/64-layer GCNs with \pairnorm (PN), \dropedge (DE), \dropnode (DN), \dropmessage (DM), \skipconnection (SC), \skipnodeu (SNU), and \skipnodeb (SNB) using the Cora dataset. SNU/SNB refer to for \skipnode using uniform/biased sampling strategy. In (a), the features get more indistinguishable as the average MAD value from all the layers becomes smaller. In (b), only \skipnode alleviates the gradient vanishing issue. (c) shows the mean value of the L2 norm of all the model weights. Note that some curves overlap.}}
        \label{fig:negative_effects_on_cora}
\end{figure*}

\subsection{Gradient Vanishing}
\label{subsec:gradient-vanishing}
Gradient vanishing is a common issue existing in deep neural networks~\cite{gradient-vanishing}. However, there are very limited works analyzing the relationship between gradient vanishing and over-smoothing in detail. Here, we prove that the gradient would be reduced to zero at the classification layer as the model becomes deeper.

\begin{theorem}
\label{thm_gradient_vanishing}
Let $\mathcal{L}$ denote the cross-entropy loss function, $\mathcal{V}_{train}$ be the training set that contains $B$ nodes and $Z \in \mathbb{R}^{N \times C}$ be the output of the classification layer, where $C$ is the number of class. Assuming there are same samples in each category ($\frac{B}{C}$ samples per class), the gradient at the classification layer $\sum_{i \in \mathcal{V}_{train}} \sum_{j=1}^{C} \frac{\partial \mathcal{L}}{\partial Z_{ij}}$ gets close to 0 when the model's output converges to 0, the trivial fixed point introduced by the Proposition 3 from~\cite{oono2019graph}.
\end{theorem}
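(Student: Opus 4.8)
The plan is to reduce the claim to the closed-form gradient of softmax cross-entropy and then push the logits to the trivial fixed point $Z\to 0$ furnished by Proposition~3 of~\cite{oono2019graph}. First I would fix the softmax layer: for a training node $i$ and class $j$ write the predicted probability $p_{ij} = \exp(Z_{ij}) / \sum_{k=1}^{C}\exp(Z_{ik})$, and let $y_{ij}\in\{0,1\}$ be the one-hot label of node $i$, so that $\sum_{j=1}^{C} y_{ij} = 1$ and $\sum_{j=1}^{C} p_{ij} = 1$. The loss is then $\mathcal{L} = -\sum_{i\in\mathcal{V}_{train}}\sum_{j=1}^{C} y_{ij}\log p_{ij}$, and the only mechanical step is to differentiate through the softmax, separating the diagonal term $j=y_i$ from the off-diagonal terms, to obtain the standard identity $\frac{\partial\mathcal{L}}{\partial Z_{ij}} = p_{ij} - y_{ij}$. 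This is where I would be most careful, since the quotient rule for the softmax couples every coordinate and the signs must be tracked exactly.

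Next I would invoke the hypothesis that the output reaches the trivial fixed point, that is, every logit $Z_{ij}\to 0$. In this limit the softmax becomes uniform, $p_{ij}\to 1/C$ for all $i$ and $j$, so each per-entry gradient tends to $1/C - y_{ij}$, which is \emph{not} itself zero. The vanishing only appears after aggregating over the balanced training set: fixing a class $j$, I would use that $\sum_{i\in\mathcal{V}_{train}} p_{ij}\to B/C$ (there are $B$ training nodes, each contributing $1/C$), while $\sum_{i\in\mathcal{V}_{train}} y_{ij}$ counts the training nodes of class $j$, which equals $B/C$ exactly by the equal-samples assumption. Hence $\sum_{i}(p_{ij}-y_{ij})\to 0$ for every $j$, and therefore the aggregated classification-layer gradient tends to $0$ as $Z\to 0$.

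The step I expect to be the real obstacle is conceptual rather than computational: the literal signed double sum $\sum_{i}\sum_{j}(p_{ij}-y_{ij})$ is \emph{identically} zero for any $Z$, because $\sum_j p_{ij}=\sum_j y_{ij}=1$ for each $i$; so to make the statement carry content I would read it as the vanishing of the per-class aggregated gradient $g_j := \sum_{i\in\mathcal{V}_{train}}(p_{ij}-y_{ij})$, equivalently the full gradient vector over the $C$ output coordinates. It is precisely this quantity whose limit genuinely requires \textbf{both} hypotheses, namely $Z\to 0$ (to make the softmax uniform) and class balance (to cancel $B/C$ against $B/C$). I would state this reading explicitly so that the role of each assumption in driving the gradient to $0$ is transparent, rather than letting the result collapse into the trivial softmax normalization identity.
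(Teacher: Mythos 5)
Your proposal follows essentially the same route as the paper's own proof: derive the standard softmax cross-entropy gradient $p_{ij}-y_{ij}$ (the paper carries an extra $\tfrac{1}{B}$ normalization), let $Z\to 0$ so the softmax becomes uniform with $p_{ij}\to\tfrac{1}{C}$, and cancel $\tfrac{B}{C}$ against $\tfrac{B}{C}$ for each class via the balance assumption. Your additional observation that the literal double sum $\sum_{i}\sum_{j}(p_{ij}-y_{ij})$ is identically zero for \emph{any} $Z$ (since each row of the softmax and of the one-hot labels sums to one) is correct and exposes a real imprecision in the theorem statement that the paper does not acknowledge; the paper's proof, like yours, actually establishes the vanishing of the per-class sums $\sum_{i\in\mathcal{V}_{train}}\frac{\partial\mathcal{L}}{\partial Z_{ij}}$ for each fixed $j$, which is where both hypotheses are genuinely needed.
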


\begin{proof}
cf. Appendix~\ref{supp_sec:theorem1}. 
\end{proof}

Theorem~\ref{thm_gradient_vanishing} indicates if we generate class-balanced mini-batches (commonly used in most studies) to train a deep GCN model, the gradient at the output layer will be near 0 at the very beginning of the training process.

\subsection{Model Weights Over-decaying}
\label{subsec:weight-over-decaying}
When the gradient at the classification layer gets to 0, model weights over-decaying will occur if weight regularization (e.g., L2 loss) is used during training. Assume that regularization loss and classification loss are both included in the total loss. Theorem~\ref{thm_gradient_vanishing} points out that the over-smoothing issue causes the gradient to vanish, thus disabling the classification objective. As a result, the regularization objective plays a dominant role during training. The over-smoothing issue might be made worse by the regularization objective which decreases the norm of the weight matrices without control. Smaller weights make the output features more quickly approach zero.

\begin{remark}
\label{remark}
In deep GCNs, over-smoothing, gradient vanishing, and weights over-decaying together cause the performance degradation problem. Over-smoothing and gradient vanishing mutually exacerbate each other, leading to more serious performance degradation. At the beginning of the training process, the gradient at the classification layer is close to zero due to the near-zero output features caused by over-smoothing. The model weights would thus drastically decrease due to the regularization objective's dominating influence when the classification objective is disabled by the zero gradient. Over-smoothing would then become worse because the successive multiplication of the input by weight matrices with small values would more easily produce zero output features. Consequently, the above issues seriously hinder the training of deep GCNs. \lwgrevise{We further provide visualizations of the three issues using GCNs with $8/16/32/64$ layers on Cora in Figure~\ref{fig:negative_effects_on_cora}. We record the average mad value of node representations from all the layers, gradients at the last layer, and the average L2 norm of the model parameters. Besides, these results are derived from the averaging of 10 independent runs. It shows that most comparing strategies exhibit a mitigating effect only on over-smoothing, particularly in the case of a relatively shallow GCN (e.g., 8 layers). However, as we increase the depth, all the strategies other than SkipNode, become susceptible to all three issues (i.e., over-smoothing, gradient vanishing, and weight over-decaying). It's noteworthy that SkipConnection, while attempting to address gradient vanishing, encounters limitations in the deep GCNs due to serious over-smoothing. This results in the gradients at the last layer approaching zero, causing only a small number of gradients to propagate backward. By incorporating \skipnode using uniform sampling (\skipnodeu) and biased sampling (\skipnodeb), all the issues can be alleviated.}
\end{remark}

\section{Method}
\label{sec:skipnode}
In this section, we present the methodological details of SkipNode and explain how SkipNode alleviates all three issues. 

\subsection{SkipNode}
\label{subsec:Methodology}
\begin{figure}[!bpht]
    \centering
    \includegraphics[width=0.8\columnwidth]{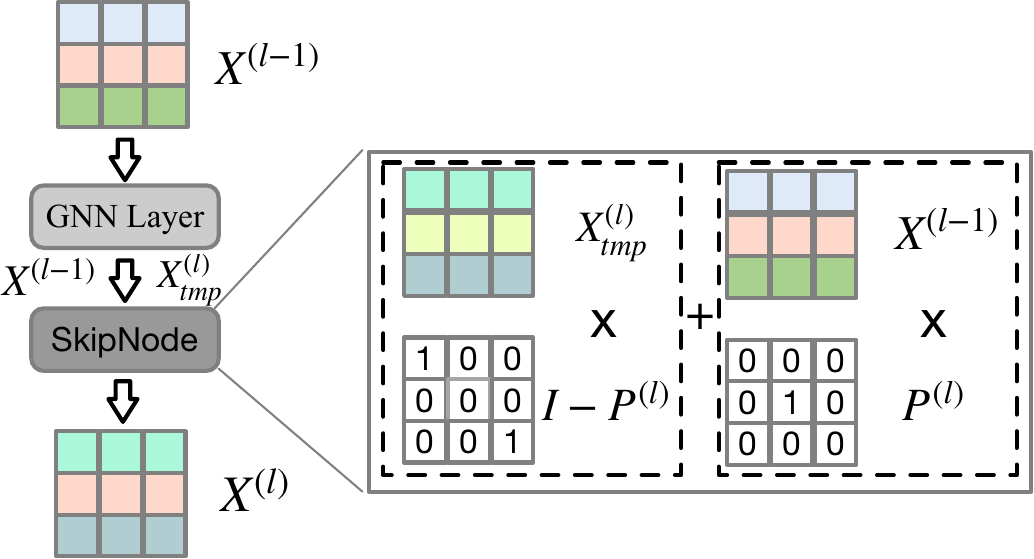}
    \caption{The forward propagation scheme of SkipNode. $X^{(l)}_{tmp} = \tilde{A}X^{(l-1)}W^{(l)}$. For simplicity, we omit the nonlinear function in this figure.}
    \label{fig:skip_procedure}
\end{figure}
Suppose a deep GCN contains $L$ layers. Then, for each middle layer, the \skipnode generates a mask matrix $P^{(l)} \in \mathbb{R}^{N \times N}$, which is a diagonal matrix whose diagonal consists of $1$ or $0$. $P_{ii} = 1$ indicates that node $i$ is a sampled node; otherwise, it is not. Then, the $l$-th GCN layer with \skipnode can be defined as follows:
\begin{equation}
\label{eq:skipnode}
X^{(l)} = \sigma\left((I - P^{(l)})\tilde{A}X^{(l-1)}W^{(l)} + P^{(l)}X^{(l-1)}\right),
\end{equation}
where $\sigma$ is a nonlinear function, i.e., ReLU function. The mask matrix $P^{(l)}$ is determined by the sampling strategy and sampling rate $\rho$. For those selected nodes, the ``skip'' operation suppresses the exponential smoothing effect of model depth by keeping their features unchanged in the current layer. \lwgrevise{In terms of outcomes, SkipNode does not require performing $L$ full convolutional operations for the selected nodes.} Here, we propose two sampling strategies: \textbf{Uniform Sampling} samples the mask matrix $P$ with $P_{ii} \sim Bernoulli(\rho)$; \textbf{Biased Sampling} samples $\rho N$ ``skipped nodes'' with sampling probabilities proportional to nodes' degrees because \cite{gcnii} points out that nodes with higher degrees are more likely to suffer from over-smoothing in deep GCNs.  We only employ \skipnode during training for each middle layer. Figure~\ref{fig:skip_procedure} describes the forward propagation scheme of \skipnode.

\begin{figure*}[!htbp]
        \centering
        \subfigure[Log distance ratio between each layer's output and the initial feature input.]{ 
        \includegraphics[width=0.48\textwidth]{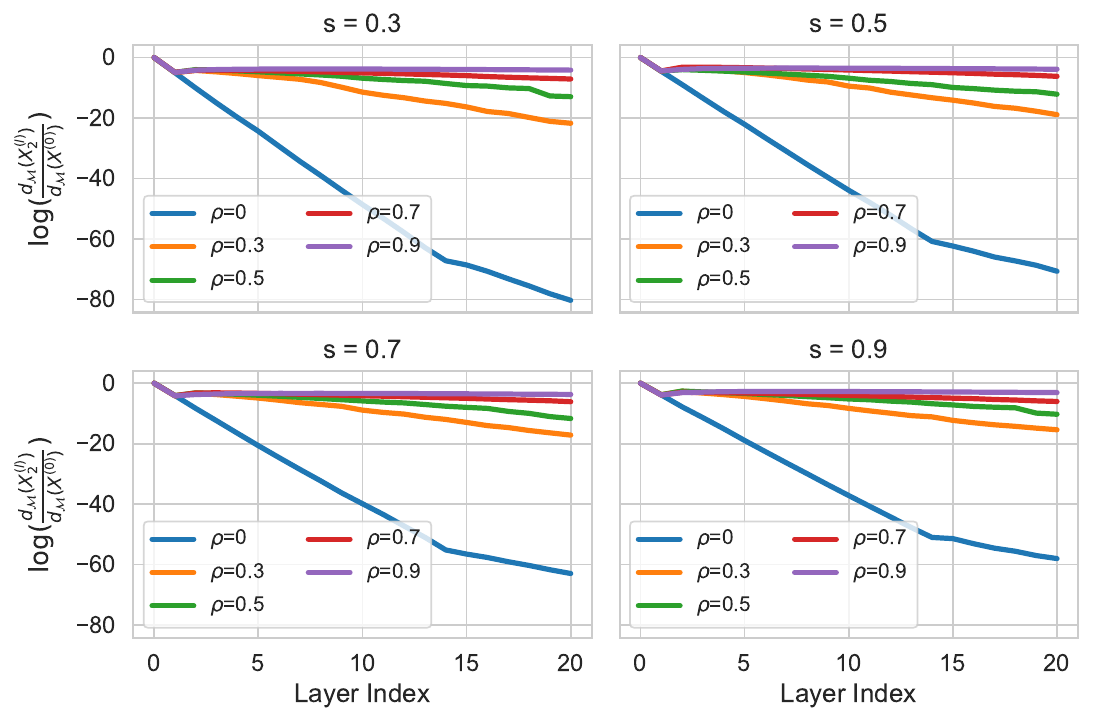}
        }
        \subfigure[Log distance ratio between one layer's output with/without \skipnode.]{
        \includegraphics[width=0.48\textwidth]{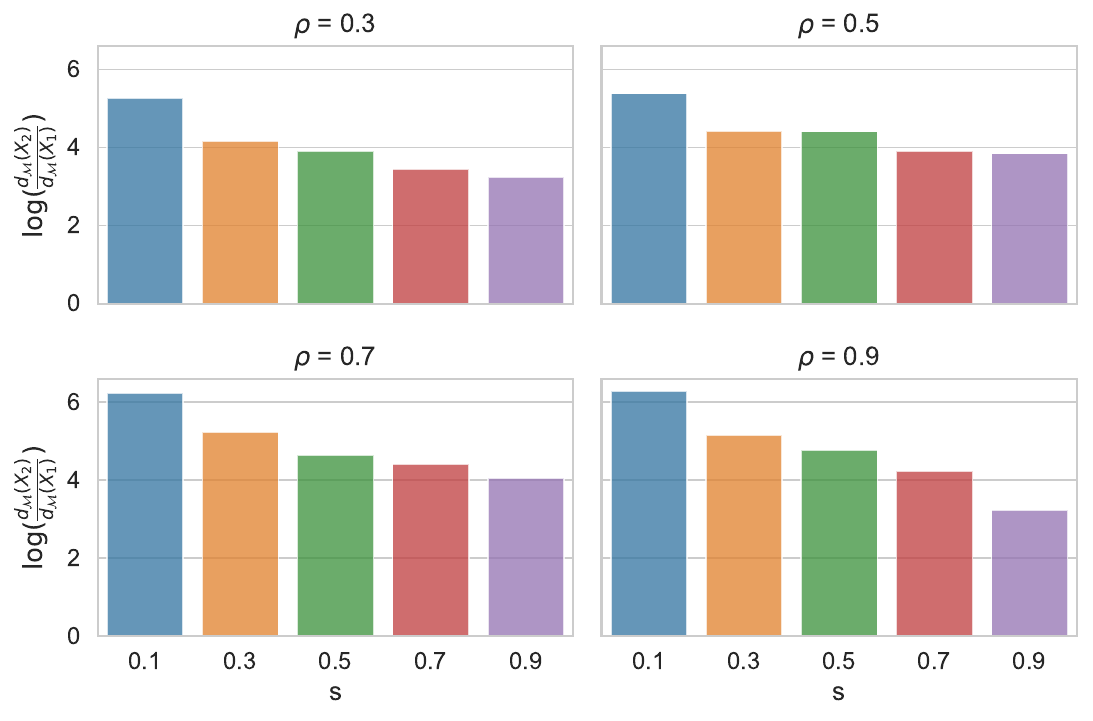}
        }
        \caption{The effectiveness of alleviating over-smoothing. We explore (a) the log distance ratio $\log(\frac{\DMone{X^{(l)}_{2}}}{\DMone{X^{(0)}}})$ for each layer $l$ with varying $\rho$ and $s$ ($\rho=0$ is equivalent to vanilla GCN), and (b) the log distance ratio $\log(\frac{\DMone{X_{2}}}{\DMone{X_{1}}})$ for one layer calculation. All the results are averaged from $100$ runs on the same graph for each fixed combination of $\rho$ and $s$.}
        \label{fig:allievat_over_smoothing}
\end{figure*}

\subsection{Toward Alleviating Performance Degradation}
Here, we provide theoretical and empirical analysis of \skipnode's ability\footnote{For simplicity, the analysis is based on uniform sampling.} to alleviate the over-smoothing issue (recall that over-smoothing is described by the \textbf{exponentially decreasing upper bound} of the distance between output features and the lower-information subspace $\mathcal{M}$, as shown in Eq.~(\ref{eq:shrinking_distance})). Then its beneficial impact on relieving gradient vanishing and weight over-decaying is discussed.

\subsubsection{Alleviating over-smoothing}
\skipnode can naturally alleviate the exponentially decreasing upper bound issue. This is because this issue is caused by performing too many graph convolutions, and \skipnode reduces the number of convolution operations for randomly selected nodes. In the next, we further theoretically analyze the anti-over-smoothing effects of \skipnode in expectation for \textit{one layer's calculation}. We first prove that \skipnode can, in comparison to vanilla GCN, improve the upper bound coefficient $(s_l \lambda)$ (Theorem~\ref{thm1}). We then demonstrate that the output of a layer with \skipnode is, in expectation, farther away from $\mathcal{M}$ than the output of a vanilla GCN layer (Theorem~\ref{thm2}).

For clarity, we focus on an arbitrary middle layer and omit the layer index $l$ in notations. Following \cite{oono2019graph}, both input and output dimensionalities are set to $d$. Let $X \in \mathbb{R}^{N \times d}$ be the input feature matrix. We denote the output of a GCN layer as $X_{1} = ReLU(\tilde{A}XW)$ and the output of a GCN layer with \skipnode as $X_{2} = (I - P)X_{1} + PX$ (since $X$ is the output of the previous layer and consequently only contains non-negative values, it is easy to verify that Eq.~(\ref{eq:skipnode}) can be written in this form). Since the layer index is removed, we abuse the notation $s$ a bit to denote it as the maximum singular value of the weight matrix $W$. Recall that $\lambda$ is the second largest magnitude of $\tilde{A}$'s eigenvalues, and $\rho \in (0, 1)$ is the sampling rate of \skipnode. The two theorems and a necessary lemma are presented as follows.

\begin{lemma}
\label{lem1}
For any $Z$, $Y \in \mathbb{R}^{N \times d}$, we have $\DMtwo{Z + Y} \leq \DMtwo{Z} + \DMtwo{Y} + 2\DMone{Z}\DMone{Y}$ and $\DMtwo{Z - Y} \geq \DMtwo{Z} + \DMtwo{Y} - 2\DMone{Z}\DMone{Y}$.
\end{lemma}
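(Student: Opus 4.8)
The plan is to exploit the fact that $\mathcal{M}$ is a \emph{linear} subspace of $\mathbb{R}^{N \times d}$. Directly from its definition $\mathcal{M} = U \otimes \mathbb{R}^{d} = \{\sum_{m=1}^{M} e_{m} \otimes w_{m} : w_{m} \in \mathbb{R}^{d}\}$, sums and scalar multiples of elements of $\mathcal{M}$ again lie in $\mathcal{M}$, so it is a finite-dimensional (hence closed) subspace. Consequently the infimum defining $\DMone{X}$ is attained at the orthogonal projection of $X$ onto $\mathcal{M}$ with respect to the Frobenius inner product, and the distance is exactly the Frobenius norm of the orthogonal residual. Writing $\Pi_{\mathcal{M}}$ for this orthogonal projection and $X^{\perp} := X - \Pi_{\mathcal{M}}(X)$ for the residual, I would first record that $\DMone{X} = \lVert X^{\perp}\rVert_{F}$ and $\DMtwo{X} = \lVert X^{\perp}\rVert_{F}^{2}$ for every $X$.

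The second step uses that $\Pi_{\mathcal{M}}$ is a linear map, so that $(Z + Y)^{\perp} = Z^{\perp} + Y^{\perp}$ and $(Z - Y)^{\perp} = Z^{\perp} - Y^{\perp}$. Both claims then reduce to a single expansion of the Frobenius inner product $\langle \cdot, \cdot \rangle_{F}$ on the residuals:
\begin{equation*}
\DMtwo{Z \pm Y} = \lVert Z^{\perp} \pm Y^{\perp}\rVert_{F}^{2} = \DMtwo{Z} + \DMtwo{Y} \pm 2\langle Z^{\perp}, Y^{\perp}\rangle_{F}.
\end{equation*}
Applying the Cauchy--Schwarz inequality to the cross term gives $|\langle Z^{\perp}, Y^{\perp}\rangle_{F}| \leq \lVert Z^{\perp}\rVert_{F}\,\lVert Y^{\perp}\rVert_{F} = \DMone{Z}\,\DMone{Y}$. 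Taking the ``$+$'' sign and upper-bounding the cross term by $+2\DMone{Z}\DMone{Y}$ yields the first inequality; taking the ``$-$'' sign and lower-bounding $-2\langle Z^{\perp}, Y^{\perp}\rangle_{F} \geq -2\DMone{Z}\DMone{Y}$ yields the second.

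There is no serious obstacle here: the whole content is the structural observation that $\mathcal{M}$ is a linear subspace, from which the orthogonal decomposition and the linearity of $\Pi_{\mathcal{M}}$ follow at once. The only point deserving care is confirming that the infimum in the definition of $\DMone{\cdot}$ is genuinely achieved by orthogonal projection, which holds because $\mathcal{M}$ is finite-dimensional and therefore closed; this is what makes $\DMone{\cdot}$ behave like a seminorm. Indeed, an even shorter route avoids projections entirely: subadditivity $\DMone{Z + Y} \leq \DMone{Z} + \DMone{Y}$ together with absolute homogeneity $\DMone{\alpha X} = |\alpha|\,\DMone{X}$ shows $\DMone{\cdot}$ is a seminorm, so squaring the ordinary triangle inequality gives the first claim and squaring the reverse triangle inequality $\DMone{Z - Y} \geq |\DMone{Z} - \DMone{Y}|$ gives the second. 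I would present the projection argument as the primary proof, since it makes the role of Cauchy--Schwarz and the tightness of the cross-term bound transparent.
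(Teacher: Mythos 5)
Your proposal is correct and is essentially the paper's own argument: the paper expands $Z$ and $Y$ in the eigenbasis $(e_m)_{m\in[N]}$ of $\tilde{A}$, identifies $\DMtwo{\cdot}$ with the squared norm of the components along $e_{M+1},\dots,e_N$ (i.e., exactly the orthogonal residual you call $X^{\perp}$), expands the square, and controls the cross term with Cauchy--Schwarz, just as you do via $\Pi_{\mathcal{M}}$. Your coordinate-free phrasing, and the seminorm/reverse-triangle-inequality shortcut you mention in passing, are presentational variants of the same underlying idea rather than a different route.
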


\begin{proof}[Proof of Lemma~\ref{lem1}]
\label{proof_of_lemma2}
cf. Appendix~\ref{supp_sec:lemma1}
\end{proof}

\begin{theorem}[Higher Upper Bound]
\label{thm1}
Under Assumptions~\ref{asp1} and \ref{asp2}, for any $\rho \in (0,1)$, we have $\DMone{\mathbb{E}[X_{2}]} \leq \left(s\lambda + \rho(1 - s\lambda)\right)\DMone{X}$ compared to $\DMone{X_{1}} \leq s\lambda\DMone{X}$ (cf. Theorem 1 of \cite{oono2019graph}). In particular, when $s\lambda < 1$, \skipnode can achieve a higher upper bound, i.e., $s\lambda + \rho(1 - s\lambda) > s\lambda$.
\end{theorem}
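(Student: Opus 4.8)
The plan is to reduce the statement to the single-layer contraction bound of Oono and Suzuki (Theorem 1 of \cite{oono2019graph}) by exploiting linearity of the expectation together with the homogeneity and subadditivity of the distance functional $\DMone{\cdot}$. First I would compute $\mathbb{E}[X_{2}]$ explicitly. Observe that $X_{1} = ReLU(\tilde{A}XW)$ does not involve the random mask $P$, so it is deterministic given $X$ and $W$; the only randomness in $X_{2} = (I - P)X_{1} + PX$ enters through $P$. Under uniform sampling $P$ is diagonal with $P_{ii} \sim Bernoulli(\rho)$, hence $\mathbb{E}[P] = \rho I$. By linearity of expectation, $\mathbb{E}[X_{2}] = (I - \mathbb{E}[P])X_{1} + \mathbb{E}[P]X = (1 - \rho)X_{1} + \rho X$, an affine combination of $X_{1}$ and $X$.

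Next I would use that $\DMone{\cdot}$ is a seminorm. Since $\mathcal{M}$ is a linear subspace of $\mathbb{R}^{N \times d}$ (Definition 1), the distance to $\mathcal{M}$ can be written as $\DMone{Y} = \lVert(\mathrm{Id} - \Pi_{\mathcal{M}})Y\rVert_{F}$, where $\Pi_{\mathcal{M}}$ is the orthogonal projection onto $\mathcal{M}$. Because $\mathrm{Id} - \Pi_{\mathcal{M}}$ is a linear operator, $\DMone{\cdot}$ inherits absolute homogeneity and the triangle inequality from $\lVert\cdot\rVert_{F}$. Applying these to $\mathbb{E}[X_{2}] = (1 - \rho)X_{1} + \rho X$ with $\rho \in (0,1)$ yields $\DMone{\mathbb{E}[X_{2}]} \leq (1 - \rho)\DMone{X_{1}} + \rho\DMone{X}$.

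Finally I would substitute the single-layer bound $\DMone{X_{1}} \leq s\lambda\DMone{X}$ (Theorem 1 of \cite{oono2019graph}, which is where Assumptions~\ref{asp1} and~\ref{asp2} are invoked) to get $\DMone{\mathbb{E}[X_{2}]} \leq \left[(1 - \rho)s\lambda + \rho\right]\DMone{X}$, and then rearrange $(1 - \rho)s\lambda + \rho = s\lambda + \rho(1 - s\lambda)$, matching the claimed coefficient. The comparison is immediate: when $s\lambda < 1$ we have $1 - s\lambda > 0$, so $\rho(1 - s\lambda) > 0$ and therefore $s\lambda + \rho(1 - s\lambda) > s\lambda$.

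I expect no serious technical obstacle here; the proof is short once the right observations are in place. The two points requiring care are (i) confirming that $X_{1}$ is independent of $P$, so that the expectation passes cleanly through the affine combination and the off-diagonal randomness of $P$ plays no role, and (ii) justifying that $\DMone{\cdot}$ is a genuine seminorm, i.e. that the metric projection onto $\mathcal{M}$ is linear because $\mathcal{M}$ is a linear subspace, which is what legitimizes the triangle-inequality step. Lemma~\ref{lem1} is not needed for this bound; it is reserved for the sharper expectation comparison in Theorem~\ref{thm2}.
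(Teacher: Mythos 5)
Your proposal is correct, and it reaches the paper's bound by what is at heart the same argument: write $\mathbb{E}[X_{2}]=(1-\rho)X_{1}+\rho X$, exploit subadditivity and homogeneity of $\DMone{\cdot}$, and then plug in the single-layer contraction $\DMone{X_{1}}\leq s\lambda\DMone{X}$ from Theorem 1 of \cite{oono2019graph}. The only real difference is bookkeeping. The paper works with squared distances: it invokes Lemma~\ref{lem1} to get $\DMtwo{(1-\rho)X_{1}+\rho X}\leq\DMtwo{(1-\rho)X_{1}}+\DMtwo{\rho X}+2\DMone{(1-\rho)X_{1}}\DMone{\rho X}$, completes the square to $\left((1-\rho)s\lambda+\rho\right)^{2}\DMtwo{X}$, and takes a square root; proving Lemma~\ref{lem1} requires expanding $Z$ and $Y$ in the eigenbasis of $\tilde{A}$ and applying Cauchy--Schwarz. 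You instead observe that $\DMone{Y}=\lVert(\mathrm{Id}-\Pi_{\mathcal{M}})Y\rVert_{F}$ is a seminorm (since $\mathcal{M}$ is a linear subspace and the infimum is attained at the orthogonal projection), so the triangle inequality gives $\DMone{\mathbb{E}[X_{2}]}\leq(1-\rho)\DMone{X_{1}}+\rho\DMone{X}$ in one line. These are mathematically equivalent --- the first inequality of Lemma~\ref{lem1} is exactly the squared triangle inequality --- but your route is more economical for this theorem and makes the structure of the bound transparent; the paper's squared formulation earns its keep only in Theorem~\ref{thm2}, where the reverse inequality of Lemma~\ref{lem1} is needed for the lower bound. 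Two trivial nits: $P$ is diagonal by construction, so there is no ``off-diagonal randomness'' to worry about; and, like the paper, you are implicitly using that $X\geq 0$ (as the output of a preceding ReLU) so that Eq.~(\ref{eq:SkipNode}) reduces to $X_{2}=(I-P)X_{1}+PX$, which the paper justifies in the setup preceding the theorems.
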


\begin{proof}[Proof of Theorem \ref{thm1}]
\label{proof1}
cf. Appendix~\ref{supp_sec:theorem2}
\end{proof}

\begin{theorem}[Longer Distance from $\mathcal{M}$]
\label{thm2}
Under Assumptions~\ref{asp1} and \ref{asp2}, when $\rho(\frac{1}{s\lambda} + 1) - 1 > 0$, \skipnode can guarantee a lower bound as $\DMone{\mathbb{E}[X_{2}]} \geq \left( \rho(\frac{1}{s\lambda} + 1) - 1\right)\DMone{X_{1}}$. In particular, when $\rho(\frac{1}{s\lambda} + 1) > 2$, the output of \skipnode is farther away from $\mathcal{M}$, i.e., $\frac{d_{\mathcal{M}}(\mathbb{E}[X_{2}])}{d_{\mathcal{M}}(X_{1})} > 1$.
\end{theorem}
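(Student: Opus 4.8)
The plan is to reduce the statement to a deterministic inequality about $\DMone{(1-\rho)X_1 + \rho X}$ and then bound it below using the reverse inequality of Lemma~\ref{lem1} together with the upper bound $\DMone{X_1} \le s\lambda\,\DMone{X}$ from Theorem~1 of \cite{oono2019graph}. First I would evaluate the expectation explicitly. The mask $P$ is diagonal with independent entries $P_{ii} \sim Bernoulli(\rho)$, and neither $X_1 = ReLU(\tilde{A}XW)$ nor the fixed input $X$ depends on $P$, so $\mathbb{E}[P] = \rho I$ and therefore $\mathbb{E}[X_2] = (1-\rho)X_1 + \rho X$. The task is thus to lower-bound $\DMone{(1-\rho)X_1 + \rho X}$.

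Next I would apply the second (reverse) inequality of Lemma~\ref{lem1}. Writing $\mathbb{E}[X_2] = \rho X - \bigl(-(1-\rho)X_1\bigr)$ and setting $Z = \rho X$, $Y = -(1-\rho)X_1$, the lemma gives $\DMtwo{\mathbb{E}[X_2]} \ge \DMtwo{Z} + \DMtwo{Y} - 2\DMone{Z}\DMone{Y}$. Since $\mathcal{M}$ is a linear subspace, $\DMone{\cdot}$ is absolutely homogeneous and sign-invariant, i.e.\ $\DMone{cW} = |c|\DMone{W}$; hence the right-hand side equals $\rho^2\DMtwo{X} + (1-\rho)^2\DMtwo{X_1} - 2\rho(1-\rho)\DMone{X}\DMone{X_1}$, which is the perfect square $\bigl(\rho\,\DMone{X} - (1-\rho)\DMone{X_1}\bigr)^2$. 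Taking square roots yields $\DMone{\mathbb{E}[X_2]} \ge \bigl|\rho\,\DMone{X} - (1-\rho)\DMone{X_1}\bigr|$.

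Finally I would eliminate $\DMone{X}$ in favor of $\DMone{X_1}$. From Theorem~1 of \cite{oono2019graph} we have $\DMone{X_1} \le s\lambda\,\DMone{X}$, equivalently $\DMone{X} \ge \DMone{X_1}/(s\lambda)$, so that $\rho\,\DMone{X} - (1-\rho)\DMone{X_1} \ge \bigl(\rho(\tfrac{1}{s\lambda}+1) - 1\bigr)\DMone{X_1}$. Under the hypothesis $\rho(\tfrac{1}{s\lambda}+1) - 1 > 0$ this lower bound is nonnegative (strictly positive whenever $\DMone{X_1} > 0$, while the case $\DMone{X_1} = 0$ is trivial), so the argument of the absolute value is itself positive, the bars may be dropped, and we obtain exactly $\DMone{\mathbb{E}[X_2]} \ge \bigl(\rho(\tfrac{1}{s\lambda}+1) - 1\bigr)\DMone{X_1}$. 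The ``in particular'' claim is then immediate: when $\rho(\tfrac{1}{s\lambda}+1) > 2$ the coefficient exceeds $1$, so $\DMone{\mathbb{E}[X_2]}/\DMone{X_1} > 1$.

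The main obstacle I anticipate is the sign bookkeeping around the absolute value: Lemma~\ref{lem1} only controls the \emph{squared} distance from below, so one must verify that $\rho\,\DMone{X} - (1-\rho)\DMone{X_1}$ is genuinely positive before discarding $|\cdot|$. Reassuringly, this positivity is governed by precisely the threshold $\rho(\tfrac{1}{s\lambda}+1) > 1$ appearing in the hypothesis, so no extra assumption is needed. A secondary point to treat carefully is the homogeneity and sign-invariance of $\DMone{\cdot}$ and the resulting perfect-square factorization, both routine but essential for the bound to collapse to the single clean coefficient $\rho(\tfrac{1}{s\lambda}+1) - 1$.
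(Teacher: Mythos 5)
Your proposal is correct and follows essentially the same route as the paper's proof: expand $\mathbb{E}[X_{2}]=(1-\rho)X_{1}+\rho X$, apply the reverse inequality of Lemma~\ref{lem1} to obtain the perfect square $\left(\rho\,\DMone{X}-(1-\rho)\DMone{X_{1}}\right)^{2}$, and substitute $\DMone{X}\geq \DMone{X_{1}}/(s\lambda)$. The only difference is cosmetic --- the paper splits into two explicit sign cases and discards the unfavorable one, whereas you observe directly that the hypothesis $\rho(\frac{1}{s\lambda}+1)-1>0$ forces the positive case, which is a slightly cleaner presentation of the identical argument.
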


\begin{proof}[Proof of Theorem~\ref{thm2}]
\label{proof2}
cf. Appendix~\ref{supp_sec:theorem3}
\end{proof}

\begin{remark}
Theorem~\ref{thm1} indicates that \skipnode can help GCN increase the upper bound of the distance between output features and $\mathcal{M}$ in expectation when $s\lambda < 1$. This bound is better than $\DMone{X_{1}} \leq s\lambda\DMone{X}$ in theory since the Cauchy-Schwarz inequality could be tight when $\{w_m\}$ and $\{w^{\prime}_{m}\}$ have a (nearly) scaling relationship. Theorem~\ref{thm2} underlines \skipnode can enhance GCN to obtain the output that is, in expectation, farther away from $\mathcal{M}$ in the case of $\rho(\frac{1}{s\lambda} + 1) > 2$. The conditions $s\lambda < 1$ (Theorem~\ref{thm1}) and $\rho(\frac{1}{s\lambda} + 1) > 2$ (Theorem~\ref{thm2}) could be easily satisfied in practice: $s < 1$ since the weight matrices are often initialized with small values and norm-constrained to avoid over-fitting; $\lambda < 1$ always holds. Taking the GCN model in the Cora dataset as a typical example, we have $\lambda \approx 0.996$ and $s \approx 0.2$, leading to $s\lambda \approx 0.199 < 1$. In this case, $\rho > 0.34$ suffices to secure $\DMone{\mathbb{E}[X_{2}]} \geq \DMone{X_{1}}$ in Theorem~\ref{thm2}. Besides, keeping $s$ fixed, the larger and denser the graph is ($\lambda$ decreases), the higher the upper bound in Theorem~\ref{thm1} becomes, and the wider the range of $\rho$ satisfying Theorem~\ref{thm2}'s condition goes.
\end{remark}

To further empirically validate the effectiveness of \skipnode, we generate an Erd\H{o}s -- R\'{e}nyi graph~\cite{erdos1959random,gilbert1959random} with $500$ nodes and edge generation probability $0.5$. Then, we vary $s$ and $\rho$ to investigate the performance of \skipnode for alleviating over-smoothing. We conduct $100$ runs for each $(\rho,s)$ combination where the input features and weight matrices are initialized randomly for each run. The averaged results are shown in Figure~\ref{fig:allievat_over_smoothing}. Figure~\ref{fig:allievat_over_smoothing}(a) shows that the log distance ratio $ \log(\frac{\DMone{X^{(l)}_{2}}}{\DMone{X^{(0)}}})$ of \skipnode ($\rho > 0$) is vastly larger than that of vanilla GCN ($\rho = 0$) at all the middle layers. This should be because \skipnode can improve both the exponent and base parts of the convergence speed coefficient $(s\lambda)^L$. Moreover, it demonstrates that the convergence speed decreases with $\rho$ increasing, which is consistent with the upper bound analysis in Theorem~\ref{thm1}, i.e., the base part of the coefficient $(s\lambda)^L$ is increased by $\rho(1-s\lambda)$. Figure~\ref{fig:allievat_over_smoothing}(b) explores the log distance ratio $ \log(\frac{\DMone{X_{2}}}{\DMone{X_{1}}})$ for one layer calculation. We can see the log ratio is always larger than 0 (that is $\log 1$), and on average $\DMone{X_{2}}$ is around 50 times ($e^4$) farther away from $\mathcal{M}$ than $\DMone{X_{1}}$. Figure~\ref{fig:allievat_over_smoothing}(b) also indicates that increasing $\rho$ or decreasing $s$ can expand the gap between $\DMone{X_{2}}$ and $\DMone{X_{1}}$, which is consistent with the obtained lower bound of the ratio in expectation in Theorem~\ref{thm2}, i.e., $\rho(\frac{1}{s\lambda} + 1) - 1$. 


The above analysis supports the effectiveness of the ``skipping'' operation for alleviating over-smoothing. Unlike uniform sampling which samples nodes with equal probability, biased sampling tends to choose high-degree nodes which suffer more from being smoothed. It is analogous to using different $\rho$ for different $P_{ii}$ according to $v_{i}$'s degree. Therefore, we could still maintain the effectiveness of \skipnode with biased sampling. 

\subsubsection{Alleviating gradient vanishing and weight over-decaying}
Here we explain why \skipnode can alleviate the back-propagation-induced and over-smoothing-induced gradient vanishing issues. Assume that the graph signal $\boldsymbol{x}$ is transformed by $\theta_{l}, l \in [L]$ and we have $\boldsymbol{x}^{(l)} = \theta_{l}\boldsymbol{x}^{(l-1)}$ (we omit the feature propagation step since transformation is the main cause of back-propagation-induced gradient vanishing). Therefore, the gradient from the classification loss function $f(\boldsymbol{x}^{(L)})$ to $\boldsymbol{x}^{(0)}$ is:
\begin{small}
    \begin{align*}
    \frac{\partial f(\boldsymbol{x}^{(L)})}{\partial \boldsymbol{x}^{(0)}} &= \frac{\partial f(\boldsymbol{x}^{(L)})}{\partial \boldsymbol{x}^{(L)}} \frac{\partial \boldsymbol{x}^{(L)}}{\partial \boldsymbol{x}^{(L-1)}}\cdots\frac{\partial \boldsymbol{x}^{(1)}}{\partial \boldsymbol{x}^{(0)}}\\
    &=  (\prod_{l=1}^{L}\theta_{l})\frac{\partial f(\boldsymbol{x}^{(L)})}{\partial \boldsymbol{x}^{(L)}}.
\end{align*}
\end{small}
If $|\theta_{l}| <1 $, the successive multiplication of $\theta_{l}$ will converge to zero, leading to (nearly) zero gradient. \skipnode can reduce the times of successive multiplication by letting randomly selected nodes skip several transformations to slow down the convergence speed. The over-smoothing-induced gradient vanishing issue (Theorem~\ref{thm_gradient_vanishing}) is caused by (nearly) zero output which is in turn caused by over-smoothing (cf. Proposition 3 of~\cite{oono2019graph}). \skipnode can also effectively relieve this issue by skipping several layers' calculations in the forward phase of GCNs.

As we have analyzed in Sec.~\ref{subsec:weight-over-decaying}, the model weights over-decaying issue is triggered by gradient vanishing when regularization is used. Since \skipnode can alleviate both over-smoothing and gradient vanishing, the model weights are thus updated by the gradient passed from the classification layer. In contrast to vanilla deep GCNs, \skipnode can keep model weights at reasonable levels.

\section{Experiments}
\label{sec:exp}
\begin{table}[ht]
    \caption{\lwgrevise{Dataset Statistics.}}
    \label{tab:data_sta}
    \centering
    \resizebox{1.0\linewidth}{!}{
    \begin{tabular}{c l c c c c}
        \toprule
        Category & Dataset &  \#Nodes & \#Edges & \#Features\\
        \midrule
        \multirow{3}{*}{Homo.} & Cora & $2,708$ & $5,429$ & $1,433$\\ 
        ~ & Citeseer & $3,327$ & $4,732$ & $3,703$\\
        ~ & Pubmed & $19,717$ & $44,338$ & $500$\\
        \midrule
        \multirow{4}{*}{Hete.} & Chameleon & $2,277$ & $36,101$ & $2,325$\\
        ~ & Cornell & $183$ & $295$ & $1,703$\\
        ~ & Texas & $183$ & $309$ & $1,703$\\
        ~ & Wisconsin & $251$ & $499$ & $1,703$\\
        \midrule
        \multirow{3}{*}{\lwgrevise{Large.}} 
        & \lwgrevise{ogbn-papers100M} & \lwgrevise{$111,059,956$} & \lwgrevise{$1,615,685,872$} & \lwgrevise{$128$} \\
        & ogbn-arxiv & $169,343$ & $1,166,243$ & $128$ \\
        ~ & ogbl-ppa & $ 576,289$ & $ 30,326,273$ & $58$\\
        \bottomrule
    \end{tabular}
    }
\end{table}

In this section, we evaluate \skipnode on various graphs and tasks to \lwgrevise{answer the following research questions:
\begin{itemize}
	\item [\textbf{Q1:}] Is \skipnode effective in alleviating the performance degeneration for deep GCNs? (cf. Sec.~\ref{sec:q1})
	\item [\textbf{Q2:}] Is \skipnode general to be used on different graphs? (cf. Sec.~\ref{sec:q2})
	\item [\textbf{Q3:}] Is \skipnode scalable for handling large-scale graphs? (cf. Sec.~\ref{sec:q3})
	\item [\textbf{Q4:}] Is \skipnode more efficient compared to other strategies? (cf. Sec.~\ref{sec:q4})
	\item [\textbf{Q5:}] What's the role of the large sampling rate $\rho$ in \skipnode? (cf. Sec.~\ref{sec:q5})
\end{itemize}
}
We use \textbf{\skipnodeu} and \textbf{\skipnodeb} to denote \skipnode\footnote{\url{https://github.com/WeigangLu/SkipNode}} with uniform sampling and biased sampling, respectively. \lwgrevise{Without specification, we conduct our experiments using a single NVIDIA TITAN RTX GPU with 24 GB of memory and an Intel(R) Core(TM) i9-10980XE CPU with 62 GB of main memory.}

\subsection{Datasets}
We use 9 graph datasets, and their detailed statistics are described in Table~\ref{tab:data_sta}. These datasets fall into three categories as follows.

\para{Homophilic Graphs.}
\textbf{Cora, Citeseer, and Pubmed}\footnote{\url{https://linqs.soe.ucsc.edu/data}} are citation networks in which each publication is defined by a 0/1-valued word vector indicating the presence or absence of the corresponding dictionary word.

\para{Heterophilic Graphs.}
\textbf{Chameleon}\footnote{\url{http://snap.stanford.edu/data/wikipedia-article-networks.html}} is a network based on the English Wikipedia, representing pages dedicated to the topic of chameleon. Articles are represented by nodes, and mutual links between them are represented by edges. \textbf{Cornell}, \textbf{Texas}, and \textbf{Wisconsin}\footnote{\url{http://www.cs.cmu.edu/afs/cs.cmu.edu/project/theo-11/www/wwkb/}} are subdatasets from the webpage dataset collected from computer science departments of various universities. Each page (node) is represented by the bag-of-words representation and can be classified into one of five categories. 

\para{Large-scale Graphs.}
\textbf{ogbn-papers100M and ogbn-arxiv}\footnote{\url{https://github.com/snap-stanford/ogb}} are both paper citation networks extracted from the Microsoft Academic Graph. The former contains 111 million papers while the latter only includes the Computer Science arxiv papers. A 128-dimensional feature vector is generated for each paper (node) in the dataset by averaging the embeddings of words in the title and abstract. \textbf{ogbl-ppa}\footnotemark[7] is a protein-protein association network. Nodes represent proteins from 58 different species, and edges indicate biological associations between proteins.

\subsection{Tasks}
We evaluate our \skipnode on various graph datasets through different tasks as follows:

\textbullet\
\textbf{Semi-supervised node classification task:} We use three widely-used graphs, including Cora, Citeseer, and Pubmed under public splits~\cite{yang2016revisiting} (20 nodes per class as training data, 500/1000 nodes as validation/test data).

\textbullet\ 
\textbf{Full-supervised node classification task:} We use Cora, Citeseer, Pubmed, Chameleon, Cornell, Texas, and Wisconsin under ten different splits ($60\%$, $20\%$ and $20\%$ for training, validation, and test). 

\textbullet\
\textbf{Node classification task on a large-scale graph:} We use two challenging datasets, i.e., ogbn-arxiv and ogbn-papers100M. We abide by the dataset splits in the GitHub repository\footnotemark[7].

\textbullet\ 
\textbf{Link prediction task}: We evaluate \skipnode on ogbl-ppa and the evaluation metric is the ratio of predicted positive edges that are ranked at the K-th place or above (Hits@K).

For all the numerical results, the best performance w.r.t each backbone model (for each value of layer numbers) is highlighted in \textbf{bold} and the second best is \underline{underlined}.

\begin{table*}[!ht]
    \centering
    \caption{\centering{\lwgrevise{Effectiveness Analysis: Semi-supervised Classification Accuracy $\pm$ Standard Deviation (\%) w.r.t Different Depths.}}}
    \resizebox{1.0\textwidth}{!}{
    \begin{tabular}{p{2.5cm} | p{0.95cm} p{0.95cm} p{0.95cm} p{0.95cm} p{0.95cm} | p{0.95cm} p{0.95cm} p{0.95cm} p{0.95cm} p{0.95cm} | p{0.95cm} p{0.95cm} p{0.95cm} p{0.95cm} p{0.95cm}}
    \toprule
    GNN Arch.  & \multicolumn{5}{c|}{Cora} & \multicolumn{5}{c|}{Citeseer} & \multicolumn{5}{c}{Pubmed} \\

    ~+\texttt{Strategy} 
    &\multicolumn{1}{c}{\centering $L$ = 4} &\multicolumn{1}{c}{\centering $L$ = 8} &\multicolumn{1}{c}{\centering $L$ = 16} &\multicolumn{1}{c}{\centering $L$ = 32} &\multicolumn{1}{c|}{\centering $L$ = 64} 
    &\multicolumn{1}{c}{\centering $L$ = 4} &\multicolumn{1}{c}{\centering $L$ = 8} &\multicolumn{1}{c}{\centering $L$ = 16} &\multicolumn{1}{c}{\centering $L$ = 32} &\multicolumn{1}{c|}{\centering $L$ = 64} 
    &\multicolumn{1}{c}{\centering $L$ = 4} &\multicolumn{1}{c}{\centering $L$ = 8} &\multicolumn{1}{c}{\centering $L$ = 16} &\multicolumn{1}{c}{\centering $L$ = 32} &\multicolumn{1}{c}{\centering $L$ = 64} \\
    \midrule
    GCN & $81.2_{\pm0.7}$ & $78.9_{\pm2.7}$ & $42.2_{\pm9.7}$ & $38.5_{\pm2.8}$ & $31.9_{\pm1.8}$ & $66.5_{\pm1.8}$ & $57.8_{\pm2.2}$ & $47.3_{\pm6.5}$ & $23.3_{\pm5.3}$ & $23.6_{\pm0.9}$ & $77.3_{\pm0.6}$ & $76.6_{\pm0.8}$ & $65.4_{\pm6.7}$ & $42.6_{\pm3.0}$ & $42.0_{\pm2.7}$ \\

    ~ +\dropedge
    & $\underline{81.6_{\pm0.6}}$ & $79.0_{\pm1.3}$ & $44.8_{\pm6.1}$ & $31.9_{\pm0.1}$ & $31.9_{\pm0.0}$ 
    & $67.5_{\pm2.1}$ & $58.9_{\pm1.7}$ & $48.6_{\pm6.3}$ & $23.2_{\pm0.3}$ & $24.4_{\pm0.6}$ &
     $\textbf{77.9}_{\pm0.8}$ & $\underline{77.0_{\pm0.8}}$ & $68.9_{\pm4.1}$ & $40.0_{\pm6.9}$ & $41.2_{\pm1.3}$ \\    
     
    ~ +\dropnodef
    &  $79.9_{\pm1.1}$ & $77.4_{\pm7.7}$ & $37.2_{\pm6.2}$ & $27.9_{\pm6.2}$ & $31.9_{\pm0.0}$ 
    & $64.8_{\pm2.2}$ & $56.0_{\pm4.6}$ & $38.6_{\pm1.7}$ & $20.5_{\pm1.7}$ & $20.1_{\pm0.2}$ 
    & $76.8_{\pm1.0}$ & $76.4_{\pm0.7}$ & $60.5_{\pm4.3}$ & $31.5_{\pm1.2}$ & $18.0_{\pm0.0}$ \\
    
    ~ +\pairnorm
    &  $81.4_{\pm1.1}$ & $78.7_{\pm2.2}$ & $41.0_{\pm5.1}$ & $30.3_{\pm5.1}$ & $31.9_{\pm0.0}$ 
    & $67.8_{\pm1.5}$ & $60.1_{\pm1.9}$ & $49.0_{\pm5.5}$ & $23.1_{\pm0.0}$ & $27.2_{\pm0.5}$ 
    & $77.5_{\pm0.7}$ & $76.4_{\pm1.3}$ & $64.5_{\pm4.7}$ & $39.5_{\pm6.0}$ & $42.1_{\pm1.7}$ \\
    
    ~ +\skipconnection
    &  $81.5_{\pm0.8}$ & $78.7_{\pm1.7}$ & $40.3_{\pm4.2}$ & $26.8_{\pm7.7}$ & $31.9_{\pm0.0}$ 
    & $66.9_{\pm1.1}$ & $58.5_{\pm2.2}$ & $45.8_{\pm4.8}$ & $22.6_{\pm1.5}$ & $23.2_{\pm0.1}$ 
    & $77.2_{\pm0.8}$ & $76.1_{\pm0.8}$ & $62.7_{\pm2.3}$ & $37.2_{\pm9.7}$ & $40.6_{\pm0.6}$\\
    
    ~ +\textbf{\skipnodeu}
    &  $\underline{81.6_{\pm0.8}}$ & $\underline{79.1_{\pm1.3}}$ & $\underline{76.2_{\pm1.2}}$ & $\underline{55.1_{\pm2.7}}$ & $\underline{43.2_{\pm2.0}}$ 
    & $\textbf{69.6}_{\pm0.5}$ & $\textbf{67.2}_{\pm1.1}$ & $\underline{62.1_{\pm2.2}}$ & $\underline{50.5_{\pm1.6}}$ & $\underline{39.6_{\pm3.0}}$ 
    & $77.4_{\pm0.5}$ & $76.8_{\pm0.7}$ & $\underline{76.1_{\pm0.8}}$ & $\underline{72.9_{\pm1.4}}$ & $\underline{48.6_{\pm0.9}}$ \\

    ~ +\textbf{\skipnodeb}
    & $\textbf{82.0}_{\pm0.7}$ & $\textbf{80.4}_{\pm0.8}$ & $\textbf{78.1}_{\pm0.3}$ & $\textbf{58.1}_{\pm0.9}$ & $\textbf{43.4}_{\pm1.1}$ 
    & $\underline{68.8_{\pm1.2}}$ & $\underline{67.0_{\pm1.4}}$ & $\textbf{64.9}_{\pm0.8}$ & $\textbf{60.9}_{\pm1.4}$ & $\textbf{42.1}_{\pm1.2}$ 
    & $\underline{77.5_{\pm0.7}}$ & $\textbf{77.3}_{\pm0.5}$ & $\textbf{76.6}_{\pm0.6}$ & $\textbf{74.6}_{\pm1.0}$ & $\textbf{50.8}_{\pm0.8}$ \\
    
    \midrule

    GAT & $80.4_{\pm0.9}$ & $74.4_{\pm1.6}$ & $31.9_{\pm0.1}$ & $31.9_{\pm0.0}$ & $31.9_{\pm0.0}$ & $68.1_{\pm1.4}$ & $56.9_{\pm2.3}$ & $41.1_{\pm6.0}$ & $25.9_{\pm4.1}$ & $20.8_{\pm4.3}$ & $77.0_{\pm0.7}$ & $73.5_{\pm2.7}$ & $43.0_{\pm3.4}$ & $41.2_{\pm0.9}$ & $41.4_{\pm1.8}$ \\

    ~ +\dropedge
    & $\underline{81.2_{\pm0.9}}$ & $73.1_{\pm2.5}$ & $31.9_{\pm0.0}$ & $31.9_{\pm0.0}$ & $31.9_{\pm0.0}$ & $\underline{68.2_{\pm1.8}}$ & $56.7_{\pm2.8}$ & $42.1_{\pm8.7}$ & $22.6_{\pm4.8}$ & $20.8_{\pm4.3}$ & $77.5_{\pm0.7}$ & $\textbf{76.0}_{\pm0.7}$ & $41.3_{\pm4.6}$ & $\underline{41.9_{\pm3.1}}$ & $40.0_{\pm2.7}$ \\

    ~ +\dropnodef
    & $78.8_{\pm1.0}$ & $64.9_{\pm11.8}$ & $26.6_{\pm4.6}$ & $31.9_{\pm0.0}$ & $31.9_{\pm0.0}$ & $65.4_{\pm1.9}$ & $52.5_{\pm5.5}$ & $36.6_{\pm2.1}$ & $21.5_{\pm2.6}$ & $21.8_{\pm4.0}$ & $76.1_{\pm0.8}$ & $74.3_{\pm1.9}$ & $36.0_{\pm9.0}$ & $36.5_{\pm5.3}$ & $18.0_{\pm0.0}$ \\
    
    ~ +\pairnorm
    & $80.8_{\pm1.1}$ & $71.9_{\pm3.2}$ & $29.1_{\pm5.1}$ & $31.9_{\pm0.0}$ & $31.9_{\pm0.0}$ & $68.1_{\pm1.4}$ & $56.8_{\pm2.6}$ & $43.1_{\pm6.7}$ & $22.1_{\pm3.7}$ & $19.8_{\pm3.4}$ & $77.2_{\pm1.1}$ & $73.7_{\pm1.4}$ & $38.9_{\pm7.0}$  & $40.8_{\pm0.5}$ & $38.6_{\pm0.8}$\\
    
    ~ +\skipconnection
    & $79.8_{\pm1.1}$ & $69.4_{\pm3.7}$ & $28.0_{\pm5.4}$ & $31.9_{\pm0.0}$ & $31.9_{\pm0.0}$ & $68.1_{\pm1.4}$ & $55.4_{\pm2.5}$ & $42.7_{\pm4.6}$ & $20.4_{\pm3.6}$ & $20.9_{\pm4.3}$ & $76.9_{\pm1.1}$ & $73.5_{\pm2.1}$ & $38.9_{\pm7.0}$ & $40.8_{\pm0.2}$ & $41.3_{\pm0.4}$\\
    
    ~ +\textbf{\skipnodeu}
    & $\textbf{81.6}_{\pm0.9}$ & $\textbf{79.1}_{\pm2.5}$ & $\underline{46.7_{\pm7.0}}$ & $\underline{43.8_{\pm9.2}}$ & $\underline{43.5_{\pm7.3}}$ 
    & $\underline{68.2_{\pm1.3}}$ & $\underline{65.8_{\pm1.4}}$ & $\underline{62.4_{\pm1.8}}$ & $\underline{32.8_{\pm5.1}}$ & $\underline{37.5_{\pm2.1}}$ 
    & $\underline{77.6_{\pm1.3}}$ & $\underline{75.7_{\pm1.0}}$ & $\underline{64.4_{\pm4.3}}$ & $\underline{41.9_{\pm2.1}}$ & $\underline{46.6_{\pm2.8}}$ \\

    ~ +\textbf{\skipnodeb}
    & $\underline{81.2_{\pm0.5}}$ & $\underline{77.8_{\pm1.3}}$ & $\textbf{79.3}_{\pm8.3}$ & $\textbf{59.9}_{\pm1.9}$ 
    & $\textbf{61.4}_{\pm2.8}$ & $\textbf{68.4}_{\pm1.1}$ & $\textbf{66.9}_{\pm1.8}$ & $\textbf{63.3}_{\pm1.0}$ & $\textbf{58.7}_{\pm1.4}$ & $\textbf{53.4}_{\pm1.9}$ 
    & $\textbf{78.0}_{\pm1.0}$ & $\textbf{76.0}_{\pm1.7}$ & $\textbf{69.6}_{\pm1.4}$ & $\textbf{56.0}_{\pm5.8}$ & $\textbf{52.4}_{\pm3.5}$ \\

    \midrule

    GraphSAGE & $80.6_{\pm1.3}$ & $73.8_{\pm2.1}$ & $57.2_{\pm4.7}$ & $31.9_{\pm0.0}$ & $31.9_{\pm0.1}$ & $64.1_{\pm1.8}$ & $54.1_{\pm3.9}$ & $43.0_{\pm9.9}$ & $22.0_{\pm5.4}$ & $19.1_{\pm1.8}$ & $76.6_{\pm0.9}$ & $73.2_{\pm1.9}$ & $64.6_{\pm4.7}$ & $40.7_{\pm0.0}$ & $40.7_{\pm0.0}$ \\

    ~ +\dropedge
    & $79.7_{\pm0.9}$ & $74.5_{\pm2.0}$ & $57.6_{\pm11.0}$ & $31.9_{\pm0.0}$ & $31.9_{\pm0.0}$ & $64.8_{\pm2.4}$ & $53.5_{\pm3.6}$ & $49.5_{\pm5.8}$ & $33.1_{\pm1.6}$ & $22.8_{\pm2.0}$ & $77.0_{\pm0.8}$ & $74.1_{\pm1.0}$ & $65.0_{\pm6.5}$ & $40.7_{\pm0.0}$ & $40.8_{\pm0.2}$ \\

    ~ +\dropnodef
    &  $77.9_{\pm0.8}$ & $72.0_{\pm2.2}$ & $50.3_{\pm13.1}$ & $27.7_{\pm4.0}$ & $28.2_{\pm3.3}$ & $62.0_{\pm1.7}$ & $53.0_{\pm3.3}$ & $35.7_{\pm11.3}$ & $20.3_{\pm1.8}$ & $21.3_{\pm1.9}$ & $75.6_{\pm0.7}$ & $71.6_{\pm3.7}$ & $57.1_{\pm9.3}$ & $40.9_{\pm2.0}$ & $40.7_{\pm2.3}$ \\
    
    ~ +\pairnorm
    & $79.2_{\pm0.9}$ & $70.8_{\pm2.8}$ & $52.8_{\pm9.4}$ & $31.9_{\pm0.0}$ & $31.9_{\pm0.0}$ & $64.1_{\pm1.9}$ & $54.3_{\pm4.1}$ & $46.7_{\pm6.3}$ & $20.1_{\pm5.0}$ & $19.1_{\pm1.8}$ & $76.6_{\pm0.9}$ & $74.0_{\pm1.1}$ & $66.8_{\pm4.4}$ & $40.7_{\pm0.0}$ & $40.7_{\pm0.0}$ \\
    
    ~ +\skipconnection
    & $79.7_{\pm0.8}$ & $71.1_{\pm2.7}$ & $45.7_{\pm12.0}$ & $31.9_{\pm0.0}$ & $31.9_{\pm0.0}$ & $63.6_{\pm1.5}$ & $51.7_{\pm6.6}$ & $42.4_{\pm8.7}$ & $18.3_{\pm0.3}$ & $19.1_{\pm1.8}$ & $75.9_{\pm1.1}$ & $73.7_{\pm2.2}$ & $63.7_{\pm4.1}$ & $40.7_{\pm0.0}$ & $40.7_{\pm0.0}$ \\
        
    ~ +\textbf{\skipnodeu}
    & $\textbf{81.5}_{\pm0.4}$ & $\underline{78.8_{\pm1.1}}$ & $\underline{66.8_{\pm1.0}}$ & $\underline{63.6_{\pm4.7}}$ & $\underline{41.8_{\pm3.2}}$ & $\textbf{68.5}_{\pm1.2}$ & $\underline{65.6_{\pm1.5}}$ & $\underline{62.0_{\pm1.4}}$ & $\underline{51.9_{\pm3.1}}$ & $\underline{28.9_{\pm2.7}}$ & $\textbf{77.4}_{\pm1.4}$ & $\underline{75.7_{\pm0.5}}$ & $\underline{74.7_{\pm0.7}}$ & $\underline{54.1_{\pm1.7}}$ & $\underline{41.1_{\pm1.8}}$ \\

    ~ +\textbf{\skipnodeb}
    & $\underline{81.4_{\pm0.5}}$ & $\textbf{79.2}_{\pm0.7}$ & $\textbf{77.5}_{\pm1.0}$ & $\textbf{68.2}_{\pm3.9}$ & $\textbf{54.3}_{\pm2.6}$ & $\underline{67.9_{\pm0.8}}$ & $\textbf{66.9}_{\pm0.9}$ & $\textbf{64.0}_{\pm1.1}$ & $\textbf{60.5}_{\pm1.5}$ & $\textbf{59.4}_{\pm1.7}$ & $\underline{77.3_{\pm0.8}}$ & $\textbf{76.0}_{\pm0.3}$ & $\textbf{74.8}_{\pm0.5}$ & $\textbf{58.0}_{\pm0.7}$ & $\textbf{53.3}_{\pm0.9}$ \\

    \midrule
    
    JKNet 
    & $81.1_{\pm0.9}$ & $\underline{81.8_{\pm0.6}}$ & $\underline{81.2_{\pm1.0}}$ & $\textbf{81.6}_{\pm1.2}$ & $\underline{80.5_{\pm1.0}}$ 
    & $67.5_{\pm1.2}$ & $67.4_{\pm0.8}$ & $66.3_{\pm1.0}$ & $66.4_{\pm1.3}$ & $\underline{66.1_{\pm1.6}}$ 
    & $77.0_{\pm0.7}$ & $\underline{77.1_{\pm0.8}}$ & $77.3_{\pm0.6}$ & $77.4_{\pm0.7}$ & $77.6_{\pm0.4}$\\
    
    ~ +\dropedge 
    & $81.3_{\pm1.3}$ & $81.5_{\pm1.6}$ & $81.1_{\pm1.1}$ & $81.3_{\pm1.1}$ & $79.7_{\pm1.6}$ 
    & $68.2_{\pm1.4}$ & $68.4_{\pm2.7}$ & $68.0_{\pm1.9}$ & $67.3_{\pm2.5}$ & $65.5_{\pm1.3}$ 
    & $77.3_{\pm1.1}$ & $76.8_{\pm1.1}$ & $77.1_{\pm0.9}$ & $77.6_{\pm1.3}$ & $77.7_{\pm1.5}$ \\
    
    ~ +\dropnodef
    & $79.0_{\pm1.2}$ & $76.0_{\pm1.3}$ & $71.8_{\pm1.5}$ & $72.5_{\pm1.0}$ & $73.4_{\pm1.5}$ 
    & $65.9_{\pm2.0}$ & $63.0_{\pm2.2}$ & $63.3_{\pm1.1}$ & $60.5_{\pm1.3}$ & $57.8_{\pm1.3}$ 
    & $74.1_{\pm1.2}$ & $74.8_{\pm1.6}$ & $74.6_{\pm0.9}$ & $74.3_{\pm0.9}$ & $73.7_{\pm0.9}$ \\
    
    ~ +\pairnorm
    & $81.5_{\pm1.2}$ & $81.4_{\pm0.9}$ & $81.1_{\pm1.4}$ & $81.0_{\pm1.2}$ & $79.4_{\pm1.2}$ 
    & $68.0_{\pm1.9}$ & $68.0_{\pm2.0}$ & $68.5_{\pm1.8}$ & $67.4_{\pm2.6}$ & $65.4_{\pm1.1}$ 
    & $77.5_{\pm1.1}$ & $\underline{76.9_{\pm1.6}}$ & $\underline{77.4_{\pm0.8}}$ & $77.9_{\pm1.4}$ & $77.9_{\pm0.8}$ \\
    
    ~ +\skipconnection
    & $80.6_{\pm1.7}$ & $73.4_{\pm1.3}$ & $73.4_{\pm1.4}$ & $73.3_{\pm1.4}$ & $72.9_{\pm1.9}$ 
    & $67.2_{\pm2.2}$ & $67.5_{\pm0.9}$ & $67.9_{\pm1.5}$ & $67.3_{\pm1.7}$ & $65.7_{\pm1.5}$ 
    & $77.1_{\pm1.1}$ & $76.3_{\pm1.4}$ & $76.2_{\pm1.3}$ & $76.8_{\pm1.3}$ & $77.7_{\pm0.9}$ \\
    
    ~ +\textbf{\skipnodeu}
    & $\textbf{81.7}_{\pm0.9}$ & $\textbf{82.1}_{\pm1.2}$ & $\textbf{81.8}_{\pm0.9}$ & $\underline{81.5_{\pm1.1}}$ & $\textbf{80.9}_{\pm1.1}$ 
    & $\textbf{70.0}_{\pm0.9}$ & $\underline{68.8_{\pm1.2}}$ & $\underline{69.0_{\pm0.8}}$ & $\textbf{69.0}_{\pm1.1}$ & $\textbf{67.6}_{\pm1.3}$ 
    & $\textbf{78.3}_{\pm0.7}$ & $\textbf{77.2}_{\pm0.7}$ & $\textbf{77.6}_{\pm1.1}$ & $\textbf{78.2}_{\pm0.9}$ & $\textbf{78.9}_{\pm0.4}$ \\

    ~ +\textbf{\skipnodeb}
    & $\underline{81.6_{\pm0.5}}$ & $81.6_{\pm1.1}$ & $\underline{81.2_{\pm0.4}}$ & $81.3_{\pm0.9}$ & $80.4_{\pm1.1}$ 
    & $\underline{69.3_{\pm1.0}}$ & $\textbf{69.4}_{\pm1.1}$ & $\textbf{69.2}_{\pm0.6}$ & $\underline{67.6_{\pm1.1}}$ & $\underline{66.1_{\pm1.4}}$ 
    & $\underline{78.0_{\pm0.7}}$ & $\underline{77.1_{\pm1.0}}$ & $\underline{77.4_{\pm0.8}}$ & $\underline{78.0_{\pm0.8}}$ & $\underline{78.2_{\pm0.3}}$\\
    
    \midrule
    
   InceptGCN 
    & $79.2_{\pm0.7}$ & $79.7_{\pm0.6}$ & $79.8_{\pm0.7}$ & $80.0_{\pm0.4}$ & $\underline{80.2_{\pm0.6}}$ 
    & $69.9_{\pm0.8}$ & $70.0_{\pm0.9}$ & $\underline{69.7_{\pm1.3}}$ & $69.8_{\pm0.6}$ & $69.4_{\pm0.7}$ 
    & $77.0_{\pm0.9}$ & $76.5_{\pm0.8}$ & $76.6_{\pm0.6}$ & $76.6_{\pm0.5}$ & $76.5_{\pm0.4}$\\
    
    ~ +\dropedge 
    & $79.0_{\pm1.1}$ & $79.8_{\pm1.3}$ & $80.1_{\pm0.9}$ & $79.8_{\pm0.5}$ & $79.6_{\pm0.4}$ 
    & $69.6_{\pm1.0}$ & $70.2_{\pm0.8}$ & $\underline{69.7_{\pm0.9}}$ & $69.2_{\pm0.6}$ & $\underline{69.7_{\pm0.6}}$ 
    & $76.8_{\pm0.9}$ & $77.4_{\pm0.8}$ & $77.4_{\pm0.5}$ & $77.8_{\pm0.4}$ & $77.8_{\pm0.6}$ \\
    
    ~ +\dropnodef
    & $78.9_{\pm1.3}$ & $78.5_{\pm0.9}$ & $78.0_{\pm0.8}$ & $79.0_{\pm0.9}$ & $79.6_{\pm0.7}$ 
    & $67.0_{\pm0.9}$ & $68.3_{\pm0.7}$ & $67.5_{\pm0.9}$ & $68.6_{\pm1.1}$ & $68.6_{\pm0.8}$ 
    & $75.6_{\pm1.0}$ & $74.6_{\pm0.7}$ & $76.3_{\pm0.8}$ & $76.2_{\pm0.7}$ & $76.3_{\pm0.4}$ \\
    
    ~ +\pairnorm
    & $79.1_{\pm1.3}$ & $79.9_{\pm0.7}$ & $79.2_{\pm1.1}$ & $79.8_{\pm1.0}$ & $79.3_{\pm0.7}$ 
    & $69.3_{\pm1.1}$ & $70.3_{\pm0.7}$ & $69.6_{\pm1.0}$ & $69.5_{\pm0.8}$ & $69.1_{\pm0.5}$ 
    & $74.2_{\pm1.0}$ & $75.4_{\pm0.7}$ & $76.4_{\pm0.6}$ & $77.6_{\pm0.1}$ & $77.9_{\pm0.5}$ \\
    
    ~ +\skipconnection
    & $78.4_{\pm1.2}$ & $78.5_{\pm1.3}$ & $78.4_{\pm0.6}$ & $79.0_{\pm0.7}$ & $79.7_{\pm0.5}$ 
    & $69.0_{\pm0.9}$ & $69.1_{\pm0.7}$ & $69.0_{\pm0.7}$ & $68.9_{\pm0.7}$ & $68.0_{\pm0.9}$ 
    & $77.1_{\pm0.9}$ & $77.2_{\pm0.7}$ & $\underline{77.8_{\pm0.6}}$ & $75.6_{\pm0.1}$ & $75.7_{\pm0.4}$ \\
    
    ~ +\textbf{\skipnodeu}
    & $\textbf{79.7}_{\pm0.6}$ & $\textbf{80.6}_{\pm0.4}$ & $\textbf{80.9}_{\pm0.4}$ & $\textbf{81.2}_{\pm0.5}$ & $\textbf{81.7}_{\pm0.5}$ 
    & $\textbf{70.3}_{\pm0.8}$ & $\textbf{70.7}_{\pm0.6}$ & $\textbf{70.2}_{\pm0.8}$ & $\textbf{70.5}_{\pm0.8}$ & $\textbf{70.0}_{\pm0.9}$ 
    & $\textbf{77.8}_{\pm0.6}$ & $\textbf{77.6}_{\pm0.5}$ & $\textbf{77.9}_{\pm0.2}$ & $\textbf{78.4}_{\pm0.3}$ & $\textbf{78.8}_{\pm0.6}$ \\

    ~ +\textbf{\skipnodeb}
    & $\underline{79.4_{\pm0.4}}$ & $\underline{80.0_{\pm0.3}}$ & $\underline{80.3_{\pm0.2}}$ & $\underline{80.6_{\pm0.4}}$ & $\underline{80.2_{\pm0.3}}$ 
    & $\underline{70.0_{\pm0.6}}$ & $\underline{70.4_{\pm0.5}}$ & $\underline{69.7_{\pm0.6}}$ & $\underline{69.9_{\pm0.5}}$ & $\underline{69.7_{\pm0.6}}$ 
    & $\underline{77.4_{\pm0.4}}$ & $\underline{77.5_{\pm0.4}}$ & $77.6_{\pm0.2}$ & $\underline{77.9_{\pm0.3}}$ & $\underline{78.2_{\pm0.5}}$ \\
    
    \midrule
    GCNII 
     & $82.6_{\pm1.0}$ & $\underline{83.5_{\pm0.9}}$ & $\underline{83.9_{\pm1.3}}$ & $84.1_{\pm0.9}$ & $85.3_{\pm1.3}$ 
     & $68.0_{\pm1.5}$ & $70.8_{\pm0.7}$ & $72.0_{\pm1.5}$ & $72.1_{\pm1.1}$ & $73.3_{\pm1.0}$ 
     & $77.0_{\pm0.8}$ & $77.9_{\pm1.1}$ & $79.8_{\pm0.5}$ & $\underline{80.1_{\pm0.2}}$ & $80.1_{\pm0.2}$ \\

    ~ +\dropedge
     & $82.1_{\pm0.9}$ & $83.2_{\pm1.0}$ & $83.5_{\pm1.6}$ & $83.4_{\pm0.6}$ & $83.0_{\pm0.9}$ 
     & $68.7_{\pm1.4}$ & $70.4_{\pm1.4}$ & $71.2_{\pm1.6}$ & $72.3_{\pm1.2}$ & $72.5_{\pm0.8}$ 
     & $\textbf{77.4}_{\pm0.5}$ & $\textbf{78.3}_{\pm0.5}$ & $79.6_{\pm0.9}$ & $\underline{80.1_{\pm0.3}}$ & $80.0_{\pm0.2}$ \\
    
    ~ +\dropnodef
     & $82.4_{\pm1.0}$ & $81.2_{\pm0.9}$ & $80.2_{\pm1.4}$ & $82.4_{\pm0.7}$ & $82.8_{\pm0.8}$ 
     & $65.8_{\pm1.2}$ & $67.0_{\pm0.9}$ & $68.8_{\pm1.1}$ & $70.7_{\pm0.7}$ & $71.2_{\pm1.0}$ 
     & $75.8_{\pm1.2}$ & $77.5_{\pm1.0}$ & $79.2_{\pm0.7}$ & $79.5_{\pm0.4}$ & $79.6_{\pm0.5}$ \\
    
    ~ +\pairnorm
     & $\underline{82.9_{\pm1.5}}$ & $83.4_{\pm0.8}$ & $83.0_{\pm0.9}$ & $83.5_{\pm1.3}$ & $83.8_{\pm1.3}$ 
     & $69.0_{\pm1.5}$ & $70.8_{\pm0.7}$ & $71.1_{\pm1.5}$ & $72.1_{\pm1.1}$ & $72.3_{\pm1.0}$ 
     & $77.0_{\pm0.8}$ & $77.9_{\pm1.1}$ & $80.0_{\pm0.5}$ & $\underline{80.1_{\pm0.2}}$ & $\underline{80.2_{\pm0.2}}$ \\
    
    ~ +\skipconnection
     &   \multicolumn{1}{c}{\centering -$^{*}$} & \multicolumn{1}{c}{\centering -} & \multicolumn{1}{c}{\centering -} &  \multicolumn{1}{c}{\centering -} &  \multicolumn{1}{c|}{\centering -} &   \multicolumn{1}{c}{\centering -} &   \multicolumn{1}{c}{\centering -} &  \multicolumn{1}{c}{\centering -}  &  \multicolumn{1}{c}{\centering -} &   \multicolumn{1}{c|}{\centering -} &    \multicolumn{1}{c}{\centering -} &    \multicolumn{1}{c}{\centering -} &    \multicolumn{1}{c}{\centering -} &    \multicolumn{1}{c}{\centering -} &  \multicolumn{1}{c}{\centering -} \\
        
    ~ +\textbf{\skipnodeu}
     & $\textbf{83.2}_{\pm1.4}$ & $\textbf{83.9}_{\pm1.0}$ & $\textbf{84.5}_{\pm0.9}$ & $\textbf{84.8}_{\pm0.5}$ & $\textbf{85.7}_{\pm0.4}$ 
     & $\textbf{70.8}_{\pm1.3}$ & $\textbf{71.6}_{\pm1.1}$ & $\textbf{72.9}_{\pm1.1}$ & $\textbf{73.5}_{\pm0.8}$ & $\textbf{74.0}_{\pm1.0}$ 
     & $\underline{77.3_{\pm0.7}}$ & $\textbf{78.3}_{\pm0.7}$ & $\textbf{80.7}_{\pm0.6}$ & $\textbf{80.3}_{\pm0.5}$ & $\textbf{80.6}_{\pm0.2}$ \\

    ~ +\textbf{\skipnodeb}
     & $\underline{82.9_{\pm1.0}}$ & $\underline{83.5_{\pm0.9}}$ & $\underline{83.9_{\pm1.1}}$ & $\underline{84.3_{\pm0.5}}$ & $\underline{85.4_{\pm0.7}}$ 
     & $\underline{70.2_{\pm0.9}}$ & $\underline{71.2_{\pm1.6}}$ & $\underline{72.5_{\pm1.4}}$ & $\underline{72.9_{\pm0.7}}$ & $\underline{73.7_{\pm1.0}}$ 
     & $77.0_{\pm0.9}$ & $\underline{78.0_{\pm0.7}}$ & $\underline{80.4_{\pm0.6}}$ & $\underline{80.1_{\pm0.4}}$ & $\underline{80.2_{\pm0.4}}$ \\
   
    \bottomrule
    
    \addlinespace
    \multicolumn{16}{l}{$^{*}$\footnotesize{GCNII is naturally equipped with \skipconnection.}}
    
    \end{tabular}
    }
  \label{tab:semi} 
\end{table*}

\subsection{\lwgrevise{Effectiveness Study (Q1)}}
\label{sec:q1}
\lwgrevise{
In this section, we evaluate the effectiveness of our \skipnode by applying it to 6 GCN models with different layers within the range of $\{4, 8, 16, 32, 64\}$, which is the standard layer selection as established in~\cite{gcnii}, under the semi-supervised node classification task.}

\para{\lwgrevise{Backbone Models and Configurations.}} 
\lwgrevise{
We select GCN~\cite{gcn}, GAT~\cite{gat}, GraphSAGE~\cite{sage}, JKNet~\cite{jknet}, InceptGCN~\cite{inceptgcn}, and GCNII~\cite{gcnii} as backbone models. We first perform a grid search for each backbone model and select the hyperparameters that yield the best accuracy on the validation set of each benchmark. Except for their suggested configurations, we conduct a search for the dropout rate in $\{0, 0.05, 0.1, \ldots, 0.8\}$, the weight decay rate in $\{5e-4, 5e-7, 5e-9\}$, and the learning rate in $\{0.01, 0.05, 0.1\}$. The hidden dimensionality is set to 64.}

\para{\lwgrevise{Strategies and Configurations.}}
\lwgrevise{We employ several comparing strategies, including \texttt{DropEdge}~\cite{dropedge}, \texttt{DropNode-F}~\cite{dropnode_f}, \texttt{PairNorm}~\cite{pairnorm}, and \texttt{SkipConnect\-ion}~\cite{residual}. For each backbone model with its the best hyperparameters, we apply all the comparing strategies while only tuning the strategy-related hyperparameters and the dropout rate. The other hyperparameters are kept fixed. Regarding \skipnode, \dropedge, and \dropnodef, we explore their sampling rates in $\{0, 0.05, 0.1, \ldots, 0.8, 0.9\}$. For \pairnorm, we directly utilize the suggested hyperparameters provided in its code.}

\para{\lwgrevise{Performance Comparison.}}
\lwgrevise{
Table~\ref{tab:semi} illustrates the impact of increasing the number of layers ($L$) on the performance of various models across the three datasets. It is evident that, for models designed for shallow architectures (such as GCN, GAT, and GraphSAGE), increasing $L$ leads to significant performance declines. Among the different strategies, namely \dropedge, \pairnorm, and \skipconnection, it is observed that they can only maintain satisfactory performance when used in shallow architectures with $L = 4$ or 8. However, as the number of layers increases to 16 or higher, significant performance drops become apparent. Conversely, the \dropnodef strategy consistently exhibits inferior performance across all cases. This can be attributed to the all-feature removing operation which easily induces zero outputs. It is worth noting that certain methods display zero standard deviations in the results (e.g., the 64-layer GCN with \dropedge on the Cora dataset). This occurs when the final outputs are all zero, resulting in trivial predictions. This observation aligns with the analysis presented in Remark~\ref{remark}. Our \skipnode consistently improves the performance of all models with various $L$. For shallow-architecture models such as GCN, GAT, and GraphSAGE, our \skipnodeb tends to achieve better performance compared to \skipnodeu. It is because higher-degree nodes are more susceptible to over-smoothing~\cite{gcnii}. For deep-architecture models such as JKNet, InceptGCN, and GCNII, which exhibit a lower susceptibility to over-smoothing, the improvement produced by \skipnodeu also stems from its anti-overfitting effect, as \skipnodeu serves as valuable data augmentation for each layer. Unlike \dropedge that randomly removes potentially valuable edges, \skipnodeu only manipulates the learned representations. For \dropnodes which requires an odd model depth and \dropmessage which modifies the inner message-passing operation of different GCN models, we specifically compare our \skipnode against them in Table~\ref{tab:comp_all_strategys}. Our \skipnode still emerges as the most effective approach in alleviating performance degeneration. It consistently outperforms all other strategies, achieving the highest accuracy. Note that most methods suffer from accuracy loss when we increase the depth. This is because we follow previous work~\cite{gcnii} to choose much higher depth values than usual settings (typically 2\textasciitilde{}3) of GNNs. This ensures that the impact of over-smoothing is obvious so that the research conclusions hold more pervasively. An example for demonstrating \skipnode's usefulness: the 2-layer GraphSAGE achieves 67.2\% on Citeseer, while the performance of its 4-layer counterpart decreases to 64.1\%. In contrast, the 4-layer GraphSAGE equipped with \skipnodeu can reach 68.5\%.
}

\begin{table}[!htbp]
    \centering
    \caption{\centering{\lwgrevise{Performance Comparison with \dropnodes and \dropmessage.}}}
    \begin{tabular}{ c |c c c c}
    \toprule
     Strategy  & L = 3 & L = 5 & L = 7 & L = 9\\
    \midrule
     - & $81.7_{\pm0.9}$ & $80.6_{\pm0.9}$ & $79.4_{\pm4.3}$ & $77.2_{\pm6.8}$ \\
     
     \dropnodes & $81.1_{\pm1.4}$ & $79.6_{\pm2.3}$ & $46.7_{\pm6.2}$ & $36.2_{\pm7.5}$ \\
     \dropmessage & $82.1_{\pm0.9}$ & $81.1_{\pm1.1}$ & $79.7_{\pm4.5}$ & $77.6_{\pm5.9}$ \\
     \textbf{\skipnodeu} & $\textbf{82.6}_{\pm0.6}$ & $\underline{81.4_{\pm0.5}}$ & $\underline{80.1_{\pm0.9}}$ & $\underline{78.4_{\pm1.1}}$ \\
     \textbf{\skipnodeb} & $\underline{82.4_{\pm0.8}}$ & $\textbf{81.7}_{\pm0.8}$ & $\textbf{81.2}_{\pm0.7}$ & $\textbf{80.2}_{\pm0.9}$ \\
    \bottomrule
    \end{tabular}
    \label{tab:comp_all_strategys} 
\end{table}

\renewcommand{\arraystretch}{}
\begin{table*}[!bpht]
    \centering
    \caption{\centering{\lwgrevise{Generalizability Analysis: Full-supervised Classification Accuracy $\pm$ Standard Deviation (\%).}}}
    \resizebox{1\linewidth}{!}{
    \begin{tabular}{l | c c c c c c c | l |c c c c c c c }
         \toprule 
         GNN Arch. & \multirow{2}{*}{Cora} & \multirow{2}{*}{Cit.} & \multirow{2}{*}{Pub.} & \multirow{2}{*}{Cham.} & \multirow{2}{*}{Corn.} & \multirow{2}{*}{Tex.} & \multirow{2}{*}{Wis.} & GNN Arch. & \multirow{2}{*}{Cora} & \multirow{2}{*}{Cit.} & \multirow{2}{*}{Pub.} & \multirow{2}{*}{Cham.} & \multirow{2}{*}{Corn.} & \multirow{2}{*}{Tex.} & \multirow{2}{*}{Wis.} \\
         
         ~+\texttt{Strategy} & ~ & ~ & ~ & ~ & ~ & ~ & ~ & ~+\texttt{Strategy} & ~ & ~ & ~ & ~ & ~ & ~ & ~ \\
         
        \midrule
         GCN 
         & $86.4_{\pm1.6}$ & $76.5_{\pm1.0}$ & $86.4_{\pm0.5}$ & $43.1_{\pm2.7}$ & $46.7_{\pm5.9}$ & $57.8_{\pm4.7}$ & $50.4_{\pm3.8}$ &
         GCNII 
         & $87.7_{\pm1.5}$ & $77.1_{\pm0.9}$ & $88.9_{\pm0.4}$ & $59.9_{\pm1.6}$ & $72.8_{\pm5.7}$ & $67.8_{\pm3.5}$ & $74.8_{\pm4.1}$ \\
          
         ~+\dropedge
         & $88.5_{\pm1.5}$ & $76.7_{\pm1.2}$ & $86.7_{\pm0.3}$ & $42.6_{\pm2.1}$ & $46.9_{\pm4.8}$ & $58.2_{\pm4.5}$ & $52.2_{\pm5.3}$ &
		  ~+\dropedge
         & $88.0_{\pm1.7}$ & $77.5_{\pm1.2}$ & $87.7_{\pm3.7}$ & $58.4_{\pm2.7}$ & $73.3_{\pm1.8}$ & $65.1_{\pm2.5}$ & $72.6_{\pm4.4}$ \\
         
         ~+\dropnodef 
         & $87.3_{\pm0.9}$ & $75.7_{\pm0.3}$ & $85.9_{\pm0.6}$ & $38.5_{\pm1.0}$ & $44.4_{\pm6.9}$ & $51.9_{\pm7.0}$ & $46.6_{\pm7.2}$ & 
         ~+\dropnodef 
         & $87.4_{\pm1.7}$ & $74.9_{\pm1.6}$ & $85.6_{\pm5.1}$ & $54.3_{\pm4.1}$ & $73.0_{\pm2.4}$ & $63.5_{\pm2.9}$ & $70.8_{\pm2.8}$ \\

         ~+\pairnorm
         & $87.1_{\pm1.2}$ & $76.4_{\pm0.8}$ & $86.3_{\pm0.6}$ & $43.5_{\pm1.6}$ & $46.8_{\pm3.6}$ & $57.6_{\pm2.4}$ & $52.0_{\pm5.4}$ & 
         ~+\pairnorm
         & $\underline{88.6_{\pm1.8}}$ & $77.4_{\pm1.0}$ & $86.9_{\pm1.2}$ & $58.1_{\pm2.2}$ & $\textbf{74.4}_{\pm2.5}$ & $64.9_{\pm2.4}$ & $72.8_{\pm4.5}$ \\
         
         ~+\skipconnection
         & $87.0_{\pm1.1}$ & $76.1_{\pm0.8}$ & $86.8_{\pm0.6}$ & $39.4_{\pm2.2}$ & $46.3_{\pm5.3}$ & $54.6_{\pm2.2}$ & $52.2_{\pm5.5}$ & 
         ~+\skipconnection
         & \multicolumn{1}{c}{\centering -$^{*}$}  & \multicolumn{1}{c}{\centering -}  & \multicolumn{1}{c}{\centering -}  & \multicolumn{1}{c}{\centering -} & \multicolumn{1}{c}{\centering -}  & \multicolumn{1}{c}{\centering -}  & \multicolumn{1}{c}{\centering -}  \\
         
           ~+\textbf{\skipnodeu}
         & $\textbf{89.1}_{\pm1.6}$ & $\textbf{77.2}_{\pm0.9}$ & $\textbf{87.3}_{\pm0.4}$ & $\textbf{51.5}_{\pm1.8}$ & $\underline{50.6_{\pm4.0}}$ & $\textbf{59.1}_{\pm3.9}$ & $\underline{54.2_{\pm4.3}}$ & 
         ~+\textbf{\skipnodeu}
         & $\textbf{88.9}_{\pm1.7}$ & $\textbf{78.1}_{\pm1.1}$ & $\textbf{89.2}_{\pm0.4}$ & $\textbf{60.7}_{\pm2.3}$ & $73.1_{\pm6.2}$ & $\textbf{75.8}_{\pm2.3}$ & $\underline{73.9_{\pm2.6}}$ \\
         
         ~+\textbf{\skipnodeb} 
         & $\underline{88.9_{\pm1.3}}$ & $\underline{76.9_{\pm0.5}}$ & $\underline{87.0_{\pm0.5}}$ & $\underline{47.1_{\pm1.4}}$ & $\textbf{54.2}_{\pm3.8}$ & $\underline{58.9_{\pm2.8}}$ & $\textbf{56.2}_{\pm4.4}$
		 & ~+\textbf{\skipnodeb} 
		 & $\underline{88.6_{\pm1.2}}$ & $\underline{77.8_{\pm1.0}}$ & $\underline{89.0_{\pm0.7}}$ & $\underline{60.1_{\pm1.2}}$ & $\underline{73.8_{\pm5.7}}$ & $\underline{72.1_{\pm3.5}}$ & $\textbf{74.1}_{\pm3.7}$ \\

         \midrule
         
         GAT
         & $86.1_{\pm1.5}$ & $75.5_{\pm1.0}$ & $86.2_{\pm0.4}$ & $45.9_{\pm2.3}$ & $52.5_{\pm6.6}$ & $\underline{58.9_{\pm3.8}}$ & $53.2_{\pm4.7}$ & 
         GRAND 
         & $87.9_{\pm1.5}$ & $76.7_{\pm1.2}$ & $\underline{84.3_{\pm0.4}}$ & $47.3_{\pm1.7}$ & $58.3_{\pm5.4}$ & $58.1_{\pm3.7}$ & $56.6_{\pm4.3}$ \\

         ~+\dropedge 
         & $86.7_{\pm1.6}$ & $76.2_{\pm0.8}$ & $85.9_{\pm0.5}$ & $45.2_{\pm1.8}$ & $\underline{53.6_{\pm4.3}}$ & $58.1_{\pm4.6}$ & $53.0_{\pm8.2}$ & 
         ~+\dropedge 
         & $87.4_{\pm1.9}$ & $76.3_{\pm1.1}$ & $83.5_{\pm0.6}$ & $47.2_{\pm1.2}$ & $57.1_{\pm3.6}$ & $57.6_{\pm2.5}$ & $\underline{56.9_{\pm2.4}}$  \\
         
         ~+\dropnodef 
         & $85.4_{\pm1.7}$ & $74.8_{\pm1.2}$ & $85.1_{\pm0.5}$ & $44.1_{\pm3.2}$ & $47.5_{\pm7.2}$ & $54.3_{\pm5.5}$ & $46.2_{\pm7.9}$ & 
         ~+\dropnodef 
         & \multicolumn{1}{c}{\centering -$^{*}$} & \multicolumn{1}{c}{\centering -} & \multicolumn{1}{c}{\centering -} & \multicolumn{1}{c}{\centering -} & \multicolumn{1}{c}{\centering -} & \multicolumn{1}{c}{\centering -} & \multicolumn{1}{c}{\centering -} \\

         ~+\pairnorm
         & $87.4_{\pm1.3}$ & $75.7_{\pm0.9}$ & $85.9_{\pm0.5}$ & $45.7_{\pm1.6}$ & $52.2_{\pm6.1}$ & $56.2_{\pm3.6}$ & $52.4_{\pm7.1}$ & 
         ~+\pairnorm 
         & $87.4_{\pm1.8}$ & $76.0_{\pm0.8}$ & $84.1_{\pm0.5}$ & $46.9_{\pm2.1}$ & $57.4_{\pm2.9}$ & $58.7_{\pm3.2}$ & $56.8_{\pm1.4}$ \\
         
          ~+\skipconnection
         & $87.1_{\pm1.0}$ & $76.0_{\pm1.1}$ & $85.3_{\pm0.2}$ & $45.1_{\pm1.4}$ & $50.3_{\pm6.4}$ & $56.2_{\pm3.6}$ & $52.4_{\pm7.0}$ &  
         ~+\skipconnection
         & $87.5_{\pm1.7}$ & $76.0_{\pm0.9}$ & $83.4_{\pm0.4}$ & $46.1_{\pm1.1}$ & $57.9_{\pm4.2}$ & $57.7_{\pm3.2}$ & $56.2_{\pm2.6}$ \\

         ~+\textbf{\skipnodeu}
         & $\textbf{88.7}_{\pm1.2}$ & $\underline{76.9_{\pm1.0}}$ & $\textbf{88.1}_{\pm0.5}$ & $\textbf{51.9}_{\pm1.6}$ & $\textbf{54.6}_{\pm4.6}$ & $\textbf{59.7}_{\pm3.3}$ & $\textbf{54.4}_{\pm4.4}$	& 
         ~+\textbf{\skipnodeu}
         & $\underline{88.1_{\pm1.7}}$ & $\textbf{77.1}_{\pm1.1}$ & $\textbf{84.4}_{\pm0.4}$ & $\textbf{47.8}_{\pm1.6}$ & $\textbf{58.9}_{\pm3.8}$ & $\underline{59.5_{\pm2.8}}$ & $56.8_{\pm3.5}$ \\

         ~+\textbf{\skipnodeb}
         & $\underline{88.0_{\pm1.5}}$ & $\textbf{77.1}_{\pm0.8}$ & $\underline{87.2_{\pm0.6}}$ & $\underline{48.2_{\pm2.3}}$ & $53.1_{\pm5.2}$ & $\textbf{59.7}_{\pm3.3}$ & $\underline{54.2_{\pm4.8}}$ & 
         ~+\textbf{\skipnodeb}
         & $\textbf{88.6}_{\pm1.2}$ & $\underline{76.8_{\pm1.0}}$ & $\underline{84.3_{\pm0.1}}$ & $\underline{47.6_{\pm1.4}}$ & $\underline{58.6_{\pm3.2}}$ & $\textbf{60.1}_{\pm2.2}$ & $\textbf{57.3}_{\pm3.5}$ \\
         
         \midrule
         
         JKNet 
         & $88.2_{\pm1.5}$ & $76.9_{\pm1.0}$ & $88.5_{\pm0.5}$ & $46.7_{\pm1.6}$ & $70.2_{\pm3.8}$ & $68.8_{\pm3.9}$ & $68.8_{\pm5.7}$ & 
         GPRGNN 
         & $88.2_{\pm1.1}$ & $77.0_{\pm0.8}$ & $88.1_{\pm0.3}$ & $53.5_{\pm1.9}$ & $66.4_{\pm4.9}$ & $74.3_{\pm4.4}$ & $72.8_{\pm3.2}$ \\

         ~+\dropedge 
         & $88.0_{\pm1.3}$ & $76.0_{\pm1.0}$ & $88.2_{\pm0.5}$ & $45.2_{\pm1.3}$ & $71.2_{\pm6.1}$ & $67.2_{\pm4.4}$ & $\underline{69.4_{\pm5.4}}$ & 
         ~+\dropedge 
         & $\underline{88.3_{\pm1.6}}$ & $77.8_{\pm1.2}$ & $88.5_{\pm0.2}$ & $52.5_{\pm1.6}$ & $65.6_{\pm4.5}$ & $73.0_{\pm4.4}$ & $71.2_{\pm4.7}$ \\
         
         ~+\dropnodef 
         & $87.3_{\pm1.4}$ & $75.1_{\pm0.7}$ & $86.9_{\pm0.5}$ & $44.4_{\pm2.2}$ & $66.1_{\pm4.8}$ & $65.5_{\pm3.8}$ & $66.8_{\pm5.0}$ & 
         ~+\dropnodef 
         & $87.7_{\pm1.8}$ & $75.8_{\pm1.2}$ & $86.5_{\pm0.2}$ & $50.5_{\pm1.4}$ & $60.2_{\pm8.3}$ & $70.2_{\pm4.1}$ & $68.6_{\pm8.9}$ \\

         ~+\pairnorm 
         & $87.5_{\pm1.7}$ & $75.8_{\pm1.0}$ & $87.6_{\pm0.4}$ & $46.1_{\pm1.0}$ & $69.7_{\pm4.3}$ & $68.4_{\pm4.4}$ & $68.4_{\pm5.3}$ & 
         ~+\pairnorm 
         & $\underline{88.3_{\pm1.8}}$ & $\underline{78.2_{\pm1.0}}$ & $88.3_{\pm0.2}$ & $53.4_{\pm1.2}$ & $66.8_{\pm4.2}$ & $73.7_{\pm5.8}$ & $70.4_{\pm4.4}$ \\
         
         ~+\skipconnection
         & $87.6_{\pm1.7}$ & $75.2_{\pm1.0}$ & $87.2_{\pm0.5}$ & $46.1_{\pm1.2}$ & $68.7_{\pm4.3}$ & $68.2_{\pm3.8}$ & $67.9_{\pm5.3}$ & 
         ~+\skipconnection
         & $88.1_{\pm1.1}$ & $77.0_{\pm1.0}$ & $88.0_{\pm0.5}$ & $53.2_{\pm1.8}$ & $65.8_{\pm6.2}$ & $72.7_{\pm4.3}$ & $70.4_{\pm3.9}$\\

         ~+\textbf{\skipnodeu}
         & $\underline{89.2_{\pm1.6}}$ & $\textbf{77.4}_{\pm1.0}$ & $\textbf{89.1}_{\pm0.4}$ & $\textbf{47.7}_{\pm1.8}$ & $\textbf{74.8}_{\pm5.9}$ & $\textbf{71.4}_{\pm3.1}$ & $\textbf{70.2}_{\pm5.3}$
         & ~+\textbf{\skipnodeu}
         & $\textbf{89.4}_{\pm1.4}$ & $\textbf{78.6}_{\pm1.3}$ & $\textbf{89.7}_{\pm0.3}$ & $\textbf{55.1}_{\pm1.0}$ & $\textbf{68.9}_{\pm5.8}$ & $\textbf{76.2}_{\pm4.8}$ & $\textbf{74.4}_{\pm5.3}$ \\

         ~+\textbf{\skipnodeb}
         & $\textbf{89.6}_{\pm1.8}$ & $\underline{77.2_{\pm1.1}}$ & $\underline{88.7_{\pm0.5}}$ & $\underline{46.9_{\pm1.4}}$ & $\underline{73.6_{\pm4.3}}$ & $\underline{70.2_{\pm4.7}}$ & $69.3_{\pm4.7}$
        &  ~+\textbf{\skipnodeb}
         & $\textbf{89.4}_{\pm1.2}$ & $78.1_{\pm0.8}$ & $\underline{89.3_{\pm0.5}}$ & $\underline{54.7_{\pm1.3}}$ & $\underline{68.1_{\pm4.1}}$ & $\underline{75.1_{\pm4.2}}$ & $\underline{73.9_{\pm2.1}}$ \\
         
         \midrule
         
         InceptGCN 
         & $86.9_{\pm1.4}$ & $76.2_{\pm0.8}$ & $87.8_{\pm0.4}$ & $47.6_{\pm1.8}$ & $71.6_{\pm6.1}$ & $69.1_{\pm2.5}$ & $67.6_{\pm6.1}$ & 
         APPNP 
         & $87.8_{\pm1.6}$ & $78.0_{\pm1.0}$ & $86.2_{\pm0.3}$ & $52.9_{\pm2.0}$ & $70.1_{\pm2.8}$ & $68.9_{\pm3.8}$ & $69.2_{\pm4.4}$ \\

         ~+\dropedge 
         & $86.8_{\pm1.4}$ & $76.2_{\pm1.2}$ & $\textbf{88.2}_{\pm0.2}$ & $46.8_{\pm1.7}$ & $70.1_{\pm6.9}$ & $69.6_{\pm3.1}$ & $68.2_{\pm5.9}$&  
         ~+\dropedge 
         &  $87.4_{\pm1.3}$ & $78.4_{\pm1.2}$ & $86.7_{\pm0.4}$ & $51.1_{\pm2.4}$ & $71.9_{\pm3.5}$ & $68.4_{\pm4.4}$ & $69.0_{\pm4.7}$ \\
         
         ~+\dropnodef 
         & $86.7_{\pm1.3}$ & $75.9_{\pm0.8}$ & $86.2_{\pm0.3}$ & $45.4_{\pm2.2}$ & $68.4_{\pm5.3}$ & $68.9_{\pm2.4}$ & $66.6_{\pm5.7}$ & 
         ~+\dropnodef 
         & $85.4_{\pm1.5}$ & $76.1_{\pm1.6}$ & $85.5_{\pm0.3}$ & $49.3_{\pm2.9}$ & $68.1_{\pm4.8}$ & $66.2_{\pm6.1}$ & $67.2_{\pm5.5}$ \\

         ~+\pairnorm 
         & $\underline{87.4_{\pm1.4}}$ & $76.3_{\pm0.9}$ & $87.9_{\pm0.2}$ & $\underline{47.8_{\pm1.2}}$ & $71.3_{\pm5.0}$ & $69.4_{\pm3.6}$ & $67.5_{\pm2.6}$ & 
         ~+\pairnorm 
         & $87.7_{\pm1.2}$ & $78.6_{\pm1.1}$ & $86.9_{\pm0.4}$ & $51.8_{\pm1.8}$ & $70.5_{\pm3.1}$ & $68.6_{\pm4.0}$ & $69.2_{\pm4.2}$ \\
         
         ~+\skipconnection
         & $86.4_{\pm1.4}$ & $76.1_{\pm0.9}$ & $87.6_{\pm0.2}$ & $47.7_{\pm1.3}$ & $70.9_{\pm4.7}$ & $68.7_{\pm3.6}$ & $67.0_{\pm3.8}$ &   
         ~+\skipconnection
         & \multicolumn{1}{c}{\centering -$^{*}$} & \multicolumn{1}{c}{\centering -} & \multicolumn{1}{c}{\centering -} & ~ & \multicolumn{1}{c}{\centering -} & \multicolumn{1}{c}{\centering -} & \multicolumn{1}{c}{\centering -} \\

         ~+\textbf{\skipnodeu}
         & $\textbf{87.7}_{\pm1.6}$ & $\underline{77.1_{\pm0.9}}$ & $\underline{88.1_{\pm0.2}}$ & $\textbf{48.2}_{\pm1.4}$ & $\textbf{74.2}_{\pm5.6}$ & $\underline{70.2_{\pm3.6}}$ & $\underline{68.4_{\pm4.9}}$ &  
         ~+\textbf{\skipnodeu}
         & $\textbf{88.4}_{\pm1.7}$ & $\underline{79.3_{\pm1.2}}$ & $\textbf{87.7}_{\pm0.6}$ & $\textbf{53.4}_{\pm1.6}$ & $\textbf{74.4}_{\pm1.6}$ & $\textbf{69.8}_{\pm1.8}$ & $\textbf{70.7}_{\pm2.8}$ \\

         ~+\textbf{\skipnodeb}
         & $87.2_{\pm1.5}$ & $\textbf{77.2}_{\pm0.8}$ & $87.9_{\pm0.4}$ & $47.7_{\pm1.9}$ & $\underline{72.1_{\pm4.9}}$ & $\textbf{72.1}_{\pm2.2}$ & $\textbf{69.2}_{\pm3.4}$
         & ~+\textbf{\skipnodeb}
         & $\underline{88.1_{\pm1.6}}$ & $\textbf{80.1}_{\pm0.7}$ & $\underline{87.1_{\pm0.4}}$ & $\underline{53.1_{\pm1.8}}$ & $\underline{72.1_{\pm0.7}}$ & $\underline{69.1_{\pm1.2}}$ & $\underline{70.2_{\pm2.1}}$ \\

         \bottomrule
         
         \addlinespace
    \multicolumn{16}{l}{$^{*}$\footnotesize{Both GCNII and APPNP naturally incorporate \skipconnection, while GRAND incorporates \dropnodef as a part of its design.}}
         
    \end{tabular}
    }
    \label{table:full} 
\end{table*}

\subsection{\lwgrevise{Generalizability Study (Q2)}}
\label{sec:q2}
\lwgrevise{
In this section, we assess the generalizability of \skipnode by employing it on eight GCNs across seven graphs. The graphs include both homophilic and heterophilic graphs, and our evaluation focuses on the full-supervised node classification task.
 }
 
\para{\lwgrevise{Backbone Models and Configurations.}}
\lwgrevise{We use eight backbone models, including GCN, GAT, JKNet, InceptGCN, GCNII, GRAND~\cite{dropnode_f}, GPRGNN~\cite{gprgnn}, and APPNP~\cite{appnp}. We follow the same approach for hyperparameter search as described in Sec.~\ref{sec:q1}.
}

\para{\lwgrevise{Strategies and Configurations.}}
\lwgrevise{We apply the same comparing strategies and hyperparameter search rule as described in Sec.~\ref{sec:q1}.
}

\para{\lwgrevise{Performance Comparison.}}
\lwgrevise{
Table~\ref{table:full} presents a summary of the performance of different GCN architectures using various strategies for full-supervised node classification. Among the strategies examined, namely \dropedge, \pairnorm, and \skipconnection, only minor improvements are observed across different GCNs. Besides, \dropedge even induces performance declines in some small-scale graphs (e.g., Texas and Wisconsin), where the limited edges might hold valuable information for classification tasks. On the other hand, the effectiveness of \dropnodef as a plug-and-play strategy is limited since it removes the features of all the selected nodes, which can be detrimental to the performance of the model. In contrast, our proposed strategy, \skipnode, consistently achieves significant improvements across various GCNs and all graphs, demonstrating its remarkable generalizability.
}

\begin{figure}[!h]
        \centering
        \subfigure[Node Classification]{ 
        \includegraphics[width=0.46\columnwidth]{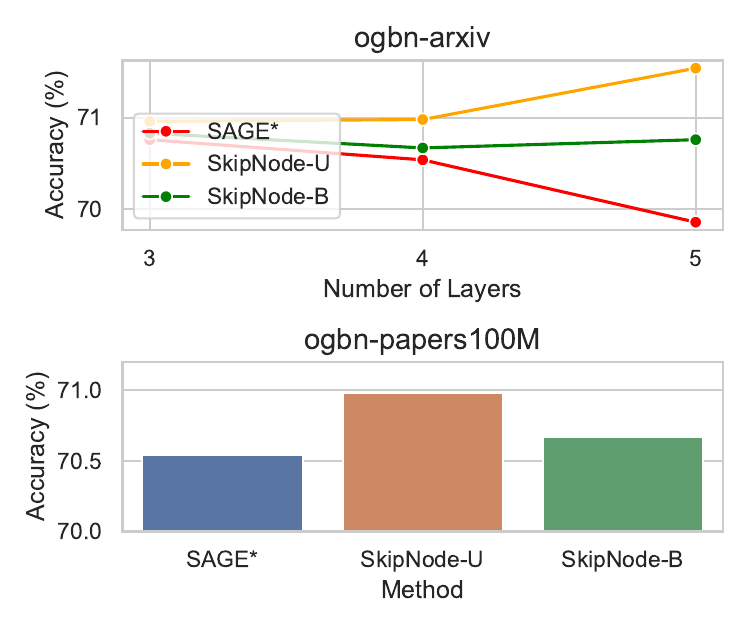}
        }
        \subfigure[Link Prediction]{
        \includegraphics[width=0.46\columnwidth]{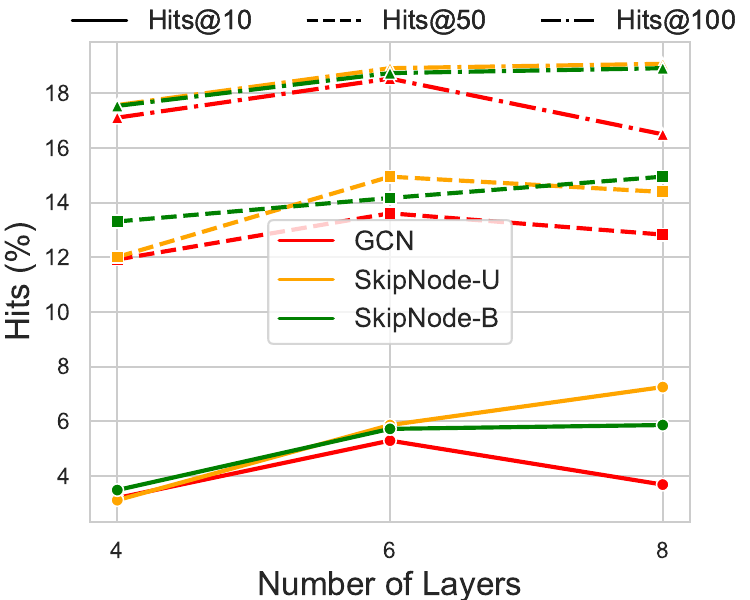}
        }
        \caption{\lwgrevise{Scalability analysis. We use three large-scale graphs under different tasks, including (a) node classification task using ogbn-arxiv (upper), ogbn-papers100M (bottom), and (b) link prediction task using ogbl-ppa. SAGE$*$ is the abbreviation for GraphSAGE\_res\_incep.}}
        \label{fig:large}
\end{figure}

\subsection{\lwgrevise{Scalability Study (Q3)}}
\label{sec:q3}
\lwgrevise{
In this section, we explore the scalability of \skipnode to three large-scale graphs under two different tasks, including node classification and link prediction tasks.
}

\para{\lwgrevise{Hardware.}}
\lwgrevise{We conduct our experiments using a single Teas V100 GPU with 32GB of memory and an Intel Xeon Gold 8163 CPU with 24 cores.}

\para{\lwgrevise{Backbone Model and Configuration.}}
\lwgrevise{For the node classification task, we follow the publicly available implementation\footnote{\url{https://github.com/mengyangniu/ogbn-papers100m-sage}} on the OGB Leaderboards\footnote{\url{https://ogb.stanford.edu/docs/leader_nodeprop/#ogbn-papers100M}} and utilize GraphSAGE\_res\_incep as our backbone model. GraphSAGE\_res\_incep is a variant of GraphSAGE, which is commonly used on the large-scale graph. It is built in an inception-like structure, where the hidden representations from different receptive fields are concatenated. And each convolutional layer is equipped with SkipConnection. The concatenated representations are then passed through a two-layer MLP for classification. We keep the hidden dimensionality fixed at 1024. For the link prediction task, we use GCN as the backbone model.
}

\begin{figure*}[!htbp]
        \centering
        \subfigure[\lwgrevise{Cora}]{ 
        \includegraphics[width=\myImageWidth]{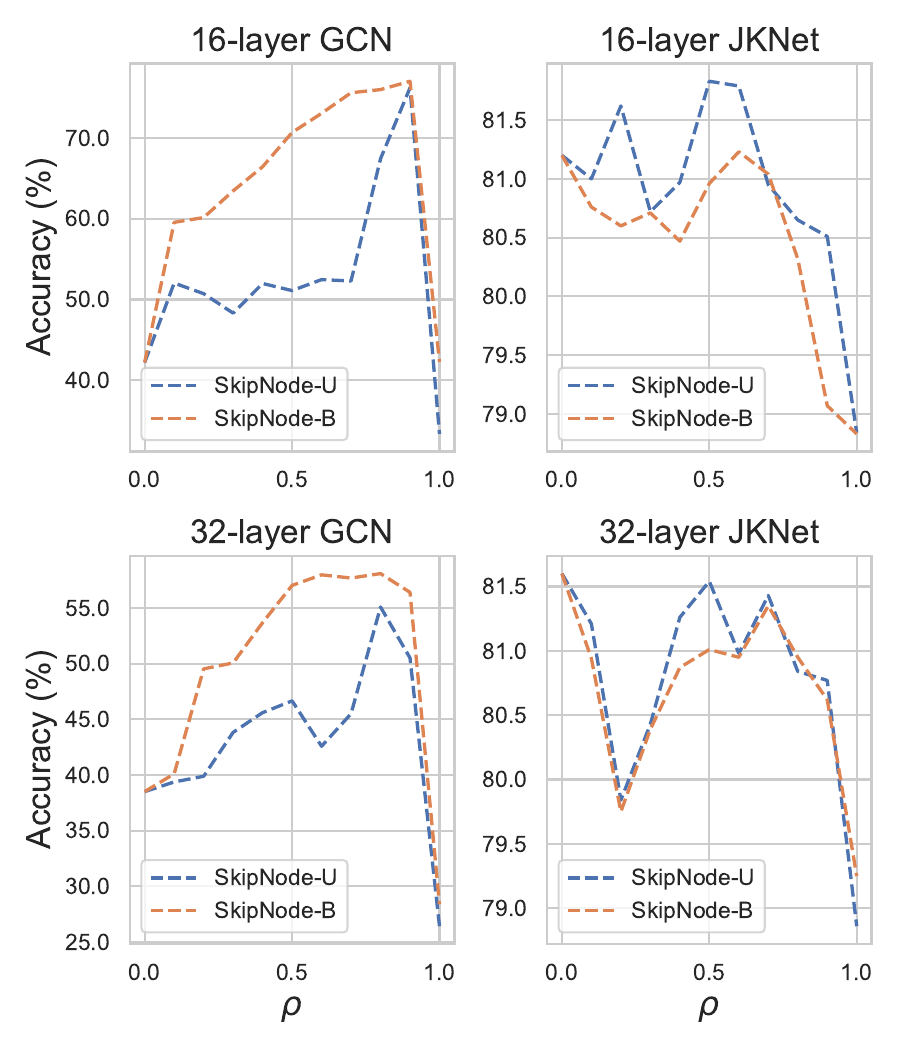}
        }
        \subfigure[\lwgrevise{Citeseer}]{
        \includegraphics[width=\myImageWidth]{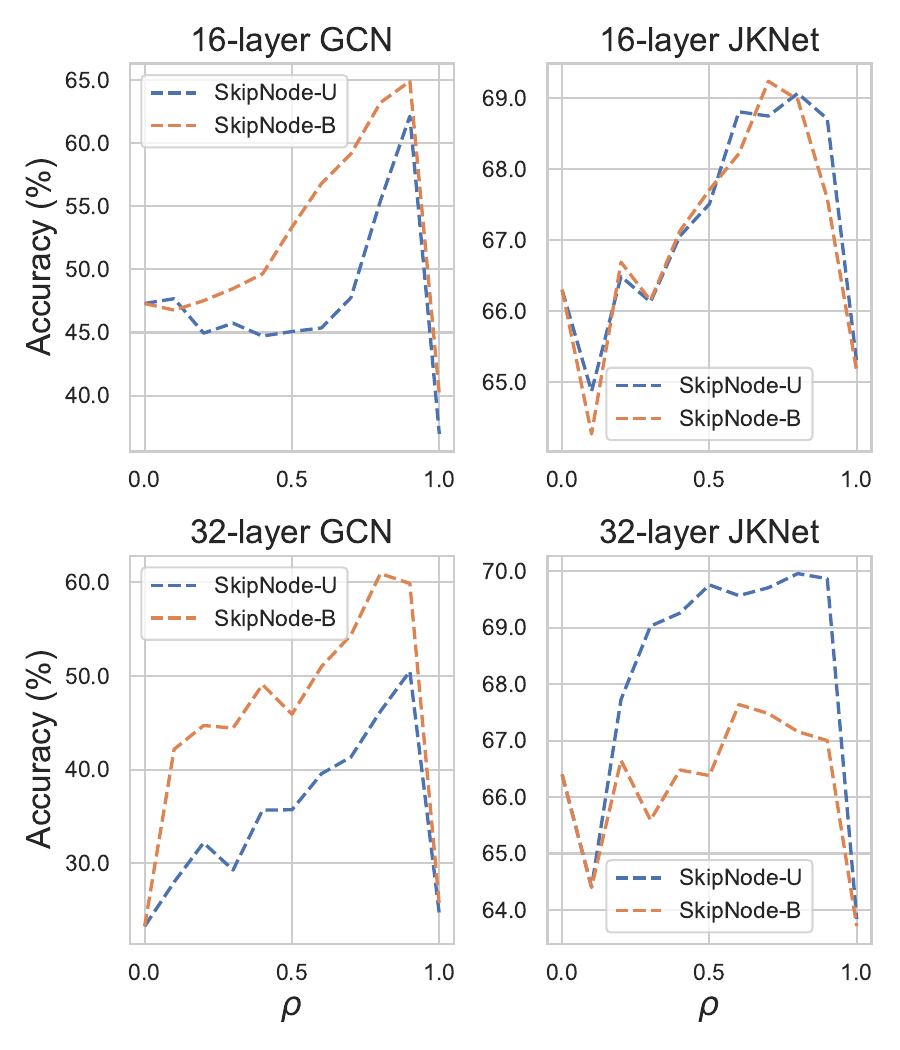}
        }
        \subfigure[\lwgrevise{Pubmed}]{
        \includegraphics[width=\myImageWidth]{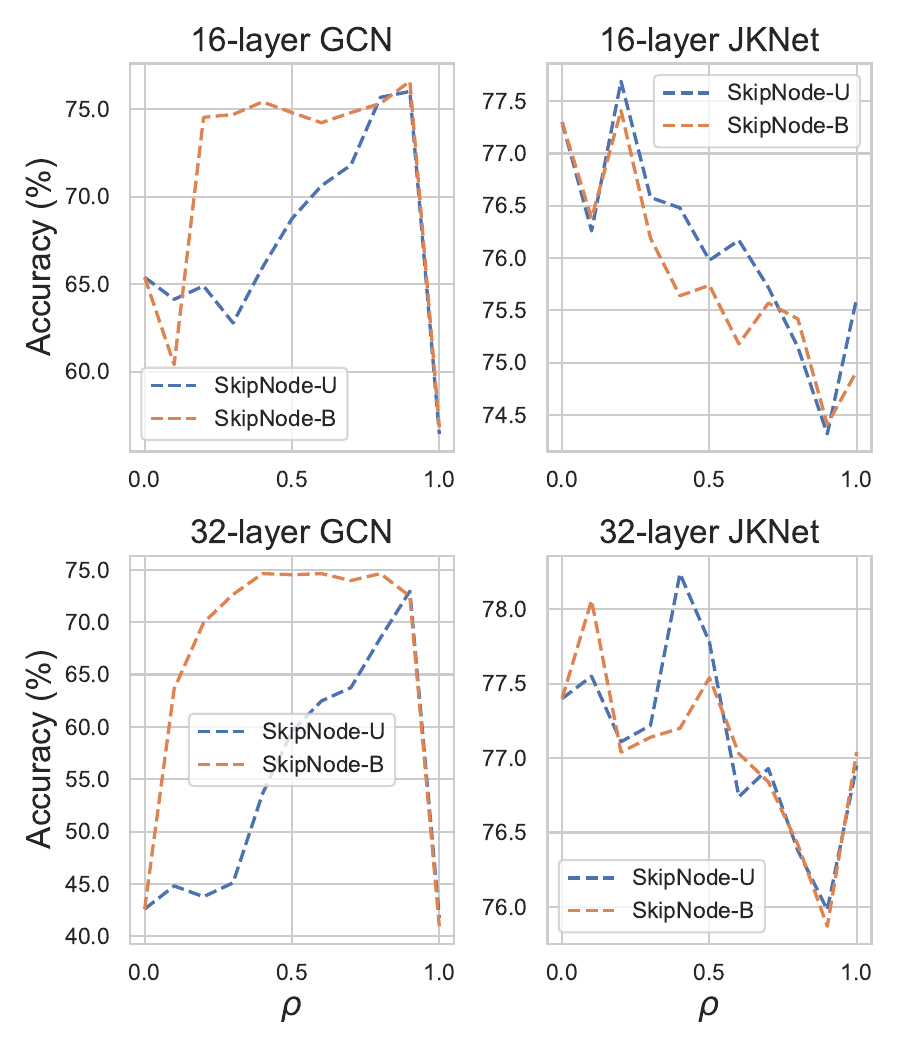}
        }
        \caption{\lwgrevise{Hyperparameter study of $\rho$. We use 16- and 32-layer GCN and JKNet as the backbone models on Cora, Citeseer, and Pubmed datasets. All the results are averaged from 10 runs.}}
        \label{fig:hyper}
\end{figure*}

\begin{figure}[!htbp]
\centering
\includegraphics[width=0.9\columnwidth]{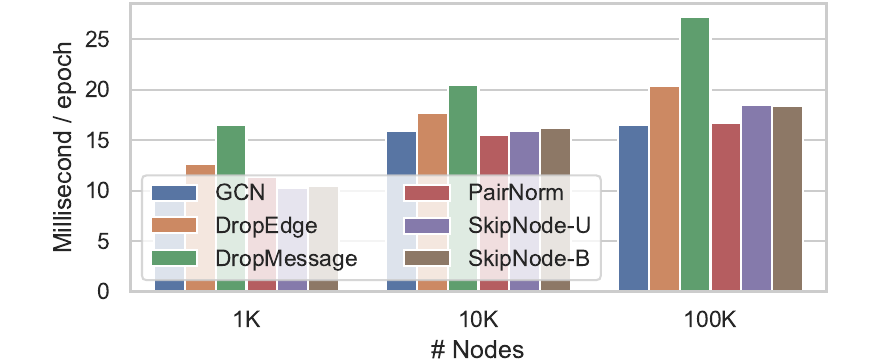}
\caption{\lwgrevise{Efficiency analysis. We record the wall-clock time per epoch on randomly generated graphs.}}
\label{fig:efficiency}
\end{figure}

\para{\lwgrevise{Performance Comparison.}}
\lwgrevise{In Figure~\ref{fig:large} (a), we present the evaluation of our \skipnode approach in the node classification task. For the ogbn-arxiv dataset (upper), we vary the number of layers in $\{3, 4, 5\}$. Remarkably, we observe that incorporating \skipnode into GraphSAGE\_res\_incep benefits from deeper architectures, leading to improved performance. For the ogbn-papers100M dataset (bottom), we fix the number of layers at 4 due to the computational resource constraints. Our \skipnode still showcases performance improvements. Besides, \skipnode only incurs a modest increase in running time, with 8.1\% and 10.7\% increases compared to the vanilla model (22297s, 22823s, and 20612s per epoch for \skipnodeu, \skipnodeb, and the vanilla model, respectively). In Figure~\ref{fig:large} (b), we find that \skipnode proves useful in the link prediction task when deepening GCN. This finding can be attributed to the fact that the quality of node representations plays a vital role in the success of this task.
}

\subsection{\lwgrevise{Efficiency Study (Q4)}}
\label{sec:q4}
\lwgrevise{
In this section, we present the mean training time per epoch for 100 epochs on randomly generated graphs, measured in seconds of wall-clock time. The computational cost of a 3-layer GCN and its variant with \dropedge, \dropmessage, \pairnorm, and \skipnode is illustrated in Figure~\ref{fig:efficiency}. It can be observed that \dropedge and \dropmessage impose additional computational burden on training GCN. This is due to the required normalization of \dropedge after each sampling operation and the dropping operation over each aggregated message of \dropmessage. On the other hand, \skipnode and \pairnorm demonstrate better efficiency as they only manipulate the node representations without involving additional costly operations. 
}

\subsection{\lwgrevise{Hyperparameter Study (Q5)}}
\label{sec:q5}
\lwgrevise{
In this section, we investigate the sensitivity of the sampling rate parameter $\rho$, in our \skipnode framework. The range of $\rho$ values considered in our experiments is $\{0, 0.1, 0.2, \cdots, 0.9, 1.0\}$. Each trial consists of 500 epochs, and we conduct 10 independent runs to ensure robustness. Our evaluation is performed on the Cora, Citeseer, and Pubmed datasets using the GCN and JKNet models, both with 16 and 32 layers.}

\lwgrevise{
The results are presented in Figure~\ref{fig:hyper}. Notably, we observe that the performance of the \skipnode is influenced by the anti-over-smoothing ability of the baseline model. For models more susceptible to over-smoothing, such as GCN, a larger sampling rate (0.8 or 0.9) leads to improved performance. This is because higher sampling rates allow more nodes to avoid being over-smoothed during propagation. Conversely, for models that are less affected by over-smoothing, such as JKNet, a moderate sampling rate ranging from 0.3 to 0.6 results in better performance.}

\lwgrevise{
It is worth noting that when $\rho = 1$, we observe significant performance drops in all the cases. This can be attributed to the fact that if all nodes ``skip'' the intermediate layers, the parameters of these layers are inadequately optimized, leading to degenerated performance.
}

%
%
%


\section{Conclusion}
In this study, we explore the fundamental causes of performance degradation in deep GCNs from a novel perspective: over-smoothing and gradient vanishing intensify each other, further degrading the performance. Based on theoretical and empirical analysis, we propose an effective and general framework to address this problem by reducing the convergence speed of over-smoothing and gradient vanishing. Besides, extensive experiments demonstrate that it is beneficial to apply \skipnode to deep GCNs.

\appendices
\section{Proof of Lemma 1}
\label{supp_sec:lemma1}

\begin{assumption}
\label{supp:asp1}
$U$ has an orthonormal basis $(e_{m})_{m \in [M]}$ that consists of non-negative vectors.
\end{assumption} 

\begin{assumption} 
\label{supp:asp2}
$U$ is invariant under $\tilde{A}$, i.e., if $u \in U$, then $\tilde{A}u \in U$.
\end{assumption} 

\begin{lemma}
\label{supp:lem1}
For any $Z$, $Y \in \mathbb{R}^{N \times d}$, we have $\DMtwo{Z + Y} \leq \DMtwo{Z} + \DMtwo{Y} + 2\DMone{Z}\DMone{Y}$ and $\DMtwo{Z - Y} \geq \DMtwo{Z} + \DMtwo{Y} - 2\DMone{Z}\DMone{Y}$.
\end{lemma}

\begin{proof}[Proof of Lemma~\ref{lem1}]
\label{supp:proof_of_lemma2}
Let $(e_{m})_{m\in[N]}$ denote the orthonormal basis of $\mathbb R^{N}$, where $e_{m}$'s are the eigenvectors of $\tilde{A}$. Accordingly, we have $Z = \sum_{m=1}^{N} e_{m} \otimes w^{Z}_{m}$ and $Y =\sum_{m=1}^{N} e_{m} \otimes w^{Y}_{m}$ for some $w^{Z}_{m}, w^{Y}_{m} \in \mathbb{R}^{d}$. Then, we have
\begin{small}
\begin{align*}
    \DMtwo{Z} 
        &= inf\{\lVert Z - H \rVert^{2}_{F} | H \in \mathcal{M}\} \\
        &= inf\{\lVert \sum_{m=1}^{N} e_{m} \otimes w^{Z}_{m} - \sum_{m=1}^{M} e_{m} \otimes \tilde{w}_{m} \rVert^{2}_{F} | \tilde{w}_{m} \in \mathbb{R}^{d}\} \\
        &= inf\{\lVert \sum_{m=1}^{M} e_{m} \otimes \left(w^{Z}_{m} - \tilde{w}_{m}\right) \\
        &+ \sum_{m=M+1}^{N} e_{m} \otimes w^{Z}_{m} \rVert^{2}_{F} | \tilde{w}_{m} \in \mathbb{R}^{d}\} \\
        &= \lVert \sum_{m=M+1}^{N} e_{m} \otimes w^{Z}_{m} \rVert^{2}_{F} \\
        &= \sum_{m=M+1}^{N} \lVert e_{m} \otimes w^{Z}_{m} \rVert^{2}_{F} \quad (\because e_m \perp e_n, \forall m \neq n) \\
        &= \sum_{m=M+1}^{N} \lVert w_{m}^{Z}\rVert^{2}.
\end{align*}
\end{small}
Accordingly, we have $\DMtwo{Y} = \sum_{m=M+1}^{N} \lVert w_{m}^{Y}\rVert^{2}$. Then, with the help of Cauchy–Schwarz inequality, we can derive
\begin{small}
\begin{align*}
    \DMtwo{Z+Y} 
        &= \sum_{m=M+1}^{N} \lVert w_{m}^{Z} + w_{m}^{Y} \rVert^{2} \\
        &= \sum_{m=M+1}^{N} \left(\lVert w_{m}^{Z}\rVert^{2} + \lVert w_{m}^{Y} \rVert^{2} + 2 (w_{m}^{Z})^T (w_{m}^{Y}) \right)\\
        &\leq \sum_{m=M+1}^{N} \lVert w_{m}^{Z}\rVert^{2} + \sum_{m=M+1}^{N} \lVert w_{m}^{Y} \rVert^{2} \\
        &\quad + 2 \sum_{m=M+1}^{N} \lVert w_{m}^{Z}\rVert \lVert w_{m}^{Y} \rVert\\
        &\leq \DMtwo{Z} + \DMtwo{Y}  \\
        &+ 2\sqrt{\sum_{m=M+1}^{N} \lVert w_{m}^{Z}\rVert^{2} \sum_{m=M+1}^{N} \lVert w_{m}^{Y}\rVert^{2}} \\
        &= \DMtwo{Z} + \DMtwo{Y} + 2\DMone{Z}\DMone{Y},
\end{align*}
\end{small}
and
\begin{footnotesize}
\begin{align*}
    \DMtwo{Z-Y}
        &= \sum_{m=M+1}^{N} \lVert w_{m}^{Z} - w_{m}^{Y} \rVert^{2} \\
        &\geq \sum_{m=M+1}^{N} \lVert w_{m}^{Z}\rVert^{2} + \sum_{m=M+1}^{N} \lVert w_{m}^{Y} \rVert^{2} \\
        &\quad- 2 \sum_{m=M+1}^{N} \lVert w_{m}^{Z}\rVert \lVert w_{m}^{Y} \rVert\\
        &\geq \DMtwo{Z} + \DMtwo{Y} - 2\DMone{Z}\DMone{Y}.
\end{align*}
\end{footnotesize}

\end{proof}

\section{Proof of Theorem 1}
\label{supp_sec:theorem1}

\begin{theorem}
\label{supp:thm_gradient_vanishing}
Let $\mathcal{L}$ denote the cross-entropy loss function, $\mathcal{V}_{train}$ be the training set that contains $B$ nodes and $Z \in \mathbb{R}^{N \times C}$ be the output of the classification layer, where $C$ is the number of class. Assuming there are same samples in each category ($\frac{B}{C}$ samples per class), the gradient at the classification layer $\sum_{i \in \mathcal{V}_{train}} \sum_{j=1}^{C} \frac{\partial \mathcal{L}}{\partial Z_{ij}}$ gets close to 0 when the model's output converges to 0, the trivial fixed point introduced by the Proposition 3 from~\cite{oono2019graph}.
\end{theorem}

\begin{proof}
Let $\tilde{Y} = softmax(Z)$ denote the prediction score, $Y \in \mathbb R^{N \times C}$ denote the ground truth indicator, and $\mathcal{L}$ denote the cross-entropy loss function. Then, we have:
\begin{small}
    \begin{align*}
    \mathcal{L} &= -\frac{1}{B}\sum_{v_i \in \mathcal{V}_{train}} \sum_{j=1}^{C} Y_{ij}\log\tilde{Y}_{ij}\\
      &= -\frac{1}{B}\sum_{v_i \in \mathcal{V}_{train}} \sum_{j=1}^{C} Y_{ij}\log\frac{exp(Z_{ij})}{\sum_{k=1}^{C} exp(Z_{ik})},
\end{align*}
\end{small}
where $\mathcal{V}_{train}$ indicates the training set. 

For a node $v_i \in \mathcal{V}_{train}$ that belongs to the $c$-th class, we have $Y_{ic} = 1$ and $Y_{ij} = 0, \forall j \ne c$. Then, the gradient of $Z_{ic}$ is calculated as:
\begin{small}
\begin{align*}
    \frac{\partial \mathcal{L}}{\partial Z_{ic}} &= 
    -\frac{1}{B} \frac{1}{\tilde{Y}_{ic}}\left(\frac{exp(Z_{ic})\sum_{k=1}^{C} exp(Z_{ik}) - exp(Z_{ic})^{2} }{[\sum_{k=1}^{C} exp(Z_{ik})]^{2}}\right)\\
    &= -\frac{1}{B}\frac{1}{\tilde{Y}_{ic}}\left(\tilde{Y}_{ic} - \tilde{Y}_{ic}^{2}\right)\\
    &= \frac{\tilde{Y}_{ic} - 1}{B},
\end{align*}
\end{small}
and the gradient of $Z_{ij}, \forall j \ne c$ is calculated as:
\begin{small}
    \begin{align*}
    \frac{\partial \mathcal{L}}{\partial Z_{ij}} &= 
    \frac{1}{B} \frac{1}{\tilde{Y}_{ic}}\frac{exp(Z_{ic})exp(Z_{ij})}{\left(\sum_{k=1}^{C} exp(Z_{ik})\right)^{2}}\\
    &= \frac{1}{B}\frac{1}{\tilde{Y}_{ic}}\tilde{Y}_{ic}\tilde{Y}_{ij}\\
    &= \frac{\tilde{Y}_{ij}}{B}.
\end{align*}
\end{small}

Under the assumption in the Proposition 3 from~\cite{oono2019graph}, $Z$ converges to 0 as $l \to \infty$. Then, we have $\tilde{Y}_{ij} \approx \frac{1}{C}$. Accordingly, we can further derive the gradient as follows:

\begin{footnotesize}
\begin{align*}
    \sum_{i \in \mathcal{V}_{train}}\frac{\partial \mathcal{L}}{\partial Z_{ij}} &= \sum_{i \in \mathcal{V}_{train}} \left(\frac{Y_{ij}\left(\tilde{Y}_{ij} - 1\right)}{B} + \frac{\left(1 - Y_{ij}\right)\tilde{Y}_{ij}}{B}\right)\\
    &\approx \frac{1}{B} \sum_{i \in \mathcal{V}_{train}} \left(Y_{ij}\left(\frac{1}{C} - 1\right) + \left(1 - Y_{ij}\right)\frac{1}{C}\right)\\
    &= \frac{1}{B} \sum_{i \in \mathcal{V}_{train}}\left(\frac{Y_{ij}}{C} - Y_{ij} + \frac{1}{C} - \frac{Y_{ij}}{C}\right)\\
    &= \frac{1}{B} \sum_{i \in \mathcal{V}_{train}}\left(\frac{1}{C} - Y_{ij}\right)\\
    &= \frac{1}{B} \left(\frac{B}{C} - \sum_{i \in \mathcal{V}_{train}}Y_{ij}\right)\\
    &= \frac{1}{B} \left(\frac{B}{C} - \frac{B}{C}\right)\\
    &= 0.
\end{align*}
\end{footnotesize}
As a result, the gradient at the classification layer $\sum_{i \in \mathcal{V}_{train}} \sum_{j=1}^{C} \frac{\partial \mathcal{L}}{\partial Z_{ij}} = 0$. 
\end{proof}

\section{Proof of Theorem 2}
\label{supp_sec:theorem2}

\begin{theorem}[Higher Upper Bound]
\label{supp:thm1}
Under Assumptions~\ref{asp1} and \ref{asp2}, for any $\rho \in (0,1)$, we have $\DMone{\mathbb{E}[X_{2}]} \leq \left(s\lambda + \rho(1 - s\lambda)\right)\DMone{X}$ compared to $\DMone{X_{1}} \leq s\lambda\DMone{X}$ (cf. Theorem 1 of \cite{oono2019graph}). In particular, when $s\lambda < 1$, \skipnode can achieve a higher upper bound, i.e., $s\lambda + \rho(1 - s\lambda) > s\lambda$.
\end{theorem}

\begin{proof}[Proof of Theorem \ref{thm1}]
\label{supp:proof1}
Since $P_{ii}$ is a Bernoulli random variable such that $P_{ii} \sim Bernoulli(\rho)$, we have $\mathbb{E}[X_{2}] = (1-\rho)X_{1} + \rho X$. Following the decomposition scheme used in Lemma~\ref{lem1}, we let $X = \sum_{m=1}^{N} e_{m} \otimes w_{m}$ and $X_{1} = \sum_{m=1}^{N} e_{m} \otimes w^{\prime}_{m}$ for some $w_{m}, w^{\prime}_{m} \in \mathbb R^{d}$. Then, we have
\begin{small}
\begin{align*}
    \DMtwo{\mathbb{E}[X_{2}]} 
        &= \DMtwo{(1-\rho)X_{1} + \rho X} \\
        &\leq \DMtwo{(1-\rho)X_{1}} + \DMtwo{\rho X} \\
        &\quad+ 2\DMone{(1-\rho)X_{1}}\DMone{\rho X} \quad (\because Lemma~\ref{lem1})\\
        &= (1-\rho)^{2}\DMtwo{X_{1}} + \rho^{2}\DMtwo{X} \\
        &\quad+ 2\rho(1-\rho)\DMone{X_{1}}\DMone{X}\\
        &\leq (1-\rho)^{2}s^{2}\lambda^{2}\DMtwo{X} + \rho^{2}\DMtwo{X}\\
        &\quad+ 2\rho(1-\rho)s\lambda \DMtwo{X} \quad (\because \DMone{X_{1}} \leq s\lambda\DMone{X})\\
        &= \left((1-\rho)s\lambda + \rho\right)^{2}\DMtwo{X}.
\end{align*}
\end{small}





Since $\rho \in (0,1)$ and $s\lambda > 0$, we have
\begin{small}
\begin{align*}
    \DMone{\mathbb{E}[X_{2}]}  &\leq \left((1-\rho)s\lambda + \rho\right)\DMone{X} \\
    &= \left(s\lambda + \rho(1 - s\lambda)\right)\DMone{X}.
\end{align*}
\end{small}
Especially, when $s\lambda < 1$, we have $s\lambda + \rho (1- s\lambda) > s\lambda$ indicating that \skipnode can achieve a higher upper bound (in expectation) than the vanilla GCN.
\end{proof}

\section{Proof of Theorem 3}
\label{supp_sec:theorem3}

\begin{theorem}[Longer Distance from $\mathcal{M}$]
\label{supp:thm2}
Under Assumptions~\ref{asp1} and \ref{asp2}, when $\rho(\frac{1}{s\lambda} + 1) - 1 > 0$, \skipnode can guarantee a lower bound as $\DMone{\mathbb{E}[X_{2}]} \geq \left( \rho(\frac{1}{s\lambda} + 1) - 1\right)\DMone{X_{1}}$. In particular, when $\rho(\frac{1}{s\lambda} + 1) > 2$, the output of \skipnode is farther away from $\mathcal{M}$, i.e., $\frac{d_{\mathcal{M}}(\mathbb{E}[X_{2}])}{d_{\mathcal{M}}(X_{1})} > 1$.
\end{theorem}

\begin{proof}[Proof of Theorem~\ref{thm2}]
\label{supp:proof2}
Similar to Theorem~\ref{thm1}, we have
\begin{small}
    \begin{align*}
    \DMtwo{\mathbb{E}[X_{2}]} &= d^{2}\left((1-\rho)X_{1} + \rho X\right) \\
    &= \DMtwo{\rho X - (\rho - 1)X_{1}} \\
    &\geq \DMtwo{\rho X} + \DMtwo{(\rho-1)X_{1}} \\
    &\quad- 2\DMone{\rho X} \DMone{(\rho-1)X_{1}} \quad (\because Lemma~\ref{lem1})\\
    &= \rho^{2}\DMtwo{X} + (\rho - 1)^{2}\DMtwo{X_{1}} \\
    &\quad- 2|\rho (\rho - 1)| \DMone{X}\DMone{X_{1}} \\
    &= \left(\rho\DMone{X} - (1 - \rho) \DMone{X_{1}}\right)^{2}.
    \end{align*}
\end{small}

Since the operand of the square could be negative, we consider the following two cases (the ``='' case is not considered since it results in a trivial lower bound, i.e., 0).

\textbullet\
\textbf{CASE 1:} $\rho\DMone{X} < (1 - \rho) \DMone{X_{1}}$

It implies $\frac{1}{s\lambda} \leq \frac{\DMone{X}}{\DMone{X_{1}}} < \frac{1-\rho}{\rho}$. Therefore, $\frac{1-\rho}{\rho} > \frac{1}{s\lambda}$ is the necessary condition for this case. The lower bound becomes $\DMone{\mathbb{E}[X_{2}]} \geq (1 - \rho)\DMone{X_{1}} - \rho \DMone{X} = \DMone{X_{1}} -\rho (\DMone{X_{1}} + \DMone{X})$, which is definitely less than $\DMone{X_{1}}$. Hence, this case cannot guarantee $\DMone{\mathbb{E}[X_{2}]} \geq \DMone{X_{1}}$, no matter how we set $\rho$.


\textbullet\
\textbf{CASE 2:} $\rho\DMone{X} > (1 - \rho) \DMone{X_{1}}$

It indicates $\frac{\DMone{X}}{\DMone{X_{1}}} > \frac{1 - \rho}{\rho}$. Since $\frac{\DMone{X}}{\DMone{X_{1}}} \geq \frac{1}{s\lambda}$, $\frac{1}{s\lambda} > \frac{1 - \rho}{\rho}$ is a sufficient condition for this case. This condition is also required to assure the following lower bound nontrivial (i.e., with simple transformation, we have $\rho(\frac{1}{s\lambda} + 1) - 1 > 0$):
\begin{small}
    \begin{align*}
    \DMone{\mathbb{E}[X_{2}]} 
        &\geq \rho\DMone{X} - (1 - \rho) \DMone{X_{1}} \\
        &\geq \left( \rho(\frac{1}{s\lambda} + 1) - 1\right)\DMone{X_{1}}.
    \end{align*}
\end{small}
In particular, when $\rho(\frac{1}{s\lambda} + 1) > 2$, we can guarantee $\DMone{\mathbb{E}[X_{2}]} \geq \DMone{X_{1}}$ because:
\begin{small}
    \begin{align*}
    \frac{\DMone{\mathbb{E}[X_{2}]}}{\DMone{X_{1}}} \geq \rho(\frac{1}{s\lambda} + 1) - 1 > 1.
\end{align*}
\end{small}
\end{proof}

\section{\lwgrevise{Empirical Guidance of \skipnode}}
\label{supp_sec:guidance}
\lwgrevise{
Here, we provide empirical guidance of our \skipnode by taking two key aspects into consideration:}
\begin{itemize}
	\item \lwgrevise{\textbf{Model:}  If the models are specifically designed to address over-smoothing, such as GCNII~\cite{gcnii} and JKNet~\cite{jknet}, or models adopting a relatively shallow architecture (4\textasciitilde{}8 layers), we recommend utilizing SkipNode-U. However, if the models are not explicitly tailored for over-smoothing, like GCN and GAT, especially in deep architectures, we recommend employing SkipNode-B. The biased sampling in SkipNode-B takes into account the degree of nodes, as high-degree nodes are more susceptible to over-smoothing, as elaborated in~\cite{gcnii}. Since this category of models does not give any treatment for over-smoothing, it is important to pay more attention to high-degree nodes.}
	
	\item \lwgrevise{\textbf{Dataset:} It is worth noting that denser graphs often present tougher challenges when deepening GCNs, as the increased number of message-passing operations between connected nodes can easily lead to over-smoothed features. In such cases, we recommend adopting SkipNode-B, which considers degree-biased sampling. }
\end{itemize}

\section*{Acknowledgments}
This research was supported by the National Natural Science Foundation of China (Grant Nos. 62133012, 61936006, 62303366, 62103314, 62073255), the Key Research and Development Program of Shaanxi (Program No. 2020ZDLGY04-07), the Innovation Capability Support Program of Shaanxi (Grant No. 2021TD-05), and the High-Performance Computing Platform of Xidian University.

\ifCLASSOPTIONcaptionsoff
  \newpage
\fi

\bibliographystyle{IEEEtran}
\bibliography{SkipNode}
\vspace{-1cm}

\begin{IEEEbiography}[{\includegraphics[width=0.9in, height=1.25in,clip,keepaspectratio]{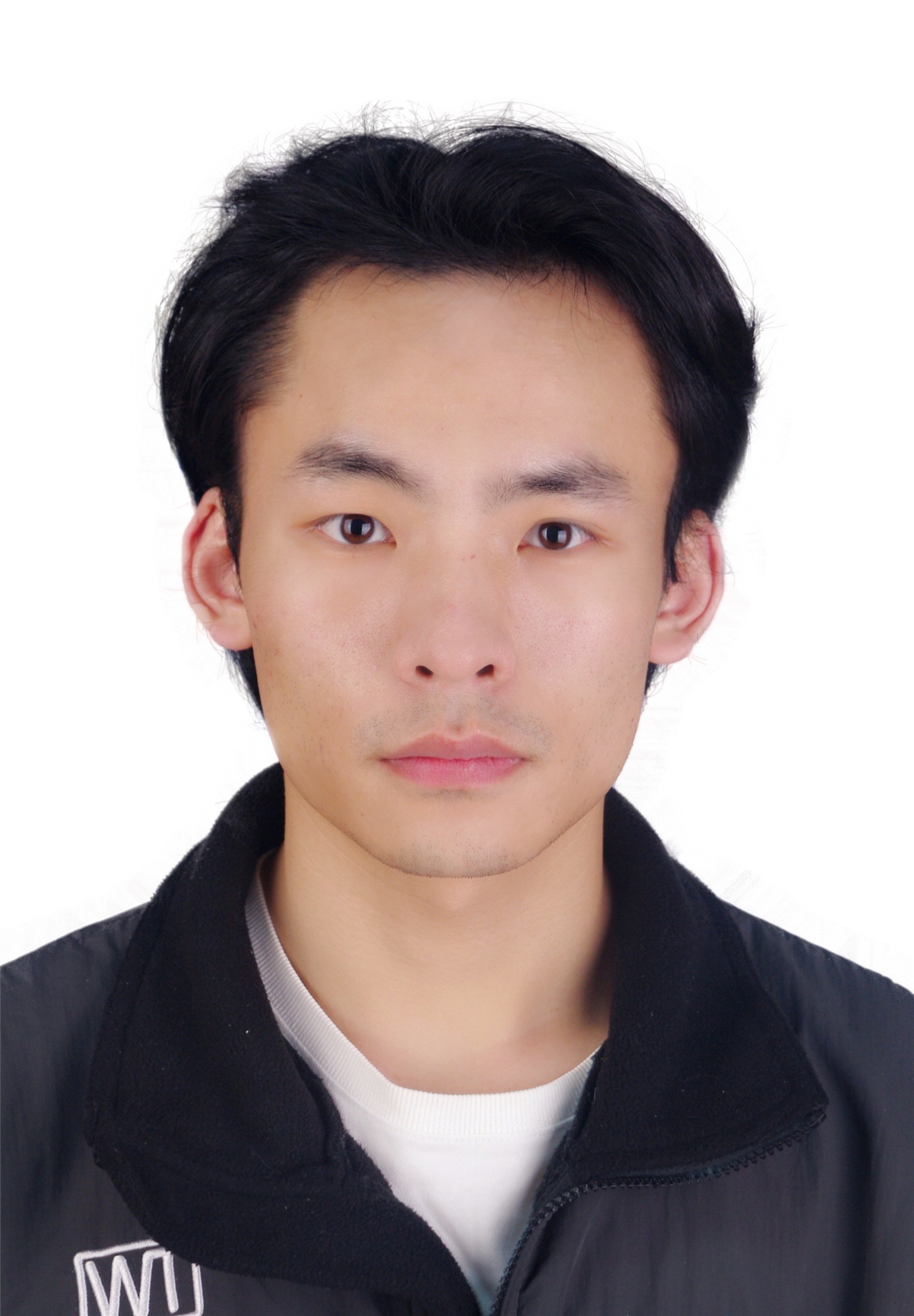}}]{Weigang Lu} received the B.S. degree in Internet of Things from Anhui Polytechnic University, China, in 2019. He is currently working towards a PH.D. degree with the School of Computer Science and Technology, Xidian University, China. His research interests include data mining and machine learning on graph data.
\end{IEEEbiography}

\vspace{-1cm}

\begin{IEEEbiography}[{\includegraphics[width=0.9in, height=1.25in,clip,keepaspectratio]{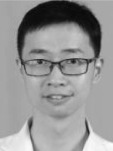}}]{Yibing Zhan} received a B.E. and a Ph.D. from the University of Science and Technology of China in 2012 and 2018, respectively. From 2018 to 2020, he was an Associate Researcher in the Computer and Software School at the Hangzhou Dianzi University. He is currently an algorithm scientist at the JD Explore Academy. His research interest includes graph neural networks and multimodal learning. He has published many papers on top conferences and journals, such as CVPR, ACM MM, AAAI, IJCV, and IEEE TMM.
\end{IEEEbiography}

\vspace{-1cm}

\begin{IEEEbiography}[{\includegraphics[width=0.9in, height=1.25in,clip,keepaspectratio]{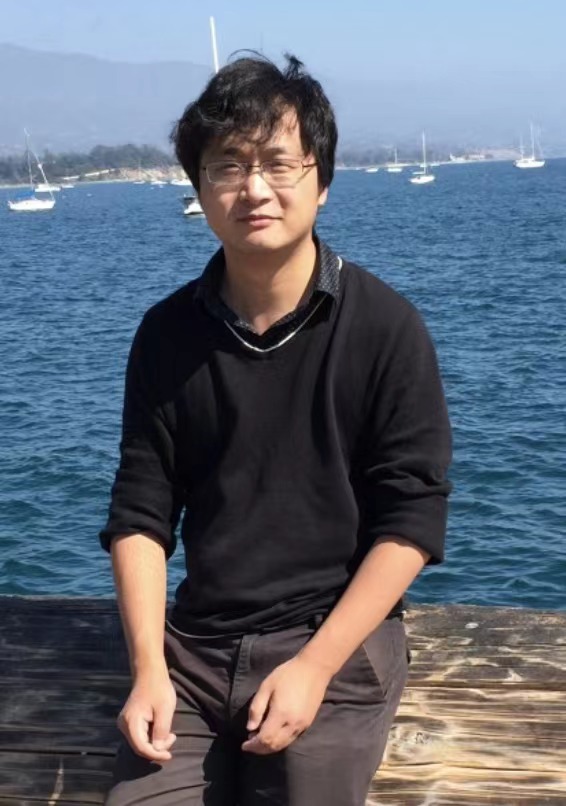}}]{Binbin Lin} is an assistant professor in the School of Software Technology at Zhejiang University, China. He received a Ph.D degree in computer science from Zhejiang University in 2012. His research interests include machine learning and decision making.
\end{IEEEbiography}

\vspace{-1cm}

\begin{IEEEbiography}[{\includegraphics[width=0.9in, height=1.25in,clip,keepaspectratio]{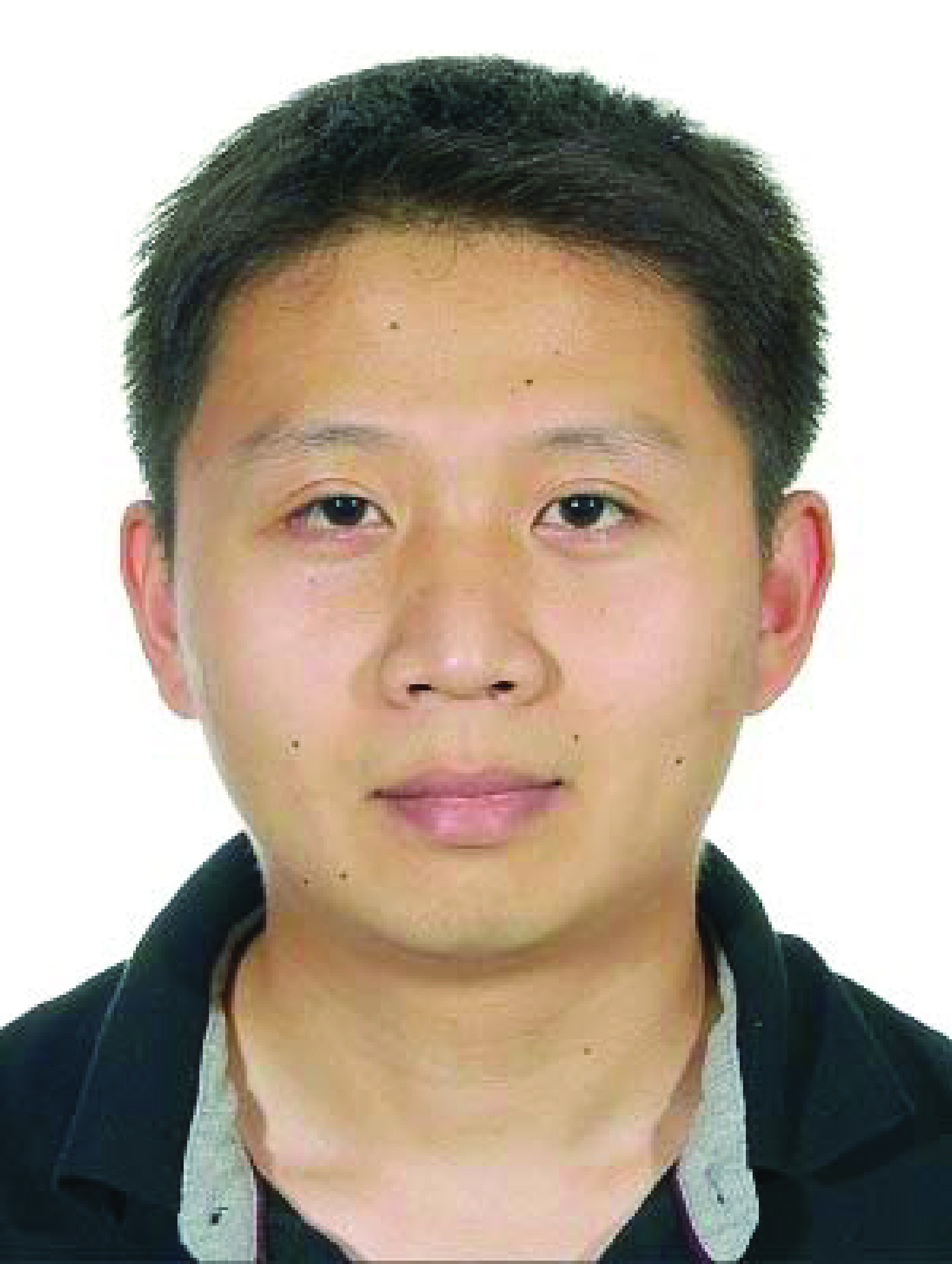}}]{Ziyu Guan} received the B.S. and Ph.D. degrees in Computer Science from Zhejiang University, Hangzhou China, in 2004 and 2010, respectively. He had worked as a research scientist in the University of California at Santa Barbara from 2010 to 2012, and as a professor in the School of Information and Technology of Northwest University, China from 2012 to 2018. He is currently a professor with the School of Computer Science and Technology, Xidian University. His research interests include attributed graph mining and search, machine learning, expertise modeling and retrieval, and recommender systems.
\end{IEEEbiography}

\vspace{-1cm}

\begin{IEEEbiography}[{\includegraphics[width=0.9in, height=1.25in,clip,keepaspectratio]{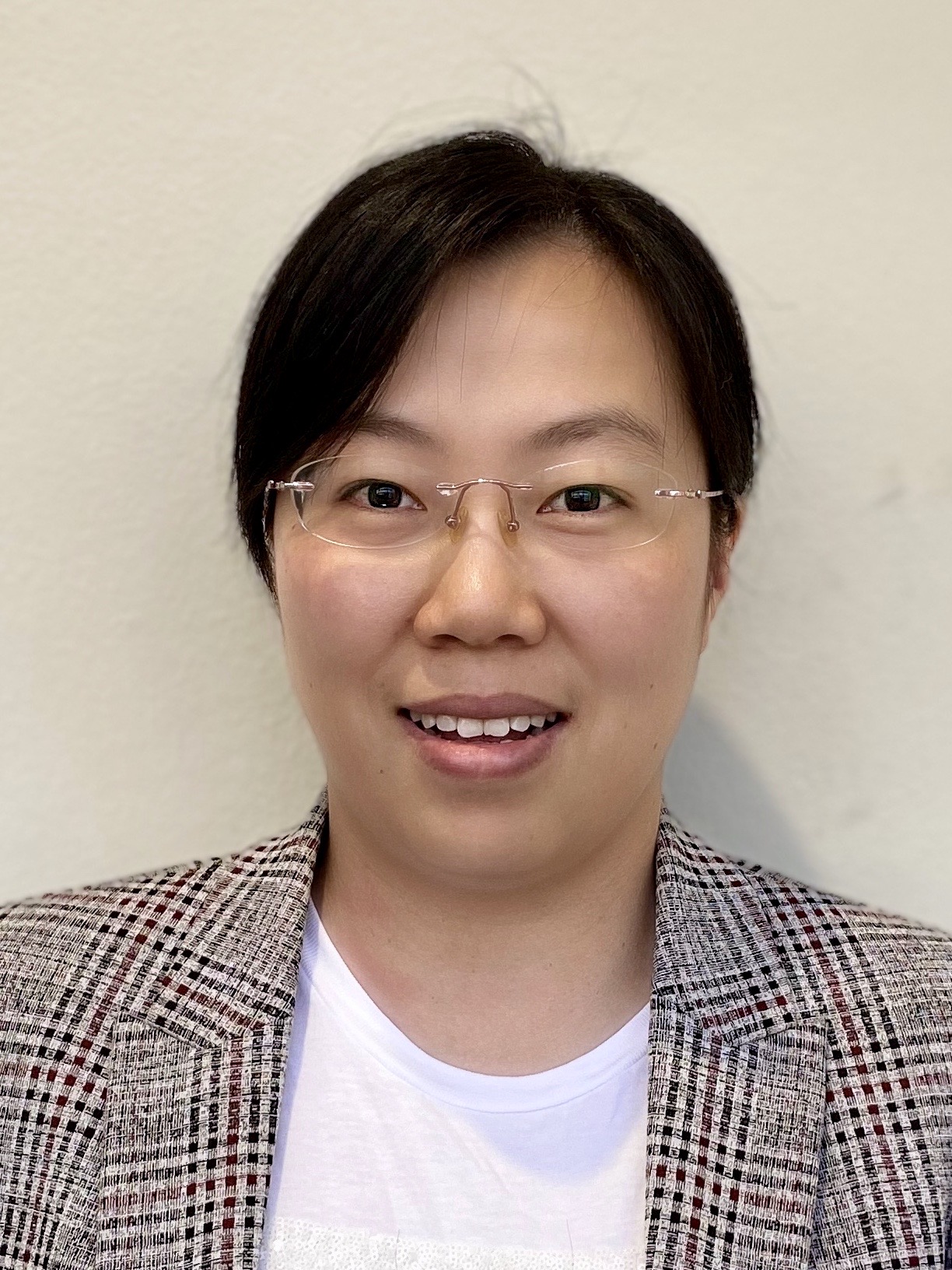}}]{Liu Liu} is currently the Research Associate in the School of Computer Science and the Faculty of Engineering at The University of Sydney, Australia. She received her Ph.D. degree from the University of Sydney. Her research interests cover stochastic optimization theory and the design of effective algorithms in machine learning and deep learning, and their applications including distributed learning, low-rank matrix, label analysis, continual learning, curriculum learning, reinforcement learning, quantum machine learning, etc. She has published more than 30+ papers on IEEE T-PAMI, T-NNLS, T-IP, T-MM, PRR, ICML, NeurIPS, CVPR, ICCV, AAAI, ICDM, etc.
\end{IEEEbiography}

\vspace{-1cm}

\begin{IEEEbiography}[{\includegraphics[width=0.9in, height=1.25in,clip,keepaspectratio]{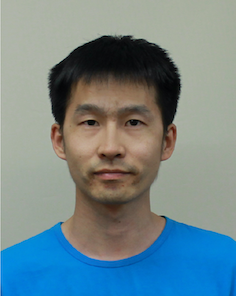}}]{Baosheng Yu} received a B.E. from the University of Science and Technology of China in 2014, and a Ph.D. from The University of Sydney in 2019. He is currently a Research Fellow in the School of Computer Science at The University of Sydney, NSW, Australia. His research interests include machine learning, computer vision, and deep learning. He has authored/co-authored 20+ publications on top-tier international conferences and journals, including TPAMI, IJCV, CVPR, ICCV and ECCV.
\end{IEEEbiography}

\vspace{-1cm}

\begin{IEEEbiography}[{\includegraphics[width=0.9in, height=1.25in,clip,keepaspectratio]{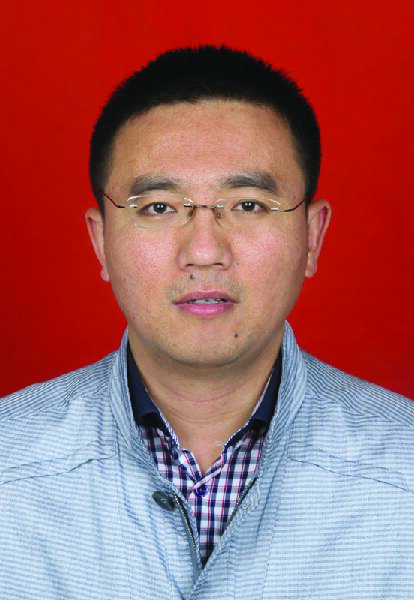}}]{Wei Zhao} received the B.S., M.S. and Ph.D. degrees from Xidian University, Xi’an, China, in 2002, 2005 and 2015, respectively. He is currently a professor in the School of Computer Science and Technology at Xidian University. His research direction is pattern recognition and intelligent systems, with specific interests in attributed graph mining and search, machine learning, signal processing and precision guiding technology.
\end{IEEEbiography}

\vspace{-1cm}

\begin{IEEEbiography}[{\includegraphics[width=0.9in, height=1.25in,clip,keepaspectratio]{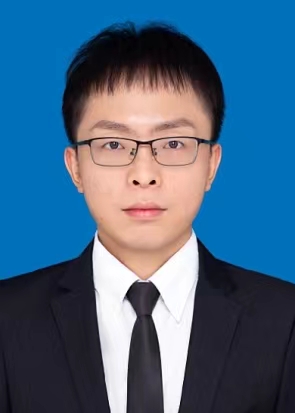}}]{Yaming Yang} received the B.S. and Ph.D. degrees in Computer Science and Technology from Xidian University, China, in 2015 and 2022, respectively. He is currently a lecturer with the School of Computer Science and Technology at Xidian University. His research interests include data mining and machine learning on graph data.
\end{IEEEbiography}

\vspace{-1cm}

\begin{IEEEbiography}[{\includegraphics[width=0.9in, height=1.25in,clip,keepaspectratio]{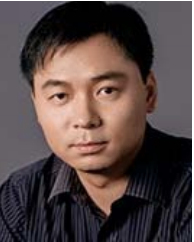}}]{Dacheng Tao} (Fellow, IEEE) is currently a Professor of computer science and an ARC Laureate Fellow with the School of Computer Science and the Faculty of Engineering, The University of Sydney, Sydney, NSW, Australia. His research results in artificial intelligence have expounded in one monograph and more than 200 publications at prestigious journals and prominent conferences, such as the IEEE TRANSACTIONS ON PATTERN ANALYSIS AND MACHINE INTELLIGENCE, the International Journal of Computer Vision, the Journal of Machine Learning Research, the Journal of Artificial Intelligence, AAAI, IJCAI, NeurIPS, ICML, CVPR, ICCV, ECCV, ICDM, and KDD, with several best paper awards. Dr. Tao is a fellow of the American Association for the Advancement of Science, the Association for Computing Machinery, The World Academy of Sciences, and the Australian Academy of Science. He received the 2018 IEEE ICDM Research Contributions Award, the 2015 and 2020 Australian Museum Eureka Prize, and the 2021 IEEE Computer Society McCluskey Technical Achievement Award.
\end{IEEEbiography}

\end{document}